\documentclass{article}

% if you need to pass options to natbib, use, e.g.:
    \PassOptionsToPackage{numbers, compress}{natbib}
% before loading neurips_2023

% ready for submission
% \usepackage{neurips_2023}

% to compile a preprint version, e.g., for submission to arXiv, add add the
% [preprint] option:
    % \usepackage[preprint]{neurips_2023}

% to compile a camera-ready version, add the [final] option, e.g.:
    \usepackage[final]{neurips_2023}

% to avoid loading the natbib package, add option nonatbib:
%    \usepackage[nonatbib]{neurips_2023}

\usepackage{paralist}
\usepackage[utf8]{inputenc} % allow utf-8 input
\usepackage[T1]{fontenc}    % use 8-bit T1 fonts
\usepackage[colorlinks=true,citecolor=brown,urlcolor=gray]{hyperref}
\usepackage{url}            % simple URL typesetting
\usepackage{wrapfig,lipsum,booktabs}       % professional-quality tables
\usepackage{amsfonts}       % blackboard math symbols
\usepackage{nicefrac}       % compact symbols for 1/2, etc.
\usepackage{microtype}      % microtypography
\usepackage{xcolor}         % colors
\usepackage{amssymb}

\usepackage{bbm}

\usepackage{caption}
\usepackage{subcaption}
\usepackage{amsmath}
\usepackage{xspace}
\usepackage{multirow}

\usepackage{algorithm}
\usepackage{algpseudocode}

\usepackage[titletoc]{appendix}

\usepackage{pdflscape}
% \usepackage[numbers]{natbib}

%%% 

\newlength\savewidth

%%% 

\newcommand{\error}[1]{\epsilon_{\gD_{#1}}}
\newcommand{\erroremp}[1]{\hat{\epsilon}_{\gD_{#1}}}
\newcommand{\hdiv}{d_{\gH\Delta\gH}}
\newcommand{\ours}{UDIL\xspace}

% \input{defs.tex}
%%%%% NEW MATH DEFINITIONS %%%%%

\usepackage{amsmath,amsfonts,bm,eqnarray}
\usepackage{cancel}
\usepackage{amsthm}
\usepackage{tikz}
\usetikzlibrary{tikzmark}

\newtheorem{definition}{Definition}[section]
\newtheorem{theorem}{Theorem}[section]
\newtheorem{corollary}{Corollary}[theorem]
\newtheorem{lemma}[theorem]{Lemma}

\providecommand{\customgenericname}{}
\newcommand{\newcustomtheorem}[2]{%
  \newenvironment{#1}[1]
  {%
   \renewcommand\customgenericname{#2}%
   \renewcommand\theinnercustomgeneric{##1}%
   \innercustomgeneric
  }
  {\endinnercustomgeneric}
}

\newcustomtheorem{customThm}{Theorem}
\newcustomtheorem{customLemma}{Lemma}
\newcustomtheorem{customCor}{Corollary}
\newcustomtheorem{customProposition}{Proposition}

% Mark sections of captions for referring to divisions of figures

% Highlight a newly defined term

\newcommand{\circhat}{\overset{\circ}}

\def\Tabref#1{Table~\ref{#1}}
\def\twoTabref#1#2{Table~\ref{#1} and \ref{#2}}

\def\Defref#1{Definition~\ref{#1}}
\def\Lmmref#1{Lemma~\ref{#1}}
% theorem reference, upper-case
\def\Thmref#1{Theorem~\ref{#1}}
% Figure reference, lower-case.
\def\figref#1{Fig.~\ref{#1}}
% Figure reference, capital. For start of sentence
\def\Figref#1{Fig.~\ref{#1}}

% Section reference, lower-case.
\def\secref#1{Sec.~\ref{#1}}
\def\appref#1{Appendix~\ref{#1}}
% Section reference, capital.

% Reference to two sections.

% Reference to three sections.

% Reference to an equation, lower-case.
\def\eqref#1{equation~\ref{#1}}
% Reference to an equation, upper case
\def\Eqref#1{Eqn.~\ref{#1}}
% A raw reference to an equation---avoid using if possible

% Reference to a chapter, lower-case.

% Reference to an equation, upper case.

% Reference to a range of chapters

% Reference to an algorithm, lower-case.

% Reference to an algorithm, upper case.
\def\Algref#1{Algorithm~\ref{#1}}

% Reference to a part, lower case

% Reference to a part, upper case

\def\1{\bm{1}}

% Random variables

% rm is already a command, just don't name any random variables m

% Random vectors

% Elements of random vectors

% Random matrices

% Elements of random matrices

% Vectors

\def\vmu{{\bm{\mu}}}

\def\vu{{\bm{u}}}
\def\vv{{\bm{v}}}

\def\vx{{\bm{x}}}
\def\vy{{\bm{y}}}
\def\vz{{\bm{z}}}

% Elements of vectors

% Matrix

\def\mI{{\bm{I}}}

\def\mR{{\bm{R}}}

% Tensor
\DeclareMathAlphabet{\mathsfit}{\encodingdefault}{\sfdefault}{m}{sl}
\SetMathAlphabet{\mathsfit}{bold}{\encodingdefault}{\sfdefault}{bx}{n}

% Graph

\def\gD{{\mathcal{D}}}

\def\gH{{\mathcal{H}}}

\def\gL{{\mathcal{L}}}
\def\gM{{\mathcal{M}}}
\def\gN{{\mathcal{N}}}

\def\gP{{\mathcal{P}}}
\def\gQ{{\mathcal{Q}}}

\def\gS{{\mathcal{S}}}

\def\gU{{\mathcal{U}}}

\def\gX{{\mathcal{X}}}
\def\gY{{\mathcal{Y}}}

% Sets

% Don't use a set called E, because this would be the same as our symbol
% for expectation.

\def\sR{{\mathbb{R}}}

% Entries of a matrix

% entries of a tensor
% Same font as tensor, without \bm wrapper

% The true underlying data generating distribution

% The empirical distribution defined by the training set

% The model distribution

% Stochastic autoencoder distributions

 % Laplace distribution

\newcommand{\E}{\mathbb{E}}
\newcommand{\Ls}{\mathcal{L}}

% Wolfram Mathworld says $L^2$ is for function spaces and $\ell^2$ is for vectors
% But then they seem to use $L^2$ for vectors throughout the site, and so does
% wikipedia.

 % See usage in notation.tex. Chosen to match Daphne's book.

\renewcommand{\tilde}{\widetilde}
\renewcommand{\hat}{\widehat}
\renewcommand{\frac}{\tfrac}

\newcommand{\crlref}[1]{Corollary~\ref{#1}}

% Titles that need to be decided.
\title{A Unified Approach to Domain Incremental Learning with Memory: Theory and Algorithm}
% \title{Domain Incremental Learning with Theoretical Guarantees}
% \title{Bounding the Error of Domain Incremental Learning with Adaptive Adversarial Feature Alignment}
% \title{A Theory of Unifying the Repaly-Based Domain Incremental Learning Methods}
% \title{A General Bounding Scheme for Domain Incremental Learning}
% \title{Tightest Bound Minimization with Adversarial Feature Space Alginment for Domain Incremental Learning}

% The \author macro works with any number of authors. There are two commands
% used to separate the names and addresses of multiple authors: \And and \AND.
%
% Using \And between authors leaves it to LaTeX to determine where to break the
% lines. Using \AND forces a line break at that point. So, if LaTeX puts 3 of 4
% authors names on the first line, and the last on the second line, try using
% \AND instead of \And before the third author name.

\author{%
  Haizhou Shi \\
  Department of Computer Science\\
  Rutgers University\\
  Piscataway, NJ 08854 \\
  \texttt{haizhou.shi@rutgers.edu} \\
  \And 
  Hao Wang\\
    Department of Computer Science \\
  Rutgers University \\
  Piscataway, NJ 08854 \\
  \texttt{hw488@cs.rutgers.edu} \\
  % examples of more authors
  % \And
  % Coauthor \\
  % Affiliation \\
  % Address \\
  % \texttt{email} \\
  % \AND
  % Coauthor \\
  % Affiliation \\
  % Address \\
  % \texttt{email} \\
  % \And
  % Coauthor \\
  % Affiliation \\
  % Address \\
  % \texttt{email} \\
  % \And
  % Coauthor \\
  % Affiliation \\
  % Address \\
  % \texttt{email} \\
}

\begin{document}

\maketitle

% \input{sections/abstract}
% \input{sections/intro}
% \input{sections/related}
% % \input{sections/problem_formulation}
% \input{sections/theory}
% \input{sections/method}
% \input{sections/experiments}
% \input{sections/conclusion}
% \input{sections/limitation}

\begin{abstract}
    Domain incremental learning aims to adapt to a sequence of domains with 
    access to only a small subset of data (i.e., memory) from previous domains. 
    % only limited access to previous domains. 
    Various methods have been proposed for this problem, but it is still unclear how they are related and when practitioners should choose one method over another. 
    % how theoretically sound these methods are. \shz{For example, such methods usually need a highly empirical coefficient to balance the current domain learning and past sample replay, which causes a notable theoretical deviation from the true learning objective.}
    % \shz{Over the years, various frameworks have been proposed to tackle this challenge with the utilization of memory, where a highly empirical coefficient is used to balance the current domain learning and past sample replay. The introduction of this coefficient causes the lack of theoretical foundation for such methods.}
    In response, we propose a unified framework, dubbed Unified Domain Incremental Learning (\ours), for domain incremental learning with memory. 
    Our \ours unifies various existing methods, and our theoretical analysis shows that \ours always achieves a tighter generalization error bound compared to these methods. The key insight is that different existing methods correspond to our bound with different \emph{fixed} coefficients; 
    % derive a theoretically unified framework to describe many existing domain incremental learning methods with memory. Our theory clearly exposes the presumptions that one must consider when different replay methods are applied, and further shed lights on how to leverage the abundance of the current domain data to consolidate the knowledge of the past domains. 
    based on insights from this unification, our \ours allows \emph{adaptive} coefficients during training, thereby always achieving the tightest bound. Empirical results show that our \ours outperforms the state-of-the-art domain incremental learning methods on both synthetic and real-world datasets. Code will be available at \href{https://github.com/Wang-ML-Lab/unified-continual-learning}{https://github.com/Wang-ML-Lab/unified-continual-learning}.
\end{abstract}

\section{Introduction}
% threads
% 
% Key Contribution: use all domains' error to bound the continual learning objective.

% General Background
Despite recent success of large-scale machine learning models~\cite{hinton2006fast,krizhevsky2017imagenet,hochreiter1997long,goodfellow2020generative,vaswani2017attention,devlin2018bert,he2016deep}, continually learning from evolving environments remains a longstanding challenge. Unlike the conventional machine learning paradigms where learning is performed on a static dataset, \emph{domain incremental learning, i.e., continual learning with evolving domains}, hopes to accommodate the model to the dynamically changing data distributions, while retaining the knowledge learned from previous domains~\cite{van2022three,mirza2022efficient,kalb2021continual,wang2022s,garg2022multi}. Naive methods, such as continually finetuning the model on new-coming domains, will suffer a substantial performance drop on the previous domains; this is referred to as ``catastrophic forgetting''~\cite{kirkpatrick2017overcoming,lopez2017gradient,schwarz2018progress,zenke2017continual,li2017learning}. 
In general, domain incremental learning algorithms aim to minimize the total risk of \emph{all} domains, i.e.,
\begin{align}
    \Ls^*(\theta) 
    &= \Ls_{t}(\theta) + \Ls_{1:t-1}(\theta)
    = \E_{(x,y)\sim \gD_t}[\ell(y, h_\theta(x)] + \sum_{i=1}^{t-1}\E_{(x,y)\sim \gD_i}[\ell(y, h_\theta(x)],
\end{align}
where $\Ls_{t}$ calculates model $h_\theta$'s expected prediction error $\ell$ over the current domain's data distribution $\gD_t$. $\Ls_{1:t-1}$ is the total error evaluated on the past $t-1$ domains' data distributions, i.e., $\{\gD_i\}_{i=1}^{t-1}$.

The main challenge of domain incremental learning comes from the practical \emph{memory constraint} that no (or only very limited) access to the past domains' data is allowed~\cite{li2017learning,kirkpatrick2017overcoming,zenke2017continual,rebuffi2017icarl}. Under such a constraint, it is difficult, if not impossible, to accurately estimate and optimize the past error $\Ls_{1:t-1}$. Therefore the main focus of recent domain incremental learning methods has been to develop effective surrogate learning objectives for $\Ls_{1:t-1}$. Among these methods~\cite{kirkpatrick2017overcoming,schwarz2018progress,aljundi2018memory,zenke2017continual,lopez2017gradient,chaudhry2018efficient,riemer2018learning,saha2021gradient,deng2021flattening,gallardo2021self,ni2021revisiting,ni2023continual,cha2021co2l,pham2021dualnet,serra2018overcoming,wang2022coscl,li2023steering}, replay-based methods, which replay a small set of old exemplars during training~\cite{van2022three,riemer2018learning,buzzega2020dark,arani2022learning,sarfraz2023error,chaudhry2019tiny}, has consistently shown promise and is therefore commonly used in practice. 
% The main challenge of domain incremental learning comes from the 
% notorious past-data constraint: no (or very limited) access to the past domains' data is allowed~\tocite. This restriction makes accurate estimation and optimization of the past error $\Ls_{1:t-1}$ impossible. Hence, the main focus of continual learning research community is to find effective surrogate learning objective for $\Ls_{1:t-1}$.
% Among all techniques (e.g., parameter or functional regularization~\tocite, constraining the optimization~\tocite, altering model architectures~\tocite, etc.), and replaying a small set of old exemplars during training~\tocite. 

One typical example is ER~\cite{riemer2018learning}, which stores a set of exemplars $\gM$ and uses a replay loss $\Ls_{\text{replay}}$ as the surrogate of $\Ls_{1:t-1}$. In addition, a fixed, predetermined coefficient $\beta$ is used to balance current domain learning and past sample replay. Specifically,
% \begin{align}
%     \tilde{\Ls}_t &= \E_{(x,y)\sim D_t}[\ell(y, h(x)] + \beta \E_{(x^\prime,y^\prime)\sim \gM}[\ell(y^\prime, h(x^\prime)],
% \end{align}
\begin{align}
    \tilde{\Ls}(\theta) &= \Ls_{t}(\theta) + \beta \cdot \Ls_{\text{replay}}(\theta) = \Ls_{t}(\theta) + \beta \cdot \E_{(x^\prime,y^\prime)\sim \gM}[\ell(y^\prime, h_\theta(x^\prime)].
\end{align}
While such methods are popular in practice, there is still a gap between the surrogate loss~($\beta\Ls_{\text{replay}}$) and the true objective~($\Ls_{1:t-1}$), rendering them lacking in theoretical support and therefore calling into question their reliability. Besides, different methods use different schemes of setting $\beta$~\cite{riemer2018learning,buzzega2020dark,arani2022learning,sarfraz2023error}, and it is unclear how they are related and when practitioners should choose one method over another.

% To address these challenges, we propose a unified framework, dubbed Unified Domain Incremental Learning (\ours), for domain incremental learning with memory and show that \ours unifies various existing methods. 
To address these challenges, we develop a unified generalization error bound and theoretically show that different existing methods are actually minimizing the same error bound with different \emph{fixed} coefficients~(more details in \Tabref{tab:unify} later). Based on such insights, we then develop an algorithm that allows \emph{adaptive} coefficients during training, thereby always achieving the tightest bound and improving the performance. Our contributions are as follows:

% In this work, we show that many previous domain incremental learning methods can be included in a unified family of upper bounds. Our theoretical result incorporates three widely-used surrogate training losses, with their hidden assumptions and side-effects on the final generalization bound considered. More concretely, we make the following contributions: 

    % In response, we propose a unified framework, dubbed Unified Domain Incremental Learning (\ours), for domain incremental learning with memory. 
    % Our \ours unifies various existing methods, and our theoretical analysis shows that \ours always achieves a tighter generalization error bound compared to these methods. The key insight is that different existing methods correspond to our bound with different \emph{fixed} coefficients; 
    % based on insights from this unification, our \ours allows \emph{adaptive} coefficients during training, thereby always achieving the tightest bound. Empirical results show that our \ours outperforms the state-of-the-art domain incremental learning methods on both synthetic and real-world datasets. 
    
% \begin{itemize}
\begin{compactitem}
\item We propose a unified framework, dubbed Unified Domain Incremental Learning (\ours), for domain incremental learning with memory to unify various existing methods. 
\item Our theoretical analysis shows that different existing methods are equivalent to minimizing the same error bound with different \emph{fixed} coefficients. Based on insights from this unification, our \ours allows \emph{adaptive} coefficients during training, thereby always achieving the tightest bound and improving the performance. 
\item Empirical results show that our \ours outperforms the state-of-the-art domain incremental learning methods on both synthetic and real-world datasets. 
\end{compactitem}

\section{Related Work}
\textbf{Continual Learning.} 
% In the field of continual learning, three common scenarios have been extensively studied: 
Prior work on continual learning can be roughly categorized into three scenarios~\cite{van2022three,chen2018lifelong}: 
(i) task-incremental learning, where task indices are available during both training and testing~\cite{li2017learning,kirkpatrick2017overcoming,van2022three}, (ii) class-incremental learning, where new classes are incrementally included for the classifier~\cite{rebuffi2017icarl,wu2019large,guo2022online,kim2023learnability,kim2022theoretical}, and (iii) domain-incremental learning, where the data distribution's incremental shift is explicitly modeled~\cite{mirza2022efficient,kalb2021continual,wang2022s,garg2022multi}. Regardless of scenarios, the main challenge of continual learning is to alleviate catastrophic forgetting with only limited access to the previous data; therefore methods in one scenario can often be easily adapted for another. 
Many methods have been proposed to tackle this challenge, including functional and parameter regularization~\cite{li2017learning,kirkpatrick2017overcoming,schwarz2018progress,aljundi2018memory}, constraining the optimization process~\cite{saha2021gradient,deng2021flattening,lopez2017gradient,chaudhry2018efficient}, developing incrementally updated components~\cite{yoon2017lifelong,hung2019compacting,li2023steering}, designing modularized model architectures~\cite{ramesh2021model,wang2022coscl}, improving representation learning with additional inductive biases~\cite{cha2021co2l,ni2023continual,ni2021revisiting,gallardo2021self},
{and Bayesian approaches~\cite{ebrahimi2019uncertainty,nguyen2017variational,kurle2019continual,ahn2019uncertainty}}. 
Among them, replaying a small set of old exemplars, i.e., memory, during training has shown great promise as it is easy to deploy, \emph{applicable in all three scenarios}, and, most importantly, achieves impressive performance~\cite{van2022three,riemer2018learning,buzzega2020dark,arani2022learning,sarfraz2023error,chaudhry2019tiny}. 
% \hao{cite many papers, at least 3 or 4 papers here to show that there are a lot of work}
% While such methods are popular in practice, there is still a gap between the surrogate loss~($\beta\Ls_{\text{replay}}$) and the true objective~($\Ls_{1:t-1}$), rendering them lacking in theoretical support and therefore calling into question their reliability. Besides, different methods use different schemes of setting $\beta$~\tocite, and it is still unclear how they are related and when practitioners should choose one method over another. 
Therefore in this paper, we focus on \textit{domain incremental learning} with \textit{memory}, aiming to provide a principled theoretical framework to unify these existing methods. 
% hoping to provide a unified theoretical framework to describe, understand, improve them.
% work on providing theoretical guarantees for the last type of continual learning~(i.e., replay-based) on the domain-incremental setting, and further providing a unified framework for them. 

\textbf{Domain Adaptation and Domain Incremental Learning.} 
Loosely related to our work are domain adaptation (DA) methods, which adapt a model trained on \textit{labeled} source domains to \textit{unlabeled} target domains~\cite{pan2010survey,pan2010domain,long2018conditional,saito2018maximum,sankaranarayanan2018generate,zhang2019bridging,peng2019moment,chen2019blending,dai2019adaptation,nguyen2021unsupervised,wang2020continuously,li2023Unsup}. Much prior work on DA focuses on matching the distribution of the source and target domains by directly matching the statistical attributions~\cite{pan2010domain,tzeng2014deep,sun2016deep,peng2019moment,nguyen2021unsupervised} or adversarial training~\cite{zhang2019bridging,long2018conditional,ganin2016domain,zhao2017learning,dai2019adaptation,xu2022graph,xu2023domain,TSDA,wang2020continuously}. 
% \hao{cite many papers here, refer to GRDA's related work section cite all papers mentioned in first paragaraph of Sec 2}
{Compared to DA's popularity, domain incremental learning~(DIL) has received limited attention in the past. However, it is now gaining significant traction in the research community~\cite{van2022three,mirza2022efficient,kalb2021continual,wang2022s,garg2022multi}. These studies predominantly focus on the practical applications of DIL, such as semantic segmentation~\cite{garg2022multi}, object detection for autonomous driving~\cite{mirza2022efficient}, and learning continually in an open-world setting~\cite{dai2023domain}}.
Inspired by the theoretical foundation of adversarial DA~\cite{ben2010theory,long2018conditional}, 
% {to the best of our knowledge, we for the first time} 
we propose, to the best of our knowledge, \textbf{the first unified upper bound for DIL}. %, which can achieve the tightest bound by further incorporation of adversarial training. 
% Note that DA focuses on improving the accuracy of the \textit{target domain}
Most related to our work are previous DA methods that flexibly align different domains according to their associated given or inferred domain index~\cite{wang2020continuously,xu2023domain}, domain graph~\cite{xu2022graph}, and domain taxonomy~\cite{TSDA}. 
The main difference between DA and DIL is that the former focuses on improving the accuracy of the \textit{target domains}, while the latter focuses on the total error of \textit{all domains}, with additional measures taken to alleviate forgetting on the previous domains. More importantly, DA methods typically require access to target domain data to match the distributions, and therefore are not {directly applicable to DIL.}

% As a field closely related to domain incremental learning, domain adaptation~(DA) seeks to adapt a model trained on a \textit{labeled} source domain to an \textit{unlabeled} target domain~\tocite. Many prior researches in DA focus on matching the embedding distribution between source and target domain by directly matching the statistical attributions or adversarial training. 
% In this paper, inspired by the theoretical foundation and the idea of distribution matching of DA, we propose a unified upper bound for a general domain incremental learning, which can achieve the tightest bound by further incorporation of adversarial training. 
% The main difference between DA and DIL lies in the fact that the former mainly focuses on improving the accuracy of the \textit{target domain}, while the latter focuses on the total error of \textit{all domains}, with extra measures taken to defy forgetting on the previous domains. More importantly, frameworks in DA need the access to the target domain in advance otherwise they cannot match the embedding distributions, and hence are not applicable to DIL. 

% Despite that our setting has additional access to the label of different domains,  also propose to consider the relations between the current domain and the past domains during domain incremental learning, via a term

\section{Theory: Unifying Domain Incremental Learning}

In this section, we formalize the problem of domain incremental learning, provide the generalization bound of naively applying empirical risk minimization (ERM) on the memory bank, derive two error bounds (i.e., intra-domain and cross-domain error bounds) more suited for domain incremental learning, and then unify these three bounds to provide our final adaptive error bound. We then develop an algorithm inspired by this bound in~\secref{sec:method}. {\textbf{All proofs of lemmas, theorems, and corollaries can be found in \appref{app:proof}.}}

% In this section, we will first introduce the basic setting of domain incremental learning, including the learning objective and the resource at hand~(\secref{sec:prob form});
% then we give out the generalization bound of naively applying ERM on memory bank~(\secref{sec:erm bound});
% finally we consider the effect of two surrogate losses that are widely used to mitigate catastrophic forgetting and present a family of generalization upper bounds that inspires our framework design~(\secref{sec:our bound}). \textbf{All proofs of lemmas, theorems, and corollaries can be found in the Supplement. }

% 你先说 T domain in total, which arrive one by one, then at time t, you have domain t, previous model H_t-1, current data S_t and memory bank XXX, the goal is to find a hypothesis (model) h such that total error is minimize after observing the last domain T

% \subsection{Problem Setting and Notation} \label{sec:prob form}
% In the following sections, we limit our scope of discussion to domain incremental learning~(DIL), and ``domain'' and ``task'' are used in an interchangeable way. 
\textbf{Problem Setting and Notation.} 
We consider the problem of domain incremental learning with $T$ domains arriving one by one. Each domain $i$ contains $N_i$ data points $\gS_i=\{(\vx_j^{(i)},y_j^{(i)})\}_{j=1}^{N_i}$, where $(\vx_j^{(i)},y_j^{(i)})$ is sampled from domain $i$'s data distribution $\gD_i$. 
Assume that when domain $t\in[T]\triangleq\{1,2,\dots,T\}$ arrives at time $t$, one has access to 
% \begin{compactenum}
(1) the current domain $t$'s data $\gS_t$, 
(2) a memory bank $\gM=\{M_i\}_{i=1}^{t-1}$, where $M_i=\{(\tilde{\vx}_j^{(i)},\tilde{y}_j^{(i)})\}_{j=1}^{\tilde{N}_i}$ is a small subset ($\tilde{N}_i \ll N_i$) randomly sampled from $\gS_i$, and 
(3) the history model $H_{t-1}$ after training on the previous $t-1$ domains. 
% \end{compactenum}
For convenience we use shorthand notation $\gX_i\triangleq\{\vx_j^{(i)}\}_{j=1}^{N_i}$ and $\tilde{\gX}_i\triangleq\{\tilde{\vx}_j^{(i)}\}_{j=1}^{\tilde{N}_i}$. 
The goal is to learn the optimal model (hypothesis) $h^*$ that minimizes the prediction error over all $t$ domains after each domain $t$ arrives. Formally, 
% Suppose we have in total $T$ domains, and our goal of learning is to minimize the average risk on all the domains, that is, finding the optimal hypothesis $h^*$ such that
% \begin{align}
%     h^* 
%     &= \arg\min_{h} \sum_{t=1}^T \error{t}(h) = \arg\min_{h} \sum_{t=1}^T \E_{\vx\sim \gD_t}[h(\vx) \neq f_t(\vx)], \label{eq:all-domain loss} 
% \end{align}
\begin{align}
    h^* 
    &= \arg\min_{h} \sum_{i=1}^t \error{i}(h),\qquad \error{i}(h)\triangleq  \E_{\vx\sim \gD_i}[h(\vx) \neq f_i(\vx)], \label{eq:all-domain loss} 
\end{align}
where for domain $i$, we assume the labels $y\in \gY = \{0, 1\}$ are produced by an unknown deterministic function $y=f_i(\vx)$ and $\error{i}(h)$ denotes the expected error of domain $i$.

\subsection{Naive Generalization Bound Based on ERM} \label{sec:erm bound}
\begin{definition}[\textbf{Domain-Specific Empirical Risks}] \label{def:erm}
When domain $t$ arrives, model $h$'s empirical risk $\hat{\epsilon}_{\gD_i}(h)$ for each domain $i$ (where $i\leq t$) is computed on the available data at time $t$, i.e., 
\begin{align}
    \erroremp{i}(h) &= 
    \begin{cases}
        \frac{1}{N_i} \sum_{\vx\in X_i} \mathbbm{1}_{h(\vx)\neq f_i(\vx)} & \text{if } i = t, \\ \addlinespace[1.7ex]
        \frac{1}{\tilde{N}_i} \sum_{\vx\in \tilde{X}_i} \mathbbm{1}_{h(\vx)\neq f_i(\vx)} & \text{if } i < t. 
    \end{cases}
\end{align}
\end{definition}
% \vspace{0.5em}
Note that at time $t$, only a small subset of data from previous domains ($i<t$) is available in the memory bank ($\tilde{N}_i\ll N_i$). Therefore empirical risks for previous domains ($\erroremp{i}(h)$ with $i<t$) can deviate a lot from the true risk $\error{i}(h)$ (defined in \Eqref{eq:all-domain loss}); this is reflected in \Lmmref{lem:trivial-dil} below. 

% Let $N=N_t+\sum_{i}^{t-1}\tilde{N}_i$ denoting the total number of samples available to the training of current task $t$, where $N_t$ and $\tilde{N}_i$ represents the number of examples collected at task $t$ and the examples preserved from the task $i$, respectively. Then for any $\delta\in (0,1)$, with probability at least $1-\delta$:

\begin{lemma}[\textbf{ERM-Based Generalization Bound}]\label{lem:trivial-dil}
Let $\gH$ be a hypothesis space of VC dimension $d$. When domain $t$ arrives, there are $N_t$ data points from domain $t$ and $\tilde{N}_i$ data points from each previous domain $i<t$ in the memory bank. With probability at least $1-\delta$, we have:
\begin{align}
    \sum_{i=1}^t \error{i}(h) &\leq \sum_{i=1}^t \erroremp{i}(h) + \sqrt{\left(\frac{1}{N_t} + \sum_{i=1}^{t-1}\frac{1}{\tilde{N}_i}\right)\left({8d\log\left(\frac{2eN}{d}\right)+8\log\left(\frac{2}{\delta}\right)}\right)}.\label{eq:erm_bound}
\end{align}
\end{lemma}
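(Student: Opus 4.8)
The plan is to read \Lmmref{lem:trivial-dil} as the classical Vapnik--Chervonenkis uniform-convergence bound, applied to the \emph{pooled} labeled sample of everything available at time $t$ --- the $N_t$ points of $\gS_t$ together with the $\sum_{i=1}^{t-1}\tilde N_i$ points of the memory bank, $N\triangleq N_t+\sum_{i=1}^{t-1}\tilde N_i$ in total --- while tracking the fact that, in the total empirical risk $\sum_{j=1}^t\erroremp{j}(h)$, a datapoint from domain $i$ enters with coefficient $\frac{1}{n_i}$, where $n_i\triangleq N_t$ for $i=t$ and $n_i\triangleq\tilde N_i$ for $i<t$ (cf.\ \Defref{def:erm}). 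Since each $M_i$ is an i.i.d.\ subsample of $\gS_i\sim\gD_i$, we have $\sum_i\error{i}(h)=\E[\sum_i\erroremp{i}(h)]$, so it suffices to upper-bound, with probability $\ge 1-\delta$, the one-sided uniform deviation $\Phi\triangleq\sup_{h\in\gH}\bigl(\sum_{i=1}^t\error{i}(h)-\sum_{i=1}^t\erroremp{i}(h)\bigr)$.

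I would then run the textbook three-step argument, carrying the weights $\frac{1}{n_i}$ through each step. (i) \emph{Symmetrization}: draw an independent ghost copy with matching per-domain sizes $n_i$; the double-sample lemma gives $\Pr(\Phi>\eta)\le 2\,\Pr\bigl(\sup_{h\in\gH}\sum_{i=1}^t(\erroremp{i}(h)-\erroremp{i}'(h))>\eta/2\bigr)$, primes denoting the ghost copy, and the leading $2$ is the source of the eventual $\log(2/\delta)$. (ii) \emph{Sauer--Shelah}: on the $2N$ pooled points $\gH$ induces at most $(2eN/d)^d$ distinct error patterns, so a union bound replaces the supremum by a maximum over that many \emph{fixed} events --- this produces the $d\log(2eN/d)$ term. (iii) \emph{Weighted Hoeffding}: for each fixed pattern, $\sum_{i=1}^t(\erroremp{i}(h)-\erroremp{i}'(h))$ is a sum of $N$ independent mean-zero terms, the $n_i$ of them arising from domain $i$ lying in $[-\frac{1}{n_i},\frac{1}{n_i}]$; Hoeffding bounds its upper tail by $\exp\!\bigl(-\eta^2/(8\sum_{i=1}^t\frac{1}{n_i})\bigr)$, where $\sum_i\frac{1}{n_i}=\frac{1}{N_t}+\sum_{i=1}^{t-1}\frac{1}{\tilde N_i}$ is exactly the prefactor in the statement and the $8$ is the Hoeffding constant combined with the $\eta/2$ from step (i). Setting $2(2eN/d)^d\exp\!\bigl(-\eta^2/(8\sum_i\frac{1}{n_i})\bigr)=\delta$ and solving for $\eta$ gives $\eta=\sqrt{\bigl(\frac{1}{N_t}+\sum_{i<t}\frac{1}{\tilde N_i}\bigr)\bigl(8d\log\frac{2eN}{d}+8\log\frac{2}{\delta}\bigr)}$, which together with $\Phi\le\eta$ is the claim.

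The step needing the most care is (iii) --- and the weighted versions of (i)--(ii) that feed it --- because one must propagate the per-domain coefficients $\frac{1}{n_i}$ so that the rate scales with $\sqrt{\sum_i 1/n_i}$ rather than with the strictly larger $\sum_i\sqrt{1/n_i}$ that the lazier route (apply the single-sample VC bound on each domain and union-bound) would give; obtaining the tighter pooled rate is precisely the content of the lemma. A shortcut is to instead invoke the weighted multi-source uniform-convergence bound from the domain-adaptation literature, whose ``effective variance'' is $\sum_i\alpha_i^2/n_i$, with weights $\alpha_i=1/t$, and multiply through by $t$; the exact constants ($8$ and $\log(2/\delta)$ rather than, say, $4$ and $\log(4/\delta)$) are immaterial and depend only on which form of that bound one starts from.
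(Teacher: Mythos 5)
Your proposal is correct and follows essentially the same route as the paper: the paper proves a general $\alpha$-weighted multi-domain uniform-convergence bound (its Lemma~A.1) via exactly your three steps---symmetrization with a ghost sample, Sauer--Shelah on the pooled $2N$ points, and a weighted Hoeffding bound yielding the $\exp\bigl(-\epsilon^2/(8\sum_i \alpha_i^2/N_i)\bigr)$ tail---and then obtains Lemma~\ref{lem:trivial-dil} by setting $\alpha_i=1$, which coincides with your direct derivation with per-point weights $1/n_i$ (and with the shortcut you mention at the end).
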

% Discuss the N and N_i, why trivially using it is a bad idea.
% \begin{proof}
%     See Appendix.
% \end{proof}

% \begin{remark}
%     \shz{Discuss $N$ and $N_i$: why trivially using it is a bad idea, including some practical setting used for CIFAR.}
% \end{remark} 
% At $t^{\text{th}}$ task, the continual learning goal can be trivially decomposed to minimizing two errors:
% \begin{align}
%     \label{eq:loss-all}\epsilon_{\text{all-task}} &= {\epsilon_{\text{current}}} +  {\epsilon_{\text{past}}} = \epsilon_{\gD_t}(h) + \sum_{i=1}^{t-1} \epsilon_{\gD_i}(h).
% \end{align}
% Minimizing the current-task error is done by trivial ERM
% \begin{align}
%     \label{eq:current-loss}\hat{\epsilon}_{\text{current}} &= \hat{\epsilon}_{X_t}(h)
%     = \frac{1}{N_t} \sum_{x\in X_t} \mathbbm{1}_{h(x)\neq f_t(x)}.
% \end{align}
% In \Thmref{lem:trivial-dil}, we trivially perform the ERM on the available data $(X_t, Y_t) \cup \{M_i\}_{i=1}^{t-1}$ and completely neglect the existence of the model $H_{t-1}$ trained so far. Empirically for the current-domain training, the generalization error is tightly bounded, as the number of training samples for the current domain $N_t$ is usually large. However, this doesn't generally hold for the past domains as the number of the past examples is usually limited under the constraint of continual learning framework.

\Lmmref{lem:trivial-dil} shows that naively using ERM to learn $h$ is equivalent to minimizing a loose generalization bound in~\Eqref{eq:erm_bound}. 
Since $\tilde{N}_i\ll N_i$, there is a large constant $\sum_{i=1}^{t-1}\frac{1}{\tilde{N}_i}$ compared to $\frac{1}{N_t}$, making the second term of \Eqref{eq:erm_bound} much larger and leading to a looser bound.

\subsection{Intra-Domain and Cross-Domain Model-Based Bounds} \label{sec:our bound}
In domain incremental learning, one has access to the history model $H_{t-1}$ besides the memory bank $\{M_i\}_{i=1}^{t-1}$; this offers an opportunity to derive tighter error bounds, potentially leading to better algorithms. In this section, we will derive two such bounds, an intra-domain error bound (\Lmmref{lemma:in-domain}) and a cross-domain error bound (\Lmmref{lemma:between-domain}), and then integrate them two with the ERM-based bound in~\Eqref{eq:erm_bound} to arrive at our final adaptive bound (\Thmref{thm:generalization-bound}). 

% In this section, we will present a family of upper bounds that unifies many domain incremental learning algorithms. \Lmmref{lemma:in-domain} and \Lmmref{lemma:between-domain} provide upper bounds for two commonly used surrogate losses: distillation on history model's logits, evaluated on past data and current data, respectively. \Thmref{thm:all-errors} and \Thmref{thm:generalization-bound} combine the upper bounds on each individual domains and further consider their effect on generalization upper bound. 

\begin{lemma}[\textbf{Intra-Domain Model-Based Bound}]\label{lemma:in-domain}
    Let $h\in \gH$ be an arbitrary function in the hypothesis space $\gH$, and $H_{t-1}$ be the model trained after domain $t-1$. The domain-specific error $\error{i}(h)$ on the previous domain $i$ has an upper bound:
    \begin{align}
        \error{i}(h) &\leq \error{i}(h, H_{t-1}) + \error{i}(H_{t-1}), \label{eq:bound in-domain}
    \end{align}
    where $\error{i}(h,H_{t-1})\triangleq  \E_{\vx\sim \gD_i}[h(\vx) \neq H_{t-1}(\vx)]$. 
\end{lemma}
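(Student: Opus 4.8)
The plan is to reduce the statement to the elementary triangle inequality satisfied by the $0$--$1$ disagreement loss. Fix $h\in\gH$ and, for brevity, write $g=H_{t-1}$ and $f=f_i$. For a single input $\vx$ I would first establish the pointwise inequality
\[
\mathbbm{1}_{h(\vx)\neq f(\vx)} \;\le\; \mathbbm{1}_{h(\vx)\neq g(\vx)} + \mathbbm{1}_{g(\vx)\neq f(\vx)} .
\]
This follows from a short case analysis: the left-hand side equals $1$ only when $h(\vx)\neq f(\vx)$, and in that case one cannot have simultaneously $h(\vx)=g(\vx)$ and $g(\vx)=f(\vx)$ (otherwise $h(\vx)=f(\vx)$, a contradiction), so at least one of the two indicators on the right equals $1$; when the left-hand side is $0$ the inequality is trivial since the right-hand side is nonnegative.

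Next I would take the expectation of both sides over $\vx\sim\gD_i$. By linearity and monotonicity of expectation,
\[
\E_{\vx\sim\gD_i}\!\left[\mathbbm{1}_{h(\vx)\neq f(\vx)}\right] \le \E_{\vx\sim\gD_i}\!\left[\mathbbm{1}_{h(\vx)\neq g(\vx)}\right] + \E_{\vx\sim\gD_i}\!\left[\mathbbm{1}_{g(\vx)\neq f(\vx)}\right],
\]
which, by the definitions $\error{i}(h)=\E_{\vx\sim\gD_i}[h(\vx)\neq f_i(\vx)]$, $\error{i}(h,H_{t-1})=\E_{\vx\sim\gD_i}[h(\vx)\neq H_{t-1}(\vx)]$, and $\error{i}(H_{t-1})=\E_{\vx\sim\gD_i}[H_{t-1}(\vx)\neq f_i(\vx)]$, is exactly the claimed bound~\eqref{eq:bound in-domain}.

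There is essentially no obstacle here; the only point to be careful about is that $H_{t-1}$ must be treated as a fixed deterministic classifier — it is trained on the data available before domain $t$ arrives, independently of the expectation over $\gD_i$ — so that $\error{i}(H_{t-1})$ is a well-defined quantity and the triangle inequality applies verbatim, with $f_i$ playing the role of the reference labeling function. No properties of $\gH$, its VC dimension, or the memory bank are needed for this particular bound; those will only enter when this lemma is later combined with the ERM-based bound in \Eqref{eq:erm_bound}.
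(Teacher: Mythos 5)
Your proposal is correct and follows essentially the same route as the paper, which simply invokes the triangle inequality for the $0$--$1$ disagreement loss (citing Ben-David et al.) to write $\error{i}(h)=\error{i}(h,f_i)\leq \error{i}(h,H_{t-1})+\error{i}(H_{t-1},f_i)$; your pointwise case analysis is just an explicit derivation of that same inequality before taking the expectation over $\gD_i$.
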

% \begin{proof}
%     See Appendix.
% \end{proof}

% As stated in \Lmmref{lemma:in-domain}, the error of the current model $h$ on domain $i$ is bounded by the discrepancy between $h$ and the history model $H_{t-1}$ plus the error of $H_{t-1}$ on domain $i$. In practice, it corresponds to performing the knowledge distillation on the history model, i.e., using past domain's data to minimize the difference between the response of $h$ and $H_{t-1}$. The term $\error{i}(H_{t-1})$ further implies a necessary condition for performing  distillation: the history model $H_{t-1}$ should have a small error on domain~$i$, otherwise the upper bound will be too loose. 

\Lmmref{lemma:in-domain} shows that the current model $h$'s error on domain $i$ is bounded by the discrepancy between $h$ and the history model $H_{t-1}$ plus the error of $H_{t-1}$ on domain $i$. 
% In practice, it corresponds to performing the knowledge distillation on the history model, i.e., using past domain's data to minimize the difference between the response of $h$ and $H_{t-1}$. The term $\error{i}(H_{t-1})$ further implies a necessary condition for performing  distillation: the history model $H_{t-1}$ should have a small error on domain~$i$, otherwise the upper bound will be too loose. 

One potential issue with the bound~\Eqref{eq:bound in-domain} is that only a limited number of data is available for each previous domain $i$ in the memory bank, making empirical estimation of $\error{i}(h, H_{t-1}) + \error{i}(H_{t-1})$ challenging. \Lmmref{lemma:between-domain} therefore provides an alternative bound. % that does not heavily rely on the memory bank. 

% However, as the number of the previous domains is limited, which may weaken the effect of logit distillation, the next lemma provides an alternative way to bound the error by the current domain data. 

\begin{lemma}[\textbf{Cross-Domain Model-Based Bound}]\label{lemma:between-domain}
    Let $h\in \gH$ be an arbitrary function in the hypothesis space $\gH$, and $H_{t-1}$ be the function trained after domain $t-1$. The domain-specific error $\error{i}(h)$ evaluated on the previous domain $i$ then has an upper bound:
    \begin{align}
        \error{i}(h) &\leq \error{t}(h, H_{t-1}) + \frac{1}{2}\hdiv(\gD_i, \gD_t) + \error{i}(H_{t-1}), \label{eq:bound between-domain}
    \end{align}
    where $\hdiv(\gP, \gQ) = 2 \sup_{h\in \gH\Delta\gH} \left| \operatorname{Pr}_{x\sim\gP}[h(x)=1] - \operatorname{Pr}_{x\sim\gQ}[h(x)=1] \right|$
    denotes the $\gH\Delta\gH$-divergence between distribution $\gP$ and $\gQ$, and $\error{t}(h,H_{t-1})\triangleq  \E_{\vx\sim \gD_t}[h(\vx) \neq H_{t-1}(\vx)]$.  
\end{lemma}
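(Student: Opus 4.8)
The plan is to chain two inequalities: first reduce $\error{i}(h)$ to an intra-domain expression via \Lmmref{lemma:in-domain}, and then ``transport'' the resulting disagreement term from $\gD_i$ to $\gD_t$ using the $\gH\Delta\gH$-divergence, exactly in the spirit of the classical domain-adaptation argument of Ben-David et al.

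\emph{Step 1 (reduce to a disagreement term).} Apply \Lmmref{lemma:in-domain} directly to get
\begin{align}
    \error{i}(h) \leq \error{i}(h, H_{t-1}) + \error{i}(H_{t-1}),
\end{align}
so it suffices to prove $\error{i}(h, H_{t-1}) \leq \error{t}(h, H_{t-1}) + \frac{1}{2}\hdiv(\gD_i, \gD_t)$, since the term $\error{i}(H_{t-1})$ already appears in the target bound.

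\emph{Step 2 (identify the disagreement indicator with a member of $\gH\Delta\gH$).} By definition $\error{i}(h, H_{t-1}) = \operatorname{Pr}_{\vx\sim\gD_i}[h(\vx)\neq H_{t-1}(\vx)]$, i.e.\ the $\gD_i$-mass of the disagreement region of the pair $(h, H_{t-1})$. Assuming the history model satisfies $H_{t-1}\in\gH$ (the learned model lies in the hypothesis class), the function $g:\vx\mapsto \mathbbm{1}[h(\vx)\neq H_{t-1}(\vx)]$ is by construction an element of the symmetric-difference class $\gH\Delta\gH$.

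\emph{Step 3 (transport via the divergence).} Write $\error{i}(h, H_{t-1}) = \error{t}(h, H_{t-1}) + \big(\error{i}(h, H_{t-1}) - \error{t}(h, H_{t-1})\big)$ and bound the difference by its absolute value. Since both terms are the mass assigned to $\{g=1\}$ under $\gD_i$ and $\gD_t$ respectively, and $g\in\gH\Delta\gH$, we have
\begin{align}
    \left|\error{i}(h, H_{t-1}) - \error{t}(h, H_{t-1})\right| \leq \sup_{g'\in\gH\Delta\gH}\left|\operatorname{Pr}_{\vx\sim\gD_i}[g'(\vx)=1] - \operatorname{Pr}_{\vx\sim\gD_t}[g'(\vx)=1]\right| = \frac{1}{2}\hdiv(\gD_i, \gD_t),
\end{align}
where the last equality is just the definition of $\hdiv$ (the factor $2$ in that definition cancels the $\frac{1}{2}$). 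Combining Steps 1 and 3 gives the claim.

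\emph{Main obstacle.} There is no hard estimate here; the only genuine care needed is the bookkeeping in Step 2: one must ensure the disagreement indicator of $(h,H_{t-1})$ is a legitimate member of $\gH\Delta\gH$, which requires $H_{t-1}\in\gH$. If instead $H_{t-1}\notin\gH$, the same conclusion follows by replacing the single $g$ with a supremum over all pairs $(h_1,h_2)\in\gH\times\gH$ whose disagreement set still covers that of $(h,H_{t-1})$, or equivalently by enlarging $\gH$ to include $H_{t-1}$; either way the divergence bound is unchanged. A secondary point is simply to keep the constant $\frac{1}{2}$ consistent with the stated definition of $\hdiv$.
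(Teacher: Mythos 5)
Your proof is correct and follows essentially the same route as the paper: apply the intra-domain triangle-inequality bound, add and subtract $\error{t}(h, H_{t-1})$, and bound the resulting difference by $\frac{1}{2}\hdiv(\gD_i,\gD_t)$; the paper simply cites Ben-David et al.'s Lemma~3 for that last step, whereas you unpack it by noting the disagreement indicator of $(h,H_{t-1})$ lies in $\gH\Delta\gH$. Your explicit flagging of the implicit assumption $H_{t-1}\in\gH$ is a fair point that the paper also relies on tacitly.
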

% \begin{proof} 
%     See Appendix.
% \end{proof}
% \Lmmref{lemma:in-domain} and \Lmmref{lemma:between-domain} serve as the theoretical foundation for the two common surrogate losses for direct ERM. The common term in both lemmas, $\error{i}(H_{t-1})$, indicates that a necessary condition of a tight bound is that the previous model $H_{t-1}$ has a small error on domain~$i$. In \Lmmref{lemma:in-domain}, it makes the perfect sense since the main motivation of logit distillation on the past domain is to preserve the dark knowledge~\tocite of $H_{t-1}$, which is only effective when the dark knowledge itself is correct. 

% \Lmmref{lemma:between-domain} shares the same constant term $\error{i}(H_{t-1})$ as in \Eqref{eq:bound between-domain}, which indicates the same necessary condition also applies in cross-domain upper bound. 
\Lmmref{lemma:between-domain} shows that if the divergence between domain $i$ and domain $t$, i.e., $\hdiv(\gD_i, \gD_t)$, is small enough, one can use $H_{t-1}$'s predictions evaluated on the current domain $\gD_t$ as a surrogate loss to prevent catastrophic forgetting. 
% This is a more formal description compared to the similar idea of ``task similarity'' that has been mentioned in multiple previous works~\tocite. 
Compared to the error bound~\Eqref{eq:bound in-domain} which is hindered by limited data from previous domains, \Eqref{eq:bound between-domain} relies on the current domain $t$ which contains abundant data and therefore enjoys much lower generalization error. Our lemma also justifies LwF-like cross-domain distillation loss $\epsilon_{\gD_t}(h, H_{t-1})$ which are widely adopted~\cite{li2017learning,dhar2019learning,wu2019large}.%\hao{Please check if this sentence is correct.}

% The main advantage of this surrogate loss comes from the current domain's abundance of data, i.e., the potential of decreasing the generalization error, as we will see later.

\subsection{A Unified and Adaptive Generalization Error Bound}

Our \Lmmref{lem:trivial-dil}, \Lmmref{lemma:in-domain}, and \Lmmref{lemma:between-domain} provide three different ways to bound the true risk $\sum_{i=1}^t \error{i}(h)$; each has its own advantages and disadvantages. \Lmmref{lem:trivial-dil} overly relies on the limited number of data points from previous domains $i<t$ in the memory bank to compute the empirical risk; \Lmmref{lemma:in-domain} leverages the history model $H_{t-1}$ for knowledge distillation, but is still hindered by the limited number of data points in the memory bank; \Lmmref{lemma:between-domain} improves over \Lmmref{lemma:in-domain} by leveraging the abundant data $\gD_t$ in the current domain $t$, but only works well if the divergence between domain $i$ and domain $t$, i.e., $\hdiv(\gD_i, \gD_t)$, is small. Therefore, we propose to integrate these three bounds using coefficients $\{\alpha_i, \beta_i, \gamma_i\}_{i=1}^{t-1}$ (with $\alpha_i + \beta_i + \gamma_i = 1$) in the theorem below.

\begin{theorem}[\textbf{Unified Generalization Bound for All Domains}]\label{thm:generalization-bound}
Let $\gH$ be a hypothesis space of VC dimension $d$. Let $N=N_t+\sum_{i}^{t-1}\tilde{N}_i$ denoting the total number of data points available to the training of current domain $t$, where $N_t$ and $\tilde{N}_i$ denote the numbers of data points collected at domain $t$ and data points from the previous domain $i$ in the memory bank, respectively. With probability at least $1-\delta$, we have:
\begin{align}\label{eq:final_bound}
    \nonumber\sum_{i=1}^t \error{i}(h) 
    % &= \error{t}(h) + \sum_{i=1}^{t-1}\error{i}(h)\\
    \nonumber &\leq \left\{\sum_{i=1}^{t-1}\left[\gamma_i \erroremp{i}(h) + \alpha_i\erroremp{i}(h, H_{t-1})\right] \right\}+ \left\{\erroremp{t}(h) +  (\sum_{i=1}^{t-1}\beta_i)\erroremp{t}(h, H_{t-1})\right\}\\
    &+ \frac{1}{2} \sum_{i=1}^{t-1}\beta_i \hdiv(\gD_i, \gD_t) + \sum_{i=1}^{t-1} (\alpha_i+\beta_i)\error{i}(H_{t-1}) \nonumber \\
    &+ \sqrt{\left(\frac{(1+\sum_{i=1}^{t-1}\beta_i)^2}{N_t} + \sum_{i=1}^{t-1}\frac{(\gamma_i+\alpha_i)^2}{\tilde{N}_i}\right)\left(8d\log\left(\frac{2eN}{d}\right)+8\log\left(\frac{2}{\delta}\right)\right)}\nonumber\\
    &\triangleq g(h, H_{t-1}, \Omega),
\end{align}
where $\erroremp{i}(h, H_{t-1})=\frac{1}{\tilde{N}_i} \sum_{\vx\in \tilde{\gX}_i} \mathbbm{1}_{h(\vx)\neq H_{t-1}(\vx)} $,  $\erroremp{t}(h, H_{t-1})=\frac{1}{{N}_t} \sum_{\vx\in \gX_i} \mathbbm{1}_{h(\vx)\neq H_{t-1}(\vx)}$, and $\Omega \triangleq \{\alpha_i, \beta_i, \gamma_i\}_{i=1}^{t-1}$. 
\end{theorem}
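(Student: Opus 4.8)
The plan is to combine the three previously established bounds---the ERM-based bound (\Lmmref{lem:trivial-dil}), the intra-domain model-based bound (\Lmmref{lemma:in-domain}), and the cross-domain model-based bound (\Lmmref{lemma:between-domain})---by taking, for each previous domain $i<t$, the convex combination weighted by $(\gamma_i,\alpha_i,\beta_i)$ with $\gamma_i+\alpha_i+\beta_i=1$. Concretely, I would first write, for each $i<t$,
\begin{align}
    \error{i}(h) = \gamma_i\,\error{i}(h) + \alpha_i\,\error{i}(h) + \beta_i\,\error{i}(h),\nonumber
\end{align}
then bound the first copy using the (still-to-be-isolated) empirical part of \Lmmref{lem:trivial-dil}, the second copy using \Eqref{eq:bound in-domain}, and the third copy using \Eqref{eq:bound between-domain}. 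Summing over $i=1,\dots,t-1$ and adding $\error{t}(h)$ back in produces exactly the ``main terms'' line of \Eqref{eq:final_bound}: the $\gamma_i\erroremp{i}(h)$ and $\alpha_i\erroremp{i}(h,H_{t-1})$ pieces, the $\erroremp{t}(h)$ and $(\sum_i\beta_i)\erroremp{t}(h,H_{t-1})$ pieces (the latter since each $\beta_i$ copy of the cross-domain bound contributes one $\error{t}(h,H_{t-1})$ term, all evaluated on the same current domain), plus the $\tfrac12\sum_i\beta_i\hdiv(\gD_i,\gD_t)$ divergence terms and the $\sum_i(\alpha_i+\beta_i)\error{i}(H_{t-1})$ history-model error terms, which stay as population quantities because $H_{t-1}$ is fixed and its errors are treated as known constants.

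The technical heart is the uniform-convergence (generalization) term, and this is where I would be most careful. Rather than invoking \Lmmref{lem:trivial-dil} and \Lmmref{lemma:in-domain}/\Lmmref{lemma:between-domain} as black boxes for the concentration step, I would re-derive a single VC-type deviation bound for the specific \emph{weighted} empirical average that appears after the convex combination. The relevant empirical object is a weighted sum: the current-domain sample $\gX_t$ is used both in $\erroremp{t}(h)$ (coefficient $1$) and in $\erroremp{t}(h,H_{t-1})$ (total coefficient $\sum_i\beta_i$), so it carries effective weight $1+\sum_i\beta_i$; each memory set $\tilde\gX_i$ is used in $\erroremp{i}(h)$ (coefficient $\gamma_i$) and in $\erroremp{i}(h,H_{t-1})$ (coefficient $\alpha_i$), so it carries effective weight $\gamma_i+\alpha_i$. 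Treating the composite hypothesis class (predicates of the form $h(\vx)\neq f_i(\vx)$ and $h(\vx)\neq H_{t-1}(\vx)$, which have VC dimension at most $d$ since $H_{t-1}$ and $f_i$ are fixed) and applying a Hoeffding/McDiarmid argument over the independent draws with these per-group weights, the variance-like quantity that controls the deviation is $\frac{(1+\sum_i\beta_i)^2}{N_t}+\sum_i\frac{(\gamma_i+\alpha_i)^2}{\tilde N_i}$, matching the square-root term in \Eqref{eq:final_bound}; the symmetrization/growth-function factor $8d\log(2eN/d)+8\log(2/\delta)$ is the standard one with $N$ the total sample count. A union bound over the finitely many error-type terms (absorbed into the $\log(2/\delta)$) then gives the claimed ``with probability at least $1-\delta$'' guarantee.

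The main obstacle I anticipate is bookkeeping the weighted concentration correctly: one must verify that the per-sample contributions, after weighting, yield bounded-difference constants whose squares sum to exactly the stated expression, and that the growth function of the union of the two relevant predicate classes is still controlled by a single VC dimension $d$ (so that no extra factor of $2$ or $t$ sneaks into the log). A secondary subtlety is making sure the convex-combination step is valid term-by-term even though $\error{i}(h,H_{t-1})$ (intra-domain, on $\gD_i$) and $\error{t}(h,H_{t-1})$ (cross-domain, on $\gD_t$) are genuinely different quantities---the $\alpha_i$ branch keeps the former while the $\beta_i$ branch trades it for the latter plus a divergence penalty, and these must be added with the right indices. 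Once the weighted VC inequality is pinned down, the rest is assembling the pieces and collecting the $\error{i}(H_{t-1})$ coefficients as $\alpha_i+\beta_i$, which is immediate from how those terms arise in \Lmmref{lemma:in-domain} and \Lmmref{lemma:between-domain}.
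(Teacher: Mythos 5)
Your proposal is correct and follows essentially the same route as the paper: the same convex-combination decomposition $\error{i}(h)=(\gamma_i+\alpha_i+\beta_i)\error{i}(h)$ with \Lmmref{lemma:in-domain} applied to the $\alpha_i$ copies and \Lmmref{lemma:between-domain} to the $\beta_i$ copies, followed by a weighted VC/Hoeffding concentration step with per-group weights $1+\sum_i\beta_i$ on $\gX_t$ and $\gamma_i+\alpha_i$ on each $\tilde\gX_i$. The only difference is packaging: the paper formalizes your ``single weighted deviation bound'' as two auxiliary lemmas (one for $\alpha$-weighted domains, one for $\beta$-weighted labeling functions on a single sample set) and applies them jointly, which is exactly the bookkeeping you describe.
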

% \begin{proof}
%     See Appendix.
% \end{proof}

% \begin{align}
%     \erroremp{i}(h) &= 
%     \begin{cases}
%         \frac{1}{N_i} \sum_{\vx\in X_i} \mathbbm{1}_{h(\vx)\neq f_i(\vx)} & \text{if } i = t, \\ \addlinespace[1.7ex]
%         \frac{1}{\tilde{N}_i} \sum_{\vx\in \tilde{X}_i} \mathbbm{1}_{h(\vx)\neq f_i(\vx)} & \text{if } i < t. 
%     \end{cases}
% \end{align}

\Thmref{thm:generalization-bound} offers the opportunity of adaptively adjusting the coefficients ($\alpha_i$, $\beta_i$, and $\gamma_i$) according to the data (current domain data $\gS_t$ and the memory bank $\gM=\{M_i\}_{i=1}^{t-1}$) and history model ($H_{t-1}$) at hand, thereby achieving the tightest bound. For example, when the $\gH\Delta\gH$ divergence between domain $i$ and domain $t$, i.e., $\hdiv(\gD_i, \gD_t)$, is small, minimizing this unified bound (\Eqref{eq:final_bound}) leads to a large coefficient $\beta_i$ and therefore naturally puts on more focus on cross-domain bound in~\Eqref{eq:bound between-domain} which leverages the current domain $t$'s data to estimate the true risk.

% represents a family of bounds
% current methods are included in this family of upper bounds, e.g., ER, DER, iCaRL.
% trading the generalization error with some small constant in ideal situation.
% connect to the next session 
% Note \Thmref{thm:generalization-bound} is not a single bound, but represents a set of different bounds by altering the value of $\{\alpha_i, \beta_i, \gamma_i\}_{i=1}^{t-1}$. Many previous works can be included in it, but the weights don't necessarily satisfy the constraints $\alpha_i + \beta_i + \gamma_i = 1$. 
% For example, in ER~\tocite, suppose the mini-batch size of the current domain is $B$ and the memory bank is balancedly sampled from the past domains and has size $|\gM|$, then the effective weights of the surrogate losses at domain $t$ are: $\{\alpha_i = \beta_i = 0, \gamma_i = \frac{|\gM|}{B(t-1)}\}$; 
% then under the same assumption, in DER++~\tocite, we have $\{\alpha_i = \gamma_i = \frac{|\gM|}{2B(t-1)}, \beta_i = 0\}$; 
% in LwF with distillation weight equal to $1$~\tocite, we have $\{\alpha_i = \gamma_i = 0, \beta_i = \frac{1}{t-1}\}$. 
% For a more detailed summary of how the existing replay-based methods fit in our theoretical framework, see Appendix. 

\textbf{UDIL as a Unified Framework.} 
Interestingly, \Eqref{eq:final_bound} unifies various domain incremental learning methods. \Tabref{tab:unify} shows that different methods are equivalent to fixing the coefficients $\{\alpha_i,\beta_i,\gamma_i\}_{i=1}^{t-1}$ to different values (see {\appref{app:unification} for a detailed discussion}). 
For example, assuming default configurations, 
LwF~\cite{li2017learning} corresponds to \Eqref{eq:final_bound} with \emph{fixed} coefficients $\{\alpha_i = \gamma_i = 0, \beta_i = 1\}$;
ER~\cite{riemer2018learning} corresponds to \Eqref{eq:final_bound} with \emph{fixed} coefficients $\{\alpha_i = \beta_i = 0, \gamma_i = 1\}$, and 
DER++~\cite{buzzega2020dark} corresponds to \Eqref{eq:final_bound} with \emph{fixed} coefficients $\{\alpha_i = \gamma_i = 0.5, \beta_i = 0\}$, under certain achievable conditions.
Inspired by this unification, our UDIL adaptively adjusts these coefficients to search for the tightest bound in the range $[0,1]$ when each domain arrives during domain incremental learning, thereby improving performance. 
\crlref{cor:tightest-bound} below shows that such \emph{adaptive} bound is always tighter, or at least as tight as, any bounds with \emph{fixed} coefficients.

% Apart from the inclusiveness of \Thmref{thm:generalization-bound}, the following proposition shows that our result achieves a tighter bound compared to the trivial ERM bound.

% \begin{corollary}\label{cor:tightest-bound}
%     Denote the upper bound in \Thmref{thm:generalization-bound} as $g(h, H_{t-1}, \Omega)$ where $\Omega = \{\alpha_i, \beta_i, \gamma_i\}_{i=1}^{t-1}$ is the set of coefficients; denote the trivial ERM bound for domain incremental learning in \Lmmref{lem:trivial-dil} as $C(h)$. Then the minimal upper bound w/ surrogate losses is always less or equal to the trivial ERM bound, i.e.,
%     \begin{align}
%         \min_{\Omega} g(h, H_{t-1}, \Omega) \leq C(h), \quad \forall h, H_{t-1} \in \gH.
%     \end{align}
% \end{corollary}
\begin{corollary}\label{cor:tightest-bound}
    % Denote the upper bound in \Thmref{thm:generalization-bound} as $g(h, H_{t-1}, \Omega)$ where $\Omega = \{\alpha_i, \beta_i, \gamma_i\}_{i=1}^{t-1}$ is the set of coefficients. 
    For any bound $g(h, H_{t-1}, \Omega_{\text{fixed}})$ (defined in \Eqref{eq:final_bound}) with \emph{fixed} coefficients $\Omega_{\text{fixed}}$, e.g., $\Omega_{\text{fixed}}=\Omega_{\text{ER}}=\{\alpha_i = \beta_i = 0, \gamma_i = 1\}_{i=1}^{t-1}$ for ER~\cite{riemer2018learning}, we have
    % denote the trivial ERM bound for domain incremental learning in \Lmmref{lem:trivial-dil} as $C(h)$. 
    % Then the minimal upper bound w/ surrogate losses is always less or equal to the trivial ERM bound, i.e.,
    \begin{align}
        \sum\nolimits_{i=1}^t \error{i}(h) \leq 
        \min_{\Omega} g(h, H_{t-1}, \Omega) \leq g(h, H_{t-1}, \Omega_{\text{fixed}}), \quad \forall h, H_{t-1} \in \gH.
    \end{align}
\end{corollary}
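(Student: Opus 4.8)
The plan is to read \crlref{cor:tightest-bound} as a short corollary of \Thmref{thm:generalization-bound} together with elementary optimization bookkeeping. Write the feasible set over which the minimum is taken as $\Delta \triangleq \{\Omega = \{\alpha_i,\beta_i,\gamma_i\}_{i=1}^{t-1} : \alpha_i,\beta_i,\gamma_i \geq 0 \text{ and } \alpha_i+\beta_i+\gamma_i=1 \text{ for all } i\}$, which is exactly the constraint set used in \Thmref{thm:generalization-bound} (the coefficient range $[0,1]$ referred to in the text). First I would dispatch the right-hand inequality: each $\Omega_{\text{fixed}}$ named in the paper lies in $\Delta$ --- for instance $\Omega_{\text{ER}}=\{\alpha_i=\beta_i=0,\gamma_i=1\}$, $\Omega_{\text{LwF}}=\{\alpha_i=\gamma_i=0,\beta_i=1\}$, and $\Omega_{\text{DER++}}=\{\alpha_i=\gamma_i=\tfrac12,\beta_i=0\}$ all satisfy the simplex constraint coordinatewise --- so $\min_{\Omega\in\Delta} g(h,H_{t-1},\Omega) \leq g(h,H_{t-1},\Omega_{\text{fixed}})$ is immediate. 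I would also note that $\Omega \mapsto g(h,H_{t-1},\Omega)$ is continuous and $\Delta$ is compact, so the infimum is attained and the minimum is well defined.

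The substantive half is the left-hand inequality. \Thmref{thm:generalization-bound} gives, for each fixed $\Omega\in\Delta$, that with probability at least $1-\delta$ we have $\sum_{i=1}^t \error{i}(h) \leq g(h,H_{t-1},\Omega)$, and the left-hand side does not depend on $\Omega$. The cleanest route to $\sum_{i=1}^t \error{i}(h) \leq \min_{\Omega\in\Delta} g(h,H_{t-1},\Omega)$ is to observe that the only probabilistic ingredient in the proof of \Thmref{thm:generalization-bound} is a uniform-convergence/concentration event controlling the deviations of the empirical quantities $\erroremp{i}(h)$, $\erroremp{i}(h,H_{t-1})$, $\erroremp{t}(h)$, $\erroremp{t}(h,H_{t-1})$ from their means, and that this event makes no reference to $\Omega$; the passage from those deviations to the displayed bound for a given $\Omega$ is the deterministic step of weighting \Lmmref{lem:trivial-dil}, \Lmmref{lemma:in-domain}, and \Lmmref{lemma:between-domain} by $\gamma_i,\alpha_i,\beta_i$ and using $\alpha_i+\beta_i+\gamma_i=1$. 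Conditioning on that single good event therefore yields the bound for every $\Omega\in\Delta$ simultaneously, hence at the attained minimizer. If one prefers to treat \Thmref{thm:generalization-bound} as a black box, the left inequality can instead be read as the guarantee applied to whichever coefficients the algorithm selects during training, which \ours is free to take as the minimizing ones. Either way, chaining with the right inequality and recalling that $h$ and $H_{t-1}$ were arbitrary in $\gH$ completes the proof.

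The main obstacle is exactly this uniform-in-$\Omega$ point: read literally, \Thmref{thm:generalization-bound} is a per-$\Omega$ high-probability statement, and one cannot naively union-bound over the continuum $\Delta$ without affecting the constants, so some care is needed before writing $\sum_{i=1}^t \error{i}(h) \leq \min_{\Omega} g(h,H_{t-1},\Omega)$ as a genuine high-probability bound rather than as an inequality holding for each fixed $\Omega$ separately. I expect to resolve it via the $\Omega$-independence of the theorem's probabilistic core noted above. Everything else --- membership of the listed $\Omega_{\text{fixed}}$ in $\Delta$, compactness of $\Delta$, existence of the minimizer, and transitivity of $\leq$ --- is routine.
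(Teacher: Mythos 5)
Your core argument is the same as the paper's: the paper gives no separate proof of \crlref{cor:tightest-bound}, treating it as immediate from \Thmref{thm:generalization-bound} — the right inequality because every listed $\Omega_{\text{fixed}}$ lies in the simplex over which the minimum is taken, and the left inequality by invoking the theorem at the minimizing coefficients. So in substance you have reproduced the intended argument, and your handling of the right-hand inequality and of existence of the minimizer is fine.

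One caveat on the part you added beyond the paper: your claim that "the only probabilistic ingredient \dots makes no reference to $\Omega$" does not match how \Thmref{thm:generalization-bound} is actually proved. The theorem's deviation term comes from \Lmmref{lemma:weighted-domain-bound} and \Lmmref{lemma:weighted-labeling-bound}, whose good events concern the $\alpha$- and $\beta$-\emph{weighted} empirical processes (the Hoeffding step is applied to random variables of the form $\frac{\alpha_j N_j}{N}\mathbbm{1}_{h(\vx)\neq f_j(\vx)}$), so the high-probability event does depend on $\Omega$; the bound $\sqrt{\sum_j \alpha_j^2/N_j}$ is exactly what you would \emph{not} get from an $\Omega$-free event controlling each domain's deviation separately (that route gives the larger $\sum_j \alpha_j/\sqrt{N_j}$). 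Hence a genuinely uniform-in-$\Omega$ (and in particular data-dependent-$\Omega$) statement needs an extra step, e.g.\ a union/covering argument over the compact simplex of coefficients, costing only logarithmic factors, or one accepts the per-$\Omega$ reading — which is what the paper implicitly does and what your fallback sentence also does. So the corollary stands as the paper intends, but the specific justification you offered for uniformity is not correct as stated.
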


% \shz{TODO: generalize to any given configuration of $\Omega$}
% To prove this, we can simply set $\{\alpha_i=0, \beta_i=0, \gamma_i=1\}$, which will make the trivial bound a special case of ours. 
\crlref{cor:tightest-bound} shows that the unified bound \Eqref{eq:final_bound} with \emph{adaptive} coefficients is always preferable to other bounds with \emph{fixed} coefficients. We therefore use it to develop a better domain incremental learning algorithm in~\secref{sec:method} below. 

% \begin{wraptable}{r}{0.57\linewidth}
% \setlength{\tabcolsep}{2.5pt}
\begin{table*}[!t]
\caption{{\textbf{\ours as a unified framework} for domain incremental learning with memory. 
Three methods~(LwF~\cite{li2017learning}, ER~\cite{riemer2018learning}, and DER++~\cite{buzzega2020dark}) are by default compatible with DIL setting. 
For the remaining four CIL methods~(iCaRL~\cite{rebuffi2017icarl}, CLS-ER~\cite{arani2022learning}, EMS-ER~\cite{sarfraz2023error}, and BiC~\cite{wu2019large}), we adapt their original training objective to DIL settings before the analysis.
For CLS-ER~\cite{arani2022learning} and EMS-ER~\cite{sarfraz2023error}, 
$\lambda$ and $\lambda^\prime$
are the intensity coefficients of the logits distillation.
For BiC~\cite{wu2019large}, $t$ is the current number of the incremental domain. 
The conditions under which the unification of each method is achieved are provided in detail in \appref{app:unification}. 
% Here, ``$^\dag$'' represents ``the unification is achieved under certain achievable conditions''. 
% $\lambda=-1+\left(\sum_{i=1}^{t-1}\tilde{N}_i\right)/\tilde{N}_i$
}}\label{tab:unify}
\begin{center}
\vskip -0.3cm
\resizebox{1\linewidth}{!}{%
\begin{tabular}{r |c|cccccccccc}

	\toprule 
	%  &
	 & 
    {\ours~(Ours)} & LwF~\cite{li2017learning} & 
    ER~\cite{riemer2018learning} & 
    DER++~\cite{buzzega2020dark} &
    iCaRL~\cite{rebuffi2017icarl} &
    CLS-ER~\cite{arani2022learning} &
    EMS-ER~\cite{sarfraz2023error} &
    BiC~\cite{wu2019large}
    \\
    \midrule
    \midrule
    $\alpha_i$ & $[0,1]$ &$0$    &$0$        &$0.5$ & $1$ &$\nicefrac{\lambda}{(1+\lambda)}$  & $\nicefrac{\lambda^\prime}{(1+\lambda^\prime)}$ & $\nicefrac{1}{(2t-1)}$\\
    $\beta_i$ & $[0,1]$  &$1$    &$0$        &$0$         & $0$   &$0$                        & $0$ & $\nicefrac{(t-1)}{(2t-1)}$\\
    $\gamma_i$  & $[0,1]$ &$0$    &$1$ &$0.5$ & 0 &$\nicefrac{1}{(1+\lambda)}$    &$\nicefrac{1}{(1+\lambda^\prime)}$ & $\nicefrac{t-1}{(2t-1)}$\\
	\bottomrule
	\end{tabular}
	}
\end{center}
% \end{table*}
% \end{wraptable}
\end{table*}

% The corollary above indicates the superiority of the minimal bound in the family, and therefore can be used as a better training objective in domain incremental learning, as we will introduce in the next section.

% 1. 3.4结束有个comment，-> 考虑generalization error, 只能empirically estimate，所以有一个generalization error项.
% 2. 3.5结束可以有一个proposition, 比较3.1
% 3.2 写完直接comment, the following lemma，用当前domain error 和divergence来bound住. 3.3 不要提到task similarity
% lemma 1 & 2 之间，previous domain的数据比较少，下一个lemma使用了current domain来bound. 
% 现在有3个bound，不知道哪个比较好，所以给我们
% 3.4之前提到trivial bound（等于的bound）Below 我们提出了theorem unify了3个bound.
% 3.5 结束之后黑体，Ours as a unified framework...可以放一张小表. 

\section{Method: Adaptively Minimizing the Tightest Bound in UDIL}\label{sec:method}

Although \Thmref{thm:generalization-bound} provides a unified perspective for domain incremental learning, it does not immediately translate to a practical objective function to train a model. It is also unclear what coefficients $\Omega$ for~\Eqref{eq:final_bound} would be the best choice. 
% tell us how to select the final learning objective during training. 
In fact, a \emph{static} and \emph{fixed} setting will not suffice, 
as different problems may involve different sequences of domains with dynamic changes; therefore ideally $\Omega$ should be \emph{dynamic} (e.g., $\alpha_i\neq \alpha_{i+1}$) and \emph{adaptive} (i.e., learnable from data). In this section, we start by mapping the unified bound in~\Eqref{eq:final_bound} to concrete loss terms, discuss how the coefficients $\Omega$ are learned, and then provide a final objective function to learn the optimal model.

% as different sequences of domains of different types of tasks will usually have very different data distributions and domain relations. To tackle this problem, in this section, we propose to adaptively minimize the tightest upper-bound among all the possible ones. 

\subsection{From Theory to Practice: Translating the Bound in~\Eqref{eq:final_bound} to Differentiable Loss Terms}
% In practice, we need differentiable loss functions for model training other than the 0-1 losses in \Thmref{thm:generalization-bound} such as $\error{i}(h)$. It is also unclear how $\gH\Delta\gH$-divergence term in the theorem should be addressed. 
% In this section, we will first describe how the error terms in the theorem correspond to each training loss in our framework, and then introduce a method of estimating and minimizing the $\gH\Delta\gH$-divergence term jointly. % can be deleted to save space if needed

% \hao{This is confusing. The title says loss terms interpretation, but you only interpret ERM terms but not H divergence terms in this section.}

\textbf{(1) ERM Terms.} % in~\Eqref{eq:final_bound}
% \hao{Say something to connect to the def below} 
% As a standard way of training a classification model, 
We use the cross-entropy classification loss in~\Defref{def:classification-loss} below to optimize domain $t$'s ERM term $\erroremp{t}(h)$ and memory replay ERM terms $\{\gamma_i \erroremp{i}(h)\}_{i=1}^{t-1}$ in~\Eqref{eq:final_bound}. 

\begin{definition}[\textbf{Classification Loss}]\label{def:classification-loss}
Let $h: \sR^{n} \rightarrow \mathbb{S}^{K-1}$ be a function that maps the input $\vx \in \sR^{n}$ to the space of $K$-class probability simplex, i.e., $\mathbb{S}^{K-1} \triangleq \{\vz \in \mathbb{R}^K: z_i \geq 0, \sum_{i}z_i = 1\}$; 
let $\gX$ be a collection of samples drawn from an arbitrary data distribution and $f:\sR^{n} \rightarrow [K]$ be the function that maps the input to the true label. 
% let $\gD$ be the data distribution of the input and $f:\sR^{n} \rightarrow [K]$ be the function that maps the input to the true class label. 
The classification loss is defined as the average cross-entropy between the true label $f(\vx)$ and the predicted probability $h(\vx)$, i.e.,
    % \begin{align}
    %     \ell_{\gD}(h_\theta) &\triangleq 
    %     \E_{\vx\sim \gD}\left[ - \sum_{j=1}^{K} \mathbbm{1}_{f(\vx)=j}\cdot \log\left(\left[h_\theta(\vx)\right]_j\right) \right]. 
    % \end{align}
    % \begin{align}
    %     \hat{\ell}_{\gX}(h_\theta) &\triangleq 
    %     \frac{1}{|\gX|} \sum_{\vx \in \gX} \left[ - \sum_{j=1}^{K} \mathbbm{1}_{f(\vx)=j}\cdot \log\left(\left[h_\theta(\vx)\right]_j\right) \right]. 
    % \end{align}
    \begin{align}
        \hat{\ell}_{\gX}(h) &\triangleq 
        \frac{1}{|\gX|} \sum\nolimits_{\vx \in \gX} \left[ - \sum\nolimits_{j=1}^{K} \mathbbm{1}_{f(\vx)=j}\cdot \log\left(\left[h(\vx)\right]_j\right) \right]. \label{eq:classification_loss}
    \end{align}    
\end{definition}
Following~\Defref{def:classification-loss}, we replace $\erroremp{t}(h)$ and $ \erroremp{i}(h)$ in~\Eqref{eq:final_bound} with $\hat{\ell}_{\gX_t}(h)$ and $\hat{\ell}_{\gX_i}(h)$. % for training. 

% $\erroremp{t}(h)$ and memory replay ERM terms $\{\gamma_i \erroremp{i}(h)\}_{i=1}^{t-1}$ in~\Eqref{eq:final_bound}. 

% As for the error terms like $\error{i}(h, H_{t-1})$ that measure the discrepancy between $h$ and history model $H_{t-1}$, we will use the following distillation loss to replace them. 
\textbf{(2) Intra- and Cross-Domain Terms.} % in~\Eqref{eq:final_bound}
We use the distillation loss below to optimize intra-domain~($\{\erroremp{i}(h, H_{t-1})\}_{i=1}^{t-1}$) and cross-domain~($\erroremp{t}(h, H_{t-1})$) model-based error terms in~\Eqref{eq:final_bound}. 
% \hao{Please be clear which terms are replaced}
% defined in \Thmref{thm:generalization-bound}. 

\begin{definition}[\textbf{Distillation Loss}]\label{def:distillation-loss}
Let $h, H_{t-1}: \sR^{n} \rightarrow \mathbb{S}^{K-1}$ both be functions that map the input $\vx \in \mathbb{R}^n$ to the space of $K$-class probability simplex as defined in \Defref{def:classification-loss}; 
let $\gX$ be a collection of samples drawn from an arbitrary data distribution. 
The distillation loss is defined as the average cross-entropy between the target probability $H_{t-1}(\vx)$ and the predicted probability $h(\vx)$, i.e., 
    % \begin{align}
    %     \ell_{\gD}(h_\theta, H_{t-1}) &\triangleq 
    %     \E_{\vx\sim \gD}\left[ - \sum_{j=1}^{K} \left[H_{t-1}(\vx)\right]_j\cdot \log\left(\left[h_\theta(\vx)\right]_j\right) \right].
    % \end{align}
    % \begin{align}
    %     \hat{\ell}_{\gX}(h_\theta, H_{t-1}) &\triangleq 
    %     \frac{1}{|\gX|}\sum_{\vx \in \gX}\left[ -\sum_{j=1}^{K} \left[H_{t-1}(\vx)\right]_j\cdot \log\left(\left[h_\theta(\vx)\right]_j\right) \right].
    % \end{align}
    \begin{align}
        \hat{\ell}_{\gX}(h, H_{t-1}) &\triangleq 
        \frac{1}{|\gX|}\sum\nolimits_{\vx \in \gX}\left[ -\sum\nolimits_{j=1}^{K} \left[H_{t-1}(\vx)\right]_j\cdot \log\left(\left[h(\vx)\right]_j\right) \right]. \label{eq:distillation_loss}
    \end{align}    
\end{definition}
Accordingly, we replace $\erroremp{i}(h, H_{t-1})$ with $\hat{\ell}_{\gX_i}(h, H_{t-1})$ and $\erroremp{t}(h, H_{t-1})$ with $\hat{\ell}_{\gX_t}(h, H_{t-1})$. 

% ($\{\erroremp{i}(h, H_{t-1})\}_{i=1}^{t-1}$) and cross-domain~($\erroremp{t}(h, H_{t-1})$)

% \hao{Mention that $\sum_{i=1}^{t-1} (\alpha_i+\beta_i)\error{i}(H_{t-1})$ is a constant term here}

\textbf{(3) Constant Term.} 
The error term $\sum_{i=1}^{t-1} (\alpha_i+\beta_i)\error{i}(H_{t-1})$ in \Eqref{eq:final_bound} is a constant and contains no trainable parameters (since $H_{t-1}$ is a fixed history model); therefore it does not need a loss term. % to replace it. 
% Estimating this term is also trivial and needs no special consideration: evaluating $H_{t-1}$ on the memory bank will suffice.

\textbf{(4) Divergence Term.} % in~\Eqref{eq:final_bound}
% \hao{Say about $\sum_{i=1}^{t-1}\beta_i \hdiv(\gD_i, \gD_t)$ and move sec 4.3 here}
% \hao{Please mention HdH divergence in the bound first and the intuiation first, and then discuss how to minimize it. Alternatively you can have HdH divergence as a subtitle}
In \Eqref{eq:final_bound}, $\sum_{i=1}^{t-1}\beta_i\hdiv(\gD_i, \gD_t)$ measures the weighted average of the dissimilarity between domain $i$'s and domain $t$'s data distributions. Inspired by existing adversarial domain adaptation methods~\cite{long2018conditional,ganin2016domain,zhao2017learning,dai2019adaptation,xu2022graph,xu2023domain,wang2020continuously}, we can further tighten this divergence term by considering the \textit{embedding distributions} instead of \textit{data distributions} using an learnable encoder. Specifically, 
% As it is an intrinsic property of the domain themselves, one can treat them as a constant and use its estimation to update $\Omega$. 
% However, inspired by previous domain adaptation methods~\tocite, we can further tighten this divergence term by considering the \textit{embedding distributions} instead of \textit{data distributions}. 
given an encoder $e: \mathbb{R}^n \rightarrow \mathbb{R}^m$ and a family of domain discriminators (classifiers) $\gH_{d}$, 
% (e.g., universal approximators like multi-layer perceptrons)
we have the empirical estimate of the divergence term as follows:
\begin{align*}
    \sum_{i=1}^{t-1}\beta_i\hat{d}_{\gH\Delta\gH}(e(\gU_i), e(\gU_{t})) 
    &= 2\sum_{i=1}^{t-1}\beta_i - 2\min_{d\in \gH_d}\sum_{i=1}^{t-1}\beta_i\left[\frac{1}{|\gU_i|}\sum_{\vx \in \gU_i} \mathbbm{1}_{\Delta_i(\vx) \geq 0} + \frac{1}{|\gU_{t}|}\sum_{\vx \in \gU_{t}} \mathbbm{1}_{\Delta_i(\vx) < 0} \right],
\end{align*}
where $\gU_i$ (and $\gU_t$) is a set of samples drawn from domain $\gD_i$ (and $\gD_t$), $d:\mathbb{R}^m\rightarrow \mathbb{S}^{t-1}$ is a 
% multi-class 
domain classifier, and $\Delta_i(\vx) = [d(e(\vx))]_i - [d(e(\vx))]_{t}$ is the difference between the probability of $\vx$ belonging to domain $i$ and domain $t$. 
% We will see in the next section how we can minimize the divergence by adversairial training based on this estimation. 
% Replacing the indicator function with the differentiable cross-entropy loss, we have the following loss that seeks to find the optimal discriminator given encoder $e$:
% \begin{align*}
%     \hat{\ell}_d(d) &\triangleq 
%     \left(\sum_{i=1}^{t-1}\beta_i\right) \frac{1}{N_i} \sum_{\vx\in \gS_t} \left[ -\log\left( \left[d(e(\vx))\right]_t \right) \right] + 
%     \sum_{i=1}^{t-1} \frac{{\beta}_i}{\tilde{N}_i} \sum_{\vx \in \tilde{\gX}_i}\left[ -\log\left( \left[d(e(\vx))\right]_i \right) \right].
% \end{align*}
Replacing the indicator function with the differentiable cross-entropy loss, $\sum_{i=1}^{t-1}\beta_i\hat{d}_{\gH\Delta\gH}(e(\gU_i), e(\gU_{t}))$ above then becomes
% we have the following loss that seeks to find the optimal discriminator given encoder $e$:
\begin{align}
    % \sum_{i=1}^{t-1}\beta_i\hat{d}_{\gH\Delta\gH}(e(\gU_i), e(\gU_{t})) 
    % &= 
    2\sum_{i=1}^{t-1}\beta_i - 2\min_{d\in \gH_d}\sum_{i=1}^{t-1}\beta_i
    \left[
        \frac{1}{\tilde{N}_i} \sum_{\vx\in \gX_i}\left[ -\log\left( \left[d(e(\vx))\right]_i \right) \right] 
        + 
        \frac{1}{N_t} \sum_{\vx\in \gS_t}\left[ -\log\left( \left[d(e(\vx))\right]_t \right) \right] 
    \right].\label{eq:empirial_HdH}
    % \sum_{i=1}^{t-1}\beta_i - 2\min_{d\in \gH_d}\sum_{i=1}^{t-1}\beta_i\left[\frac{1}{|\gU_i|}\sum_{\vx \in \gU_i} \mathbbm{1}_{\Delta_i(\vx) \geq 0} + \frac{1}{|\gU_{t}|}\sum_{\vx \in \gU_{t}} \mathbbm{1}_{\Delta_i(\vx) < 0} \right],
\end{align}
% \begin{align*}
%     \hat{\ell}_d(d) &\triangleq 
%     \left(\sum_{i=1}^{t-1}\beta_i\right) \frac{1}{N_t} \sum_{\vx\in \gS_t} \left[ -\log\left( \left[d(e(\vx))\right]_t \right) \right] + 
%     \sum_{i=1}^{t-1} \frac{{\beta}_i}{\tilde{N}_i} \sum_{\vx \in \tilde{\gX}_i}\left[ -\log\left( \left[d(e(\vx))\right]_i \right) \right].
% \end{align*}

% In the following section, we are going to replace the original error terms in \Thmref{thm:generalization-bound} with the results above and present the training objective of our unified domain incremental learning~(\ours) framework, which includes the three main losses: (i) domain incremental learning loss, (ii) 0-1 loss for estimating and updating $\Omega$, and (iii) minimax game between encoder and discriminator that achieves embedding alignment across domains.
\begin{algorithm}[t]
\caption{Unified Domain Incremental Learning (\ours) for Domain~$t$ Training}\label{alg:udil}
\begin{algorithmic}[1]
\Require history model $H_{t-1}=P_{t-1}\circ E_{t-1}$, current model $h_\theta=p\circ e$, discriminator model $d_\phi$;
\Require dataset from the current domain $\gS_t$, memory bank $\gM=\{M_i\}_{i=1}^{t-1}$;
\Require training steps $S$, batch size $B$, learning rate $\eta$;
\Require domain alignment strength coefficient $\lambda_d$, hyperparameter for generalization effect $C$.
\vspace{0.5em}

\State $h_\theta \gets H_{t-1}$ \Comment{Initialization of the current model.}
\State $\Omega\triangleq\{\alpha_i, \beta_i, \gamma_i\} \gets \{\nicefrac{1}{3}, \nicefrac{1}{3}, \nicefrac{1}{3}\}$, for $\forall i \in [t-1]$ \Comment{Initialization of the replay coefficient $\Omega$.}
\For{$s = 1, \cdots, S$}
    \State $B_t\sim \gS_t; B_i\sim M_i, \forall i \in [t-1]$ \Comment{Sample a mini-batch of data from all domains.}
    % \State Update $d$ by minimizing \Eqref{eq:V_adv} evaluated on $\bigcup_{i=1}^{t}B_t$  \Comment{Discriminator training.} 
    \State $\phi \gets \phi - \eta \cdot \lambda_d \cdot \nabla_{\phi} V_d(d, e, \circhat{\Omega})$ \Comment{Discriminator training with \Eqref{eq:V_adv}.} 
    \State $\Omega \gets \Omega - \eta \cdot \nabla_{\Omega} V_{0\text{-}1}(\circhat{h}, \Omega)$ \Comment{Find a tighter bound with \Eqref{eq:V_01}.}
    \State $\theta \gets \theta - \eta \cdot \nabla_{\theta}(V_l(h_\theta, \circhat{\Omega}) - \lambda_d V_d(d, e, \circhat{\Omega}))$ \Comment{Model training with \Eqref{eq:V_l} and \Eqref{eq:V_adv}.}
\EndFor
\State $H_t \gets h$
\State $\gM \gets \text{BalancedSampling}(\gM, \gS_t)$
\State \Return $H_t$ \Comment{For training on domain~$t+1$.}
\end{algorithmic}
\end{algorithm}

\subsection{Putting Everything Together: UDIL Training Algorithm}\label{sec:algorithm}
\textbf{Objective Function.} 
With these differentiable loss terms above, we can derive an algorithm that learns the optimal model by minimizing the tightest bound in~\Eqref{eq:final_bound}. 
As mentioned above, to achieve a tighter $d_{\gH\Delta\gH}$, we decompose the hypothesis as $h=p\circ e$, where $e: \mathbb{R}^n \rightarrow \mathbb{R}^m$ and $p: \mathbb{R}^m \rightarrow \mathbb{S}^{K-1}$ are the encoder and predictor, respectively. 
To find and to minimize the tightest bound in \Thmref{thm:generalization-bound}, we treat $\Omega=\{\alpha_i, \beta_i, \gamma_i\}_{i=1}^{t-1}$ as learnable parameters and seek to optimize the following objective (we denote as $\circhat{x}=\text{sg}(x)$ the `copy-weights-and-stop-gradients' operation):
% $\valpha=\{\alpha_i\}, \vbeta=\{\beta_i\}, \vgamma=\{\gamma_i\}$
\begin{equationarray}{r@{\qquad}r@{\qquad}l}
    \min_{\{\Omega, h=p \circ e\}} \max_{d}
    & \multicolumn{2}{l}{V_l(h, \circhat{\Omega}) + V_{\text{0-1}}(\circhat{h}, \Omega) - \lambda_d V_d(d, e, \circhat{\Omega})  }\\
    \nonumber \text{s.t.} & \alpha_i + \beta_i + \gamma_i = 1, &\forall i \in \{1,2,\dots,t-1\}\\
    \nonumber & \alpha_i, \beta_i, \gamma_i \geq 0, &\forall i \in \{1,2,\dots,t-1\}
\end{equationarray}
% \begin{equationarray}{r@{\qquad}r@{\qquad}l}
%     \min_{\{\Omega, p, e\}} \max_{d}
%     & \multicolumn{2}{l}{V_l(h=p\circ e, \circhat{\Omega}) + V_{\text{0-1}}(\circhat{h}=\circhat{p}\circ \circhat{e}, \Omega) - \lambda_d V_d(d, e, \circhat{\Omega})  }\\
%     \nonumber \text{s.t.} & \alpha_i + \beta_i + \gamma_i = 1, &\forall i \in [t-1]\\
%     \nonumber & \alpha_i, \beta_i, \gamma_i \geq 0, &\forall i \in [t-1]
% \end{equationarray}
% where $V_l$ and $V_{\text{0-1}}$ are the domain incremental training loss and 0-1 loss for estimating and minimizing $\Omega$; $V_d$ is value function for the minimax game between the encoder $e$ and the discriminator $d$, and $\lambda_d$ is a coefficient introduced to control the intensity of $V_d$.
\textbf{Details of $V_l$, $V_{0\text{-}1}$, and $V_d$.} 
$V_l$ is the loss for \textbf{learning the model $h$}, where the terms $\hat{\ell}_{\cdot}(\cdot)$ are \emph{differentiable cross-entropy losses} as defined in~\Eqref{eq:classification_loss} and~\Eqref{eq:distillation_loss}:
\begin{align}
    V_l(h, \circhat{\Omega}) 
    % &= \frac{1}{N_t}\sum_{(x,y)\in (X_t, Y_t)} \left[ \KL(y\|(e\circ p)(x)) + \left(\sum_{i=1}^{t-1}\dot{\beta}_i\right)\KL(H_{t-1}(x)\|(e\circ p)(x) ) \right]\\
    % &+ \sum_{i=1}^{t-1} \left[ \frac{1}{\tilde{N}_i} \sum_{(x,y)\in (\tilde{X}_i, \tilde{Y}_i)} \left[\dot{\gamma}_i \KL(y\|(e\circ p)(x)) + \dot{\alpha}_i \KL(H_{t-1}(x)\|(e\circ p)(x)) \right] \right]\\
    % &= \E_{x\sim \gD_t}\left[ \KL(f_t(x)\|h(x)) + \left(\sum_{i=1}^{t-1}\dot{\beta}_i\right)\KL(H_{t-1}(x)\|h(x) ) \right]\\
    % &+ \sum_{i=1}^{t-1} \E_{x\sim \gD_i} \left[\dot{\gamma}_i \KL(f_i(x)\|h(x)) + \dot{\alpha}_i \KL(H_{t-1}(x)\|h(x)) \right], \\
    &= \sum\nolimits_{i=1}^{t-1}\left[ \circhat{\gamma_i} \hat{\ell}_{\gX_i}(h) + \circhat{\alpha_i} \hat{\ell}_{\gX_i}(h, H_{t-1}) \right] + \hat{\ell}_{\gS_t}(h) + (\sum\nolimits_{i=1}^{t-1}\circhat{\beta_i}) \hat{\ell}_{\gS_t}(h, H_{t-1}). \label{eq:V_l}
\end{align}

$V_{0\text{-}1}$ is the loss for \textbf{finding the optimal coefficient set $\Omega$}. Its loss terms use \Defref{def:erm} and \Eqref{eq:empirial_HdH} to estimate ERM terms and $\gH\Delta\gH$-divergence, respectively:
\begin{align}
    \nonumber V_{\text{0-1}}(\circhat{h}, \Omega) 
    &= \sum\nolimits_{i=1}^{t-1}\left[\gamma_i \erroremp{i}(\circhat{h}) + \alpha_i\erroremp{i}(\circhat{h}, H_{t-1})\right] + (\sum\nolimits_{i=1}^{t-1}\beta_i)\erroremp{t}(\circhat{h}, H_{t-1})\\
    \nonumber &+ \frac{1}{2} \sum\nolimits_{i=1}^{t-1}\beta_i \hat{d}_{\gH\Delta\gH}\left(\circhat{e}(\gX_i), \circhat{e}(\gS_t)\right) + \sum\nolimits_{i=1}^{t-1} (\alpha_i+\beta_i)\erroremp{i}(H_{t-1}) \\
    &+ C \cdot \sqrt{\left(\frac{(1+\sum\nolimits_{i=1}^{t-1}\beta_i)^2}{N_t} + \sum\nolimits_{i=1}^{t-1}\frac{(\gamma_i+\alpha_i)^2}{\tilde{N}_i}\right)}.\label{eq:V_01}
\end{align}
In~\Eqref{eq:V_01}, $\hat{\epsilon}_{\cdot}(\cdot)$  uses \emph{discrete 0-1 loss}, which is different from \Eqref{eq:V_l}, and a hyper-parameter  $C = \sqrt{8d\log\left(2eN/{d}\right)+8\log\left({2}/{\delta}\right)}$ is introduced to model the combined influence of $\gH$'s VC-dimension and $\delta$. 

% \textbf{$V_d$ the minimax game between $e$ and $d$}.
$V_d$ follows \Eqref{eq:empirial_HdH} to \textbf{minimize the divergence between different domains' embedding distributions} (i.e., aligning domains) by the minimax game between $e$ and $d$ with the value function:
\begin{align}
    V_d(d, e, \circhat{\Omega}) 
    &= \left(\sum_{i=1}^{t-1}\circhat{\beta}_i\right) \frac{1}{N_i} \sum_{\vx\in \gS_t} \left[ -\log\left( \left[d(e(\vx))\right]_t \right) \right] + 
    \sum_{i=1}^{t-1} \frac{{\circhat{\beta}}_i}{\tilde{N}_i} \sum_{\vx \in \tilde{\gX}_i}\left[ -\log\left( \left[d(e(\vx))\right]_i \right) \right]. \label{eq:V_adv}
\end{align}
Here in \Eqref{eq:V_adv}, if an optimal $d^*$ and a fixed $\Omega$ is given, maximizing $V_d(d^*, e, \Omega)$ with respect to the encoder $e$ is equivalent to minimizing the weighted sum of the divergence $\sum_{i=1}^{t-1}\beta_i \hdiv(e(\gD_i), e(\gD_t))$. This result indicates that the divergence between two domains' \textit{embedding distributions} can be actually minimized. 
% Although it has a subtle difference from the divergence evaluated on \textit{data distributions} in \Thmref{thm:generalization-bound}, this is not a problem since many previous work in domain adaptation has shown that the theory still applies~\tocite. 
Intuitively this minimax game learns an encoder $e$ that aligns the embedding distributions of different domains so that their domain IDs can not be predicted (distinguished) by a powerful discriminator given an embedding $e(\vx)$. 
% The practical effect of this minimax game is forcing the encoder $e$ to align the embedding space across different data distributions so that their domain ids can not be distinguished by a powerful discriminator. 
{\Algref{alg:udil} below outlines how UDIL minimizes the tightest bound. Please refer to \appref{app:implementation} for more implementation details, including a model diagram in \Figref{fig:udil}.}

\section{Experiments}
In this section, we compare \ours with existing methods on both synthetic and real-world datasets. 
% To verify the effectiveness of our \ours, we evaluate and compare it with multiple baselines on one toy dataset and three real-world datasets. %These empirical experiments show the feasibility of the key idea of our framework: tightest bound minimization. %\textbf{See the Supplement for the details of training process and performance report.}
% ~(High-Dimensional Balls on Unit Sphere)
% ~(Permutation \& Rotation MNIST)~(Sequential CORe-50)

\begin{figure*}[t]
	\begin{center}
		\includegraphics[width=1\textwidth]{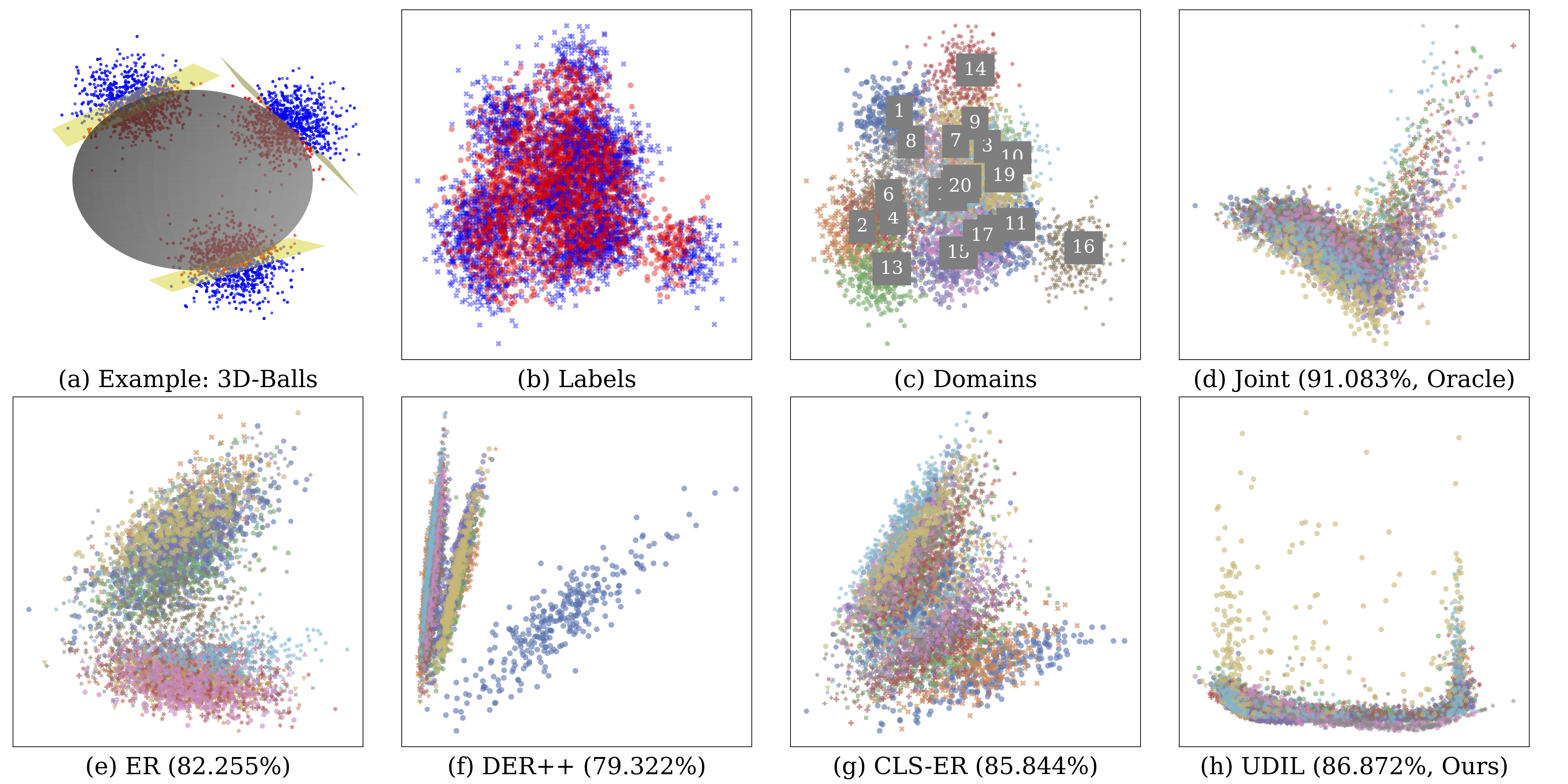}
	\end{center}
	\vskip -0.3cm
	\caption{
        % {An overview of the synthetic dataset \emph{HD-Balls, and embedding distribution visualization of baselines and \ours}.}} 
        % \textbf{(a-c)}: visualization of \emph{HD-Balls} in 3D space with 3 domains~
        Results on \emph{HD-Balls}. In (a-b), data is colored according to labels; in (c-h), data is colored according to domain ID. All data is plotted after PCA~\cite{PRML}. 
        \textbf{(a)} Simplified \emph{HD-Balls} dataset with 3 domains in the 3D space (for visualization purposes only). 
        \textbf{(b-c)} Embeddings of \emph{HD-Balls}'s raw data colored by labels and domain ID. 
        \textbf{(d-h)} Accuracy and embeddings learned by Joint (oracle), \ours, and three best baselines~(more in \appref{app:baselines}). Joint, as the \emph{oracle}, naturally aligns different domains, and \ours outperforms all baselines in terms of embedding alignment and accuracy. 
        % \textbf{(d-h)}: visualization of embedding distributions produced by Oracle~(Joint), \ours, and three baselines~(ER~\cite{riemer2018learning}, DER++~\cite{buzzega2020dark}, and CLS-ER~\cite{arani2022learning}), with average accuracies annotated alongside. Our method \ours achieves the highest performance and the best embedding alignment effect in the same way as the oracle model Joint-training. 
    }
	\label{fig:hd-balls}
\end{figure*}

\subsection{Baselines and Implementation Details}
We compare \ours with the state-of-the-art continual learning methods that are either specifically designed for domain incremental learning or can be easily adapted to the domain incremental learning setting. 
{For fair comparison, we do not consider methods that leverage large-scale pre-training or prompt-tuning~\cite{wang2022learning,wang2022dualprompt,li2023steering,thengane2022clip}.} % can delete this to save space if needed
Exemplar-free baselines include online Elastic Weight Consolidation~(\textbf{oEWC})~\cite{schwarz2018progress}, Synaptic Intelligence~(\textbf{SI})~\cite{zenke2017continual}, and Learning without Forgetting~(\textbf{LwF})~\cite{li2017learning}. 
% As they are constrained under stricter condition, we will see they cannot achieve comparable result when the domain distributions are intricate. % can delete this to save space if needed
Memory-based domain incremental learning baselines include Gradient Episodic Memory~(\textbf{GEM})~\cite{lopez2017gradient}, Averaged Gradient Episodic Memory~(\textbf{A-GEM})~\cite{chaudhry2018efficient}, Experience Replay~(\textbf{ER})~\cite{riemer2018learning}, Dark Experience Replay~(\textbf{DER++})~\cite{buzzega2020dark}, and two recent methods, Complementary Learning System based Experience Replay~(\textbf{CLS-ER})~\cite{arani2022learning} and Error Senesitivity Modulation based Experience Replay~(\textbf{ESM-ER})~\cite{sarfraz2023error}~{(see \appref{app:baselines} for more detailed introduction to the baseline methods above).}
In addition, we implement the fine-tuning (\textbf{Fine-tune})~\cite{li2017learning} and joint-training (\textbf{Joint}) as the performance lower bound and upper bound (Oracle). 

We train all models using three different random seeds and report the mean and standard deviation. 
{All methods are implemented with PyTorch~\cite{paszke2019pytorch}, based on the mammoth code base~\cite{boschini2022class,buzzega2020dark}, and run on a single NVIDIA RTX A5000 GPU. % can delete this to save space if needed
For fair comparison, within the same dataset, all methods adopt the same neural network architecture, and the memory sampling strategy is set to random balanced sampling (see \appref{app:memory-maintain} and \appref{app:training} for more implementation details on training). }
We evaluate all methods with standard continual learning metrics including `average accuracy', `forgetting', and `forward transfer' {(see \appref{app:evaluation} for detailed definitions).} 
% For fair comparison, we change the memory sampling strategy from resovior. 
% Please refer to the Supplement for more implementation details on training.

\begin{table*}[t]
\caption{
    \textbf{{Performances (\%) on \emph{HD-Balls}, \emph{P-MNIST}, and \emph{R-MNIST}.}} We use two metrics, Average Accuracy and Forgetting, to evaluate the methods' effectiveness. ``$\uparrow$'' and ``$\downarrow$'' mean higher and lower numbers are better, respectively. 
    We use \textbf{boldface} and \underline{underlining} to denote the best and the second-best performance, respectively. We use ``-'' to denote ``not appliable''. 
}
\begin{center}
\resizebox{1\linewidth}{!}{%
\begin{tabular}{l @{\hskip +0.1cm}c cccc ccc}
	\toprule 
	\multirow{2}{*}{\textbf{Method}} & \multirow{2}{*}{\textbf{Buffer}} & \multicolumn{2}{c}{\textbf{\emph{HD-Balls}}} & \multicolumn{2}{c}{\textbf{\emph{P-MNIST}}} & \multicolumn{2}{c}{\textbf{\emph{R-MNIST}}}\\
	%  &
	& & Avg. Acc \scriptsize{($\uparrow$)} & Forgetting \scriptsize{($\downarrow$)} & Avg. Acc \scriptsize{($\uparrow$)} & Forgetting \scriptsize{($\downarrow$)} & Avg. Acc \scriptsize{($\uparrow$)} & Forgetting \scriptsize{($\downarrow$)} \\
    \midrule
    \midrule
    % Fine-tune & -& - & - & - & - & 52.319\scriptsize{$\pm$0.030} & - & -\\
    Fine-tune & - & 52.319\scriptsize{$\pm$0.024} & 43.520\scriptsize{$\pm$0.079} & 70.102\scriptsize{$\pm$2.945} & 27.522\scriptsize{$\pm$3.042} & 47.803\scriptsize{$\pm$1.703} & 52.281\scriptsize{$\pm$1.797} \\
    \midrule
    oEWC~\cite{schwarz2018progress} & - &  54.131\scriptsize{$\pm$0.193} & 39.743\scriptsize{$\pm$1.388} & 78.476\scriptsize{$\pm$1.223} & 18.068\scriptsize{$\pm$1.321} & 48.203\scriptsize{$\pm$0.827} & 51.181\scriptsize{$\pm$0.867} \\
    SI~\cite{zenke2017continual} & - & 52.303\scriptsize{$\pm$0.037} & 43.175\scriptsize{$\pm$0.041} & 79.045\scriptsize{$\pm$1.357} & 17.409\scriptsize{$\pm$1.446} & 48.251\scriptsize{$\pm$1.381} & 51.053\scriptsize{$\pm$1.507} \\
    LwF~\cite{li2017learning} & - & 51.523\scriptsize{$\pm$0.065} & 25.155\scriptsize{$\pm$0.264} & 73.545\scriptsize{$\pm$2.646} & 24.556\scriptsize{$\pm$2.789} & 54.709\scriptsize{$\pm$0.515} & 45.473\scriptsize{$\pm$0.565} \\
	\midrule
    % Gdumb~\cite{prabhu2020gdumb} & \multirow{8}{*}{400} & - & - & - & - & 61.942\scriptsize{$\pm$0.286} & - & -\\
    GEM~\cite{lopez2017gradient} & \multirow{7}{*}{400} & 69.747\scriptsize{$\pm$0.656} & 13.591\scriptsize{$\pm$0.779} & 89.097\scriptsize{$\pm$0.149} & 6.975\scriptsize{$\pm$0.167} & 76.619\scriptsize{$\pm$0.581} & 21.289\scriptsize{$\pm$0.579} \\
    A-GEM~\cite{chaudhry2018efficient} & & 62.777\scriptsize{$\pm$0.295} & 12.878\scriptsize{$\pm$1.588} & 87.560\scriptsize{$\pm$0.087} & 8.577\scriptsize{$\pm$0.053} & 59.654\scriptsize{$\pm$0.122} & 39.196\scriptsize{$\pm$0.171} \\
    ER~\cite{riemer2018learning} & & 82.255\scriptsize{$\pm$1.552} & 9.524\scriptsize{$\pm$1.655} & 88.339\scriptsize{$\pm$0.044} & 7.180\scriptsize{$\pm$0.029} & 76.794\scriptsize{$\pm$0.696} & 20.696\scriptsize{$\pm$0.744} \\
    DER++~\cite{buzzega2020dark} & & 79.332\scriptsize{$\pm$1.347} & 13.762\scriptsize{$\pm$1.514} & \textbf{92.950\scriptsize{$\pm$0.361}} & \underline{3.378\scriptsize{$\pm$0.245}} & \underline{84.258\scriptsize{$\pm$0.544}} & \underline{13.692\scriptsize{$\pm$0.560}}\\
    CLS-ER~\cite{arani2022learning} & &  \underline{85.844\scriptsize{$\pm$0.165}} & \underline{5.297\scriptsize{$\pm$0.281}} & 91.598\scriptsize{$\pm$0.117} & 3.795\scriptsize{$\pm$0.144} & 81.771\scriptsize{$\pm$0.354} & 15.455\scriptsize{$\pm$0.356} \\
    ESM-ER~\cite{sarfraz2023error} & & 71.995\scriptsize{$\pm$3.833} & 13.245\scriptsize{$\pm$5.397} & 89.829\scriptsize{$\pm$0.698} & 6.888\scriptsize{$\pm$0.738} & 82.192\scriptsize{$\pm$0.164} & 16.195\scriptsize{$\pm$0.150} \\
    \ours~(Ours) & &  \textbf{86.872\scriptsize{$\pm$0.195}} & \textbf{3.428\scriptsize{$\pm$0.359}} & \underline{92.666\scriptsize{$\pm$0.108}} & \textbf{2.853\scriptsize{$\pm$0.107}} & \textbf{86.635\scriptsize{$\pm$0.686}} & \textbf{8.506\scriptsize{$\pm$1.181}} \\
    \midrule
    Joint (Oracle) & $\infty$ & 91.083\scriptsize{$\pm$0.332} & - & 96.368\scriptsize{$\pm$0.042} & - & 97.150\scriptsize{$\pm$0.036} & -\\
	\bottomrule
	\end{tabular}
	}
\end{center}
\label{tab:three-data}
\end{table*} 

\subsection{Toy Dataset: High-Dimensional Balls}

To gain insight into \ours, we start with a toy dataset, high dimensional balls on a sphere~(referred to as \textit{HD-Balls} below), for domain incremental learning. \emph{HD-Balls} includes 20 domains, each containing 2,000 data points sampled from a Gaussian distribution $\gN(\vmu, 0.2^2\mI)$. The mean $\vmu$ is randomly sampled from a 100-dimensional unit sphere, i.e., $\{\vmu\in \mathbb{R}^{100}: \|\vmu\|_2=1\}$; the covariance matrix $\Sigma$ is fixed. 
In \emph{HD-Balls}, each domain represents a binary classification task, where the decision boundary is the hyperplane that passes the center $\vmu$ and is tangent to the unit sphere. 
\Figref{fig:hd-balls}(a-c) shows some visualization on \emph{HD-Balls}.
% shows visual demonstrations of the HD-Ball, including a three-dimensional case, PCA~\tocite projected data distribution of 20 domains from the 100-dimensional space, and the domain relations using MDS~\tocite. 

Column 3 and 4 of \Tabref{tab:three-data} compare the performance of our \ours with different baselines. We can see that \ours achieves the highest final average accuracy and the lowest forgetting.  \figref{fig:hd-balls}(d-h) shows the embedding distributions (i.e., $e(\vx)$) for different methods. We can see better embedding alignment across domains generally leads to better performance. Specifically, Joint, as the oracle, naturally aligns different domains' embedding distributions and achieves an accuracy upper bound of $91.083\%$. Similarly, our \ours can adaptively adjust the coefficients of different loss terms, including~\Eqref{eq:empirial_HdH}, successfully align different domains, and thereby outperform all baselines. 

% \shz{As shown by comparing different methods' embedding distributions in \Figref{fig:hd-balls}(d-h), we argue that a better embedding alignment across domains will generally improve the performance for DIL methods: Joint-training, as the oracle, naturally aligns different domains in the embedding space when the data volumn is sufficiently large, achieving the best performance; our \ours, similarly, by adaptively adjusting the coefficient of each replay loss and minimizing the divergence between among different domains' embedding distributions, learns an embedding space which is more generalizable to all domains, leading to better performance.} 
% \shz{DELETED: This is achieved by a better embedding alignment across multiple domains, as suggested in \Figref{fig:hd-balls}. By adaptively adjusting the coefficient of each replay loss and minimizing the divergence between among different domains' embedding distributions, our method \ours learns an embedding space which is more generalizable to all domains, leading to better performance.}

\subsection{Permutation MNIST}

We further evaluate our method on the Permutation MNIST~(\textit{P-MNIST}) dataset~\cite{lecun2010mnist}. \emph{P-MNIST} includes 20 sequential domains, with each domain constructed by applying a fixed random permutation to the pixels in the images. 
% Although it sounds like a challenging task, the actual result achieved by the domain incremental learning models is quite impressive, as shown in \Tabref{tab:three-data}. 
Column 5 and 6 of \Tabref{tab:three-data} show the results of different methods. 
Our \ours achieves the second best~($92.666\%$) final average accuracy, which is only $0.284\%$ lower than the best baseline DER++. We believe this is because 
% The reasons for that might be twofold: 
(i)~there is not much space for improvement as the gap between joint-training~(oracle) and most methods are small; (ii)~under the permutation, different domains' data distributions are too distinct from each other, lacking the meaningful relations among the domains, and therefore weakens the effect of embedding alignment in our method. 
Nevertheless, \ours still achieves best performance in terms of forgetting~($2.853\%$). 
% However, compared with its counterparts, \ours achieves the lowest forgetting~(2.853\%). 
This is mainly because our unified UDIL framework (i) is directly derived from the total loss of \emph{all} domains, and (ii) uses adaptive coefficients to achieve a more balanced trade-off between learning the current domain and avoiding forgetting previous domains. 
% compared to other baselines which adopt a fixed constant, 
% current domain and past domain learning. 
% In the next experiment, we will see how our \ours performs better in the problem where domains share certain similarities. 

\begin{table*}[t]
\caption{\textbf{{Performances (\%) evaluated on \emph{Seq-CORe50}.}} We use three metrics, Average Accuracy, Forgetting, and Forward Transfer, to evaluate the methods' effectiveness. ``$\uparrow$'' and ``$\downarrow$'' mean higher and lower numbers are better, respectively. We use \textbf{boldface} and \underline{underlining} to denote the best and the second-best performance, respectively. We use ``-'' to denote ``not appliable'' and ``$\star$'' to denote out-of-memory (\textit{OOM}) error when running the experiments. 
}
\begin{center}
\resizebox{1\linewidth}{!}{%
\begin{tabular}{l @{\hskip +0.1cm}c cccc ccc}
	\toprule 
	\multirow{2}{*}{\textbf{Method}} & \multirow{2}{*}{\textbf{Buffer}} & \multicolumn{1}{c}{$\gD_{1:3}$} & \multicolumn{1}{c}{$\gD_{4:6}$} & \multicolumn{1}{c}{$\gD_{7:9}$} & \multicolumn{1}{c}{$\gD_{10:11}$} & \multicolumn{3}{c}{\textbf{Overall}}\\
    \cline{3-6}
	%  &
	& &  \multicolumn{4}{c}{Avg. Acc \scriptsize{($\uparrow$)}} & Avg. Acc \scriptsize{($\uparrow$)} & Forgetting \scriptsize{($\downarrow$)} & Fwd. Transfer \scriptsize{($\uparrow$)} \\
    \midrule
    \midrule
    % Fine-tune & -& - & - & - & - & 52.319\scriptsize{$\pm$0.030} & - & -\\
    Fine-tune & - & 73.707\scriptsize{$\pm$13.144} & 34.551\scriptsize{$\pm$1.254} & 29.406\scriptsize{$\pm$2.579} & 28.689\scriptsize{$\pm$3.144} & 31.832\scriptsize{$\pm$1.034} & 73.296\scriptsize{$\pm$1.399} & 15.153\scriptsize{$\pm$0.255} \\
    \midrule
    oEWC~\cite{schwarz2018progress} & - & 74.567\scriptsize{$\pm$13.360} & 35.915\scriptsize{$\pm$0.260} & 30.174\scriptsize{$\pm$3.195} & 28.291\scriptsize{$\pm$2.522} & 30.813\scriptsize{$\pm$1.154} & 74.563\scriptsize{$\pm$0.937} & 15.041\scriptsize{$\pm$0.249}\\
    SI~\cite{zenke2017continual} & - & 74.661\scriptsize{$\pm$14.162} & 34.345\scriptsize{$\pm$1.001} & 30.127\scriptsize{$\pm$2.971} & 28.839\scriptsize{$\pm$3.631} & 32.469\scriptsize{$\pm$1.315} & 73.144\scriptsize{$\pm$1.588} & 14.837\scriptsize{$\pm$1.005}\\
    LwF~\cite{li2017learning} & - & 80.383\scriptsize{$\pm$10.190} & 28.357\scriptsize{$\pm$1.143} & 31.386\scriptsize{$\pm$0.787} & 28.711\scriptsize{$\pm$2.981} & 31.692\scriptsize{$\pm$0.768} & 72.990\scriptsize{$\pm$1.350} & 15.356\scriptsize{$\pm$0.750}\\
	\midrule
    % Gdumb~\cite{prabhu2020gdumb} & \multirow{8}{*}{500} & - & - & - & - & 15.891\scriptsize{$\pm$0.952} & - & -\\
    GEM~\cite{lopez2017gradient} & \multirow{7}{*}{500} & 79.852\scriptsize{$\pm$6.864} & 38.961\scriptsize{$\pm$1.718} & 39.258\scriptsize{$\pm$2.614} & 36.859\scriptsize{$\pm$0.842} & 37.701\scriptsize{$\pm$0.273} & 22.724\scriptsize{$\pm$1.554} & 19.030\scriptsize{$\pm$0.936}\\
    A-GEM~\cite{chaudhry2018efficient} & & 80.348\scriptsize{$\pm$9.394} & 41.472\scriptsize{$\pm$3.394} & 43.213\scriptsize{$\pm$1.542} & 39.181\scriptsize{$\pm$3.999} & 43.181\scriptsize{$\pm$2.025} & 33.775\scriptsize{$\pm$3.003} & 19.033\scriptsize{$\pm$0.792}\\
    ER~\cite{riemer2018learning} & & 90.838\scriptsize{$\pm$2.177} & 79.343\scriptsize{$\pm$2.699} & 68.151\scriptsize{$\pm$0.226} & 65.034\scriptsize{$\pm$1.571} & 66.605\scriptsize{$\pm$0.214} & 32.750\scriptsize{$\pm$0.455} & 21.735\scriptsize{$\pm$0.802}\\
    DER++~\cite{buzzega2020dark} & & \underline{92.444\scriptsize{$\pm$1.764}} & \underline{88.652\scriptsize{$\pm$1.854}} & \underline{80.391\scriptsize{$\pm$0.107}} & \underline{78.038\scriptsize{$\pm$0.591}} & \underline{78.629\scriptsize{$\pm$0.753}} & \underline{21.910\scriptsize{$\pm$1.094}} & \underline{22.488\scriptsize{$\pm$1.049}}\\
    CLS-ER~\cite{arani2022learning} & & 89.834\scriptsize{$\pm$1.323} & 78.909\scriptsize{$\pm$1.724} & 70.591\scriptsize{$\pm$0.322}& $\star$ & $\star$ & $\star$ & $\star$\\
    ESM-ER~\cite{sarfraz2023error} & & 84.905\scriptsize{$\pm$6.471} & 51.905\scriptsize{$\pm$3.257} & 53.815\scriptsize{$\pm$1.770} & 50.178\scriptsize{$\pm$2.574} & 52.751\scriptsize{$\pm$1.296} & 25.444\scriptsize{$\pm$0.580} & 21.435\scriptsize{$\pm$1.018}\\
    \ours~(Ours) & & \textbf{98.152\scriptsize{$\pm$1.665}} & \textbf{89.814\scriptsize{$\pm$2.302}} & \textbf{83.052\scriptsize{$\pm$0.151}} & \textbf{81.547\scriptsize{$\pm$0.269}} & \textbf{82.103\scriptsize{$\pm$0.279}} & \textbf{19.589\scriptsize{$\pm$0.303}} & \textbf{31.215\scriptsize{$\pm$0.831}}\\
    \midrule
    Joint (Oracle) & $\infty$ & - & - & - & - & 99.137\scriptsize{$\pm$0.049} & - & -\\
	\bottomrule
	\end{tabular}
	}
\end{center}
\label{tab:core50}
\end{table*}  
\subsection{Rotating MNIST}

We also evaluate our method on the Rotating MNIST dataset~(\emph{R-MNIST}) containing 20 sequential domains. Different from \emph{P-MNIST} where shift from domain $t$ to domain $t+1$ is abrupt, \emph{R-MNIST}'s domain shift is gradual. Specifically, domain $t$'s images are rotated by an angle randomly sampled from the range $[9^\circ \cdot (t-1),9^\circ \cdot t)$. 
Column 7 and 8 of \Tabref{tab:three-data} show that our \ours achieves the highest average accuracy~($86.635\%$) and the lowest forgetting~($8.506\%$) simultaneously, significantly improving on the best baseline DER++ (average accuracy of $84.258\%$ and forgetting of $13.692\%$). Interestingly, such improvement is achieved when our UDIL's $\beta_i$ is high, which further verifies that \ours indeed leverages the similarities shared across different domains so that the generalization error is reduced. 

% Same as P-MNIST, Rotating MNIST~(\emph{R-MNIST}) contains 20 subsequent tasks, where different domains are represented by different angles of rotation. In the implementation of previous work~\tocite, the images in R-MNIST are rotated by a fixed angle randomly sampled in the interval of $[0, \pi)$. 
% This setting will result in a series of domains whose form is extremely similar to P-MNIST, as the data distributions in both datasets change in an abrupt and isolated way, which might not precisely reflect how the data distribution evolves along the time. 

% To simulate gradual domain shifts as such, we divide $[0, \pi)$ into 20 sub-intervals to represent each domain. The examples in each domain are uniformly sampled from its corresponding interval.

% As we can see in \Tabref{tab:three-data}, our \ours achieves the highest average accuracy~(86.635\%) and the lowest forgetting~(8.506\%) at the same time, which showcases a significant improvement compared to the runner-up DER++'s 84.258 of average accuracy and 13.692 of forgetting. This result is achieved when the $\beta_i$ is high~(see the Supplement for the detail), which further indicates that \ours indeed leverages the similarities shared across the domain so that the generalization error is further reduced.

\subsection{Sequential CORe50}
CORe50~\cite{lomonaco2017new,lomonaco2020rehearsal} is a real-world continual object recognition dataset that contains 50 domestic objects collected from 11 domains~(120,000 images in total). 
Prior work has used CORe50 for settings such as domain generalization (e.g., train a model on only 8 domains and test it on 3 domains), which is different from our domain-incremental learning setting. 
To focus the evaluation on alleviating catastrophic forgetting, we retain $20\%$ of the data as the test set and continually train the model on these 11 domains; we therefore call this dataset variant \textit{Seq-CORe50}. 
\Tabref{tab:core50} shows that our \ours outperforms all baselines in every aspect on \textit{Seq-CORe50}. 
Besides the average accuracy over all domains, we also report average accuracy over different domain intervals (e.g., $\gD_{1:3}$ denotes average accuracy from domain 1 to domain 3) to show how different model's performance drops over time. The results show that our \ours consistently achieves the highest average accuracy until the end. It is also worth noting that \ours also achieves the best performance on another two metrics, i.e., forgetting and forward transfer.

% Here we present the ``average of the average accuracy'' throughout the whole training process to show how the model's performance gradually deteriorates along the time. As we can see in \Tabref{tab:core50}, during the whole training process, our method maintains the highest average accuracy until the end. \ours also achieves the best performances evaluated on all three metrics including the lowest forgetting and highest forward transfer. 

% In our setting, we use 20\% of all data as the test set and continually train the model on all 11 domains and ~(dubbed as \textit{Seq-CORe50})
% , which is a bit different from the previous setting, where the previous work~\tocite trains the model continually on 8 domains and evaluate it on the remaining 3. 
% However, in this paper, we focus on the general goal of defying catastrophic forgetting, and want to explicitly exclude other factors such as domain generalization that might interfere our study. 
% \Tabref{tab:core50} shows how our \ours outperforms the other baselines in every aspect on this realistic dataset. Here we present the ``average of the average accuracy'' throughout the whole training process to show how the model's performance gradually deteriorates along the time. As we can see in \Tabref{tab:core50}, during the whole training process, our method maintains the highest average accuracy until the end. \ours also achieves the best performances evaluated on all three metrics including the lowest forgetting and highest forward transfer. 

\section{Conclusion}
In this paper, we propose a principled framework, UDIL, for domain incremental learning with memory to unify various existing methods. Our theoretical analysis shows that different existing methods are equivalent to minimizing the same error bound with different \emph{fixed} coefficients. With this unification, our \ours allows \emph{adaptive} coefficients during training, thereby always achieving the tightest bound and improving the performance. Empirical results show that our \ours outperforms the state-of-the-art domain incremental learning methods on both synthetic and real-world datasets. 
% \hao{consider adding one sentence of limitation here, something like: Limitations of this work include XXX, address them would be interesting furture work.} 
% {Limitations of this work include the implicit \emph{i.i.d.} assumption of the exemplars. Explicitly modeling domain distributions in a more sample-efficient way and therefore achieving a tighter generalization bound would be interesting future work.}
{One limitation of this work is the implicit \emph{i.i.d.} exemplar assumption, which may not hold if memory is selected using specific strategies. Addressing this limitation can lead to a more powerful unified framework and algorithms, which would be interesting future work.}

\section*{Acknowledgement}
The authors thank the reviewers/AC for the constructive comments to improve the paper. {HS} and HW are partially supported by NSF Grant IIS-2127918. The views and conclusions contained herein are those of the authors and should not be interpreted as necessarily representing the official policies, either expressed or implied, of the sponsors.

% we start with the all-domain learning object of domain incremental learning and derive a unified generalization bound for domain incremental learning with memory. Besides unifying several existing replay-based methods, our theory naturally leads to an algorithm that minimizes the tightest bound to improve the effectiveness and reliability of domain incremental learning. The theoretical and empirical results achieved by our work demonstrate the promising directions of provide theoretical support for domain incremental learning and even continual learning in general, and also showcase the potential of embedding space alignment in real-world application of continual learning.

% \item We propose a unified framework, dubbed Unified Domain Incremental Learning (\ours), for domain incremental learning with memory to unify various existing methods. 
% \item Our theoretical analysis shows that different existing methods are equivalent to minimizing the same error bound with different \emph{fixed} coefficients. Based on insights from this unification, our \ours allows \emph{adaptive} coefficients during training, thereby always achieving the tightest bound and improving the performance. 
% \item Empirical results show that our \ours outperforms the state-of-the-art domain incremental learning methods on both synthetic and real-world datasets. 

% \begin{table*}[t]

% \clearpage

{
\small
\bibliographystyle{abbrv}
\bibliography{ref}
}

%%%%%%%%%%%%%%%%%%%%%%%%%%%%%%%%%%%%%%%%%%%%%%%%%%%%%%%%%%%%
%%%%%%%%%%%%%%%%%%%%%%%%%%%%%%%%%%%%%%%%%%%%%%%%%%%%%%%%%%%%
%%%%%%%%%%%%%%%%%%%%%%%%%%%%%%%%%%%%%%%%%%%%%%%%%%%%%%%%%%%%
%%%%%%%%%%%%%%%%%%%%%%%%%%%%%%%%%%%%%%%%%%%%%%%%%%%%%%%%%%%%
%%%%%%%%%%%%%%%%%%%%%%%%%%%%%%%%%%%%%%%%%%%%%%%%%%%%%%%%%%%%
% \clearpage
\appendix
% \tableofcontents
% % \input{appendix}
% % \input{appendix/proof}
% \begin{appendices}
%   \chapter{Consectetur adipiscing elit}
%   \chapter{Mauris euismod}
% \end{appendices}

% \renewcommand\thetheorem{\Alph{section}.\arabic{theorem}}
% \setcounter{theorem}{0} % Reset theorem counter

\clearpage
\section*{\LARGE Appendix}
% \addcontentsline{toc}{section}{Appendix}
\markboth{Appendix}{Appendix}
% In \secref{app:proof}, we will cover all the proofs for the lemmas, theorems, and corollaries presented in the main body of our work.
% \secref{app:unification} will discuss when and how existing methods correspond to a special case within our framework. 
% \secref{app:implementation} will provide a more detailed presentation of our final algorithm. This will include algorithmic description and implementation details of our approach, referred to as \ours. We will introduce the basic settings of experiments including the set of evaluation metrics, specific training schemes. 
% And finally in \secref{app:empirical}, we will present more empirical results with varying memory sizes and show more visualization results as well.
% In \secref{app:empirical}, we will begin by introducing the set of evaluation metrics used in continual learning. Then, we will delve into the specific training schemes and present additional empirical results. This will include scenarios where the memory size is larger, as well as additional visualization and discussion.

In \secref{app:proof}, we present the proofs for the lemmas, theorems, and corollaries presented in the main body of our work. \secref{app:unification} discusses the correspondence of existing methods to specific cases within our framework. In \secref{app:implementation}, we provide a detailed presentation of our final algorithm, \ours, including an algorithmic description, a visual diagram, and implementation details. We introduce the experimental settings, including the evaluation metrics and specific training schemes. Finally, in \secref{app:empirical}, we present additional empirical results with varying memory sizes and provide more visualization results.

%%%% My Original Version
% In \secref{app:proof}, we will cover all the proofs for the lemmas, theorems, and the corolaries shown in the main body of our work. 
% In \secref{app:unification}, we will cover when and how existing methods correspond to a special case in our \ours framework. We will also introduce another upper bound that can be more inclusive than our framework, but lacking a useful subsequent optimization method.
% In \secref{app:implementation}, we will introduce our final algorithm in a more detailed way, including visualization, algorithmic description, and the implementation details of \ours. 
% In \secref{app:empirical}, we will first introduce the set of evaluation metrics used in continual learning; then we will introduce the detailed training schemes, and finally show more empirical results, including the situation when the memory size is larger, more visualization and discussion.

\section{Proofs of Lemmas, Theorems, and Corollaries}\label{app:proof}
Before proceeding to prove any lemmas or theorems, we first introduce three crucial additional lemmas that will be utilized in the subsequent sections. Among these, \Lmmref{lemma:weighted-domain-bound} offers a generalization bound for any weighted summation of ERM losses across multiple domains. Furthermore, \Lmmref{lemma:weighted-labeling-bound} provides a generalization bound for a weighted summation of \emph{labeling functions} within a given domain. Lastly, we highlight Lemma~3 in \cite{ben2010theory} as \Lmmref{lmm:ben-david}, which will be used to establish the upper bound for \Lmmref{lemma:between-domain}. 
% Before starting proving any lemmas or theorems, we first introduce two important additional lemmas that will be used throughout the remaining sections, among which \Lmmref{lemma:weighted-domain-bound} provides a generalization bound for any weighted summation of any number of ERM losses on \emph{different domains}, and \Lmmref{lemma:weighted-labeling-bound} provides a generalization bound for weighted summation of \emph{labeling functions} within the same given domain.

\begin{lemma}[\textbf{Generalization Bound of $\alpha$-weighted Domains}]\label{lemma:weighted-domain-bound}
Let $\gH$ be a hypothesis space of VC dimension $d$. 
Assume $N_j$ denotes the number of the samples collected from domain $j$, and $N=\sum_jN_j$ is the total number of the examples collected from all domains. Then for any $\alpha_j>0$ and $\delta\in (0,1)$, with probability at least $1-\delta$:
\begin{align}
    \sum_j\alpha_j\error{j}(h) &\leq \sum_j\alpha_j\erroremp{j}(h) + \sqrt{(\sum_j\frac{\alpha_j^2}{N_j})\left(8d\log\left(\frac{2eN}{d}\right)+8\log\left(\frac{2}{\delta}\right)\right)}.
\end{align}
\end{lemma}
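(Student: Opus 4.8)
\textbf{Proof strategy for Lemma~\ref{lemma:weighted-domain-bound}.}
The plan is to reduce the weighted multi-domain statement to a single application of a uniform convergence (VC) bound on a carefully constructed sample space, which is the standard technique behind analogous results in domain-adaptation theory (e.g.\ \cite{ben2010theory}). First I would introduce the normalized weights $w_j \triangleq \alpha_j / (\sum_k \alpha_k)$ and note that it suffices to control $\sum_j w_j (\error{j}(h) - \erroremp{j}(h))$ uniformly over $h \in \gH$, after which multiplying through by $\sum_k \alpha_k$ recovers the claimed inequality (the factor $\sum_j \alpha_j^2/N_j$ inside the square root is exactly what appears when one tracks the weights through the variance term). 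Concretely, for a fixed $h$ define the random variable obtained by drawing one sample from each domain $j$ (independently), forming the empirical quantity $\sum_j w_j \hat\epsilon^{(1)}_{\gD_j}(h)$; its expectation is $\sum_j w_j \error{j}(h)$.

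The key steps, in order: (i) View $\erroremp{j}(h) = \frac{1}{N_j}\sum_{\ell=1}^{N_j} Z^{(j)}_\ell$ where $Z^{(j)}_\ell = \mathbbm{1}_{h(\vx^{(j)}_\ell) \neq f_j(\vx^{(j)}_\ell)} \in [0,1]$ are independent across $j$ and $\ell$. Then $\sum_j \alpha_j \erroremp{j}(h)$ is a sum of $N$ independent terms, where the term coming from domain $j$ carries coefficient $\alpha_j/N_j$. (ii) Apply a uniform-convergence argument: either invoke a weighted Hoeffding/McDiarmid concentration inequality for $\sup_{h\in\gH}|\sum_j \alpha_j(\error j(h) - \erroremp j(h))|$ combined with a VC growth-function bound on the effective number of behaviors of $\gH$ on the pooled sample of size $N$ (Sauer's lemma gives the $(2eN/d)^d$ factor), or — cleaner — reduce directly to the classical VC bound. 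The bounded-differences constant for changing one sample in domain $j$ is $\alpha_j/N_j$, so McDiarmid yields a tail governed by $\sum_j (\alpha_j/N_j)^2 N_j = \sum_j \alpha_j^2/N_j$; symmetrization and the growth function $\Pi_\gH(2N)\le (2eN/d)^d$ then convert this into the stated $\sqrt{(\sum_j \alpha_j^2/N_j)(8d\log(2eN/d) + 8\log(2/\delta))}$ after the usual union/inversion of the deviation probability $\delta$. (iii) Since the bound holds uniformly over $h\in\gH$, in particular it holds for the learned $h$, giving the result.

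The main obstacle I anticipate is getting the constants and the form of the variance term exactly right while keeping the VC union bound honest: the pooled sample has size $N$ but the weights are heterogeneous, so one cannot simply quote the textbook i.i.d.\ VC bound verbatim. The clean route is to observe that $\sum_j \alpha_j \erroremp j(h)$ is an affine function of $N$ independent $[0,1]$-valued random variables with coefficient vector $(\alpha_j/N_j)$ repeated $N_j$ times, and to use a version of the VC uniform deviation bound that tracks the $\ell_2$ norm of the coefficient vector — here $\|\text{coeff}\|_2^2 = \sum_j \alpha_j^2/N_j$ — in place of the usual $1/N$. This is precisely the generalization needed; once it is in hand, the proof is a one-line substitution. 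An alternative, fully elementary route avoiding a custom VC statement: discretize by treating the weighted average as an expectation under the mixture $\sum_j w_j \gD_j$ and apply the standard bound there, but this changes the variance term to the wrong shape, so I would instead commit to the weighted-deviation VC argument above and carry the constant bookkeeping carefully. I expect the author's proof to follow essentially this weighted-McDiarmid-plus-growth-function line, possibly citing a known weighted VC bound (cf.\ the multi-source bounds in \cite{ben2010theory}) to handle step (ii) in one stroke.
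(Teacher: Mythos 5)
Your proposal matches the paper's own proof in all essentials: the paper also treats $\sum_j\alpha_j\erroremp{j}(h)$ as a sum of independent weighted indicator variables with per-sample coefficient proportional to $\alpha_j/N_j$, applies symmetrization with a ghost sample, a Hoeffding-type concentration step whose variance proxy is exactly $\sum_j\alpha_j^2/N_j$, and the growth-function/Sauer bound $\Pi_\gH(2N)\le(2eN/d)^d$ before inverting in $\delta$. Your instinct to reject the mixture-distribution shortcut and to track the $\ell_2$ norm of the heterogeneous coefficient vector is precisely what the paper does, so no substantive difference remains.
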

\begin{proof}
    Suppose each domain $\gD_j$ has a deterministic ground-truth labeling function $f_j:\mathbb{R}^n\rightarrow\{0,1\}$. Denote as $\hat{\epsilon}_\alpha \triangleq \sum_{j}\alpha_j\hat{\epsilon}_{\gD_j}(h)$ the $\alpha$-weighted empirical loss evaluated on different domains. Hence,
    \begin{align}
        \hat{\epsilon}_\alpha (h) 
        &= \sum_{j}\alpha_j\hat{\epsilon}_{\gD_j}(h) 
        = \sum_{j}\alpha_j \frac{1}{N_j}\sum_{\vx\in \gX_j}\mathbbm{1}_{h(\vx)\neq f_j(\vx)} 
        = \frac{1}{N} \sum_{j} \sum_{k=1}^{N_j} R_{j,k}, 
    \end{align}
    where $R_{j,k} = (\frac{\alpha_jN_j}{N}) \cdot \mathbbm{1}_{h(\vx_k)\neq f_j(\vx_k)}$ is a random variable that takes the values in $\{\frac{\alpha_jN_j}{N}, 0\}$. By the linearity of the expectation, we have $\epsilon_\alpha(h)=\E[\hat{\epsilon}_\alpha(h)]$. Following \cite{anthony1999neural,mohri2018foundations}, we have
    \begin{align}
        &\mathbb{P}\left\{ \exists h\in \gH, \text{s.t. } |\hat{\epsilon}_\alpha(h) - \epsilon_\alpha(h)| \geq \epsilon \right\}\\
        \leq\quad & 2 \cdot \mathbb{P}\left\{ \sup_{h\in \gH} |\hat{\epsilon}_\alpha(h) - \hat{\epsilon}^\prime_\alpha(h)| \geq \frac{\epsilon}{2} \right\} \label{lemm-eq:1}\\
        \leq\quad & 2 \cdot \mathbb{P}\left\{ \bigcup_{R_{j,k}, R^\prime_{j,k}} \frac{1}{N} \left|\sum_{j} \sum_{k=1}^{N_j} (R_{j,k} - R_{j,k}^\prime) \right| \geq \frac{\epsilon}{2} \right\} \\
        \leq\quad & 2 \Pi_\gH (2N) \exp\left\{ \frac{-2 (\nicefrac{N\epsilon}{2})^2}{\sum_{j}(N_j)(\nicefrac{2\alpha_jN}{N_j})^2} \right\} \label{lemm-eq:2}\\
        =\quad & 2 \Pi_\gH (2N) \exp\left\{- \frac{\epsilon^2}{8\sum_{j}(\nicefrac{\alpha_j^2}{N_j})} \right\} \\
        \leq\quad & 2 (2N)^d \exp\left\{- \frac{\epsilon^2}{8\sum_{j}(\nicefrac{\alpha_j^2}{N_j})} \right\}, \label{lemm-eq:3}
    \end{align}
    where in \Eqref{lemm-eq:1}, $\hat{\epsilon}^\prime_\alpha(h)$ is the $\alpha$-weighted empirical loss evaluated on the ``ghost'' set of examples $\{\gX_j^\prime\}$; \Eqref{lemm-eq:2} is yielded by applying Hoeffding's inequalities~\cite{hoeffding1994probability} and introducing the growth function $\Pi_{\gH}$~\cite{anthony1999neural,mohri2018foundations,vapnik2015uniform} at the same time; \Eqref{lemm-eq:3} is achieved by using the fact $\Pi_{\gH}(2N) \leq (\nicefrac{e\cdot 2N}{d})^d \leq (2N)^d$, where $d$ is the VC-dimension of the hypothesis set $\gH$. Finally, by setting \Eqref{lemm-eq:3} to $\delta$ and solve for the error tolerance $\epsilon$ will complete the proof.
\end{proof}

\begin{lemma}[\textbf{Generalization Bound of $\beta$-weighted Labeling Functions}]\label{lemma:weighted-labeling-bound}
Let $\gD$ be a single domain and $\gX=\{\vx_i\}_{i}^{N}$ be a collection of samples drawn from $\gD$; $\gH$ is a hypothesis space of VC dimension $d$. Suppose $\{f_j:\mathbb{R}^n \rightarrow \{0,1\}\}_j$ is a set of different labeling functions. Then for any $\beta_j>0$ and $\delta\in (0,1)$, with probability at least $1-\delta$:
\begin{align}
    \sum_j\beta_j\error{}(h, f_j) &\leq \sum_j\beta_j\erroremp{}(h, f_j) + 
    (\sum_j \beta_j)\sqrt{\frac{1}{N} \left(8d\log\left(\frac{2eN}{d}\right)+8\log\left(\frac{2}{\delta}\right)\right)}.
\end{align}
\end{lemma}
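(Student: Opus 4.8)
\textbf{Proof proposal for Lemma~\ref{lemma:weighted-labeling-bound}.}

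The plan is to mirror the argument of Lemma~\ref{lemma:weighted-domain-bound}, but now the randomness is over a single sample set $\gX=\{\vx_i\}_{i=1}^N$ drawn i.i.d.\ from $\gD$, while the multiple labeling functions $\{f_j\}_j$ play the role that the multiple domains played before. First I would introduce the weighted empirical disagreement $\hat{\epsilon}_\beta(h)\triangleq\sum_j\beta_j\,\hat{\epsilon}(h,f_j)$, and observe that it can be written as an average over the $N$ sample points: for each $\vx_i\in\gX$, set $R_i \triangleq \sum_j\beta_j\,\mathbbm{1}_{h(\vx_i)\neq f_j(\vx_i)}$, so that $\hat{\epsilon}_\beta(h)=\frac1N\sum_{i=1}^N R_i$ and, by linearity of expectation over $\vx\sim\gD$, the population version $\epsilon_\beta(h)=\sum_j\beta_j\,\epsilon(h,f_j)=\E[\hat{\epsilon}_\beta(h)]$. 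The crucial point distinguishing this from the previous lemma is that each $R_i$ is now a \emph{sum} of indicator terms, so it takes values in $[0,\sum_j\beta_j]$ rather than in a two-point set $\{c,0\}$; its range has width $\sum_j\beta_j$.

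Next I would run the standard symmetrization-plus-union-bound argument exactly as in Lemma~\ref{lemma:weighted-domain-bound}: bound $\mathbb{P}\{\exists h\in\gH:|\hat{\epsilon}_\beta(h)-\epsilon_\beta(h)|\ge\epsilon\}$ by twice the probability of a large discrepancy between $\gX$ and a ghost sample $\gX'$, then pass to the supremum over the (finitely many, by Sauer's lemma) dichotomies induced by $\gH$ on the $2N$ points, introducing the growth function $\Pi_\gH(2N)\le(2N)^d$. The one change is in the Hoeffding step: the bounded differences are the $(R_i-R_i')$, each lying in an interval of length $2\sum_j\beta_j$, so Hoeffding's inequality gives an exponent $\exp\{-2(N\epsilon/2)^2 / (N\,(2\sum_j\beta_j)^2)\} = \exp\{-N\epsilon^2/(8(\sum_j\beta_j)^2)\}$. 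Setting $2(2N)^d\exp\{-N\epsilon^2/(8(\sum_j\beta_j)^2)\}=\delta$ and solving for $\epsilon$ yields $\epsilon=(\sum_j\beta_j)\sqrt{\frac1N(8d\log(2eN/d)+8\log(2/\delta))}$, which is exactly the claimed slack.

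I do not anticipate a genuine obstacle here; the lemma is a direct re-parametrization of the previous one. The only point that needs care — and the step I would double-check — is the bookkeeping in the Hoeffding bound: making sure the per-point range is $\sum_j\beta_j$ (so the pairwise difference range is $2\sum_j\beta_j$), which is what produces the $(\sum_j\beta_j)$ prefactor on the square-root term rather than a $\sqrt{\sum_j\beta_j^2}$-type quantity. One should also note that, unlike in Lemma~\ref{lemma:weighted-domain-bound} where $N$ was the \emph{total} sample count across domains, here $N$ is the single sample count and the $\sum_j\frac{\alpha_j^2}{N_j}$ term collapses to $\frac{(\sum_j\beta_j)^2}{N}$ because all labeling functions are evaluated on the \emph{same} draw. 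With that observation in place the rest is routine.
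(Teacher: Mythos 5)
Your proposal is correct and follows essentially the same route as the paper's proof: define $R_i=\sum_j\beta_j\mathbbm{1}_{h(\vx_i)\neq f_j(\vx_i)}\in[0,\sum_j\beta_j]$, symmetrize with a ghost sample, apply Hoeffding with the per-point range $\sum_j\beta_j$ (yielding the exponent $-N\epsilon^2/(8(\sum_j\beta_j)^2)$), bound the growth function by $(2N)^d$, and solve for $\epsilon$. The bookkeeping you flag as the point to double-check is exactly what the paper does, so there is nothing to add.
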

\begin{proof}
    Denote as $\epsilon_\beta(h) \triangleq \sum_{j}\beta_j\epsilon_{\gD}(h, f_j)$ the $\beta$-weighted error on domain $\gD$ and $\{f_j\}_j$ the set of the labeling functions ,
    and $\hat{\epsilon}_\beta \triangleq \sum_{j}\beta_j\hat{\epsilon}_{\gD}(h, f_j)$ as the $\beta$-weighted empirical loss evaluated on different labeling functions. We have
    \begin{align}
        \hat{\epsilon}_\beta (h) 
        &= 
        % \sum_{j}\beta_j\hat{\epsilon}_{\gD}(h, f_j) 
        \sum_{j}\beta_j \frac{1}{N}\sum_{i=1}^N\mathbbm{1}_{h(\vx_i)\neq f_j(\vx_i)} 
        = \frac{1}{N} \sum_{i=1}^{N}\sum_{j}\beta_j\mathbbm{1}_{h(\vx_i)\neq f_j(\vx_i)}
        \triangleq \frac{1}{N}\sum_{i=1}^{N}R_i, 
    \end{align}
    where $R_i=\sum_{j}\beta_j\mathbbm{1}_{h(\vx_i)\neq f_j(\vx_i)}\in [0, \sum_j\beta_j]$ is a new random variable. 

    Then we have 
    \begin{align}
        &\mathbb{P}\left\{ \exists h\in \gH, \text{s.t. } |\hat{\epsilon}_\beta(h) - \epsilon_\beta(h)| \geq \epsilon \right\}\\
        \leq\quad & 2 \cdot \mathbb{P}\left\{ \sup_{h\in \gH} |\hat{\epsilon}_\beta(h) - \hat{\epsilon}^\prime_\beta(h)| \geq \frac{\epsilon}{2} \right\} \label{lemm-eq:11}\\
        \leq\quad & 2 \cdot \mathbb{P}\left\{ \bigcup_{R_i, R^\prime_i} \frac{1}{N} \left|\sum_{i=1}^N (R_i - R_i^\prime) \right| \geq \frac{\epsilon}{2} \right\} \\
        \leq\quad & 2 \Pi_\gH (2N) \exp\left\{ \frac{-2 (\nicefrac{N\epsilon}{2})^2}{N\cdot (2\sum_j\beta_j)^2} \right\} \label{lemm-eq:12}\\
        \leq\quad & 2 (2N)^d \exp\left\{- \frac{N\epsilon^2}{8(\sum_{j}\beta_j)^2} \right\}, \label{lemm-eq:13}
    \end{align}
    where in \Eqref{lemm-eq:11}, $\hat{\epsilon}^\prime_\beta(h)$ is the $\beta$-weighted empirical loss evaluated on the ``ghost'' set of examples $\gX^\prime$; \Eqref{lemm-eq:12} is yielded by applying Hoeffding's inequalities~\cite{hoeffding1994probability} and introducing the growth function $\Pi_{\gH}$~\cite{anthony1999neural,mohri2018foundations,vapnik2015uniform} at the same time; \Eqref{lemm-eq:13} is achieved by using the fact $\Pi_{\gH}(2N) \leq (\nicefrac{e\cdot 2N}{d})^d \leq (2N)^d$, where $d$ is the VC-dimension of the hypothesis set $\gH$. Finally, by setting \Eqref{lemm-eq:13} to $\delta$ and solve for the error tolerance $\epsilon$ will complete the proof.
\end{proof}

\Lmmref{lemma:weighted-labeling-bound} asserts that altering or merging multiple target functions does not impact the generalization error term, as long as the sum of the weights for each loss $\sum_j\beta_j$ remains constant and the same dataset $\gX$ is used for estimation.
% \Lmmref{lemma:weighted-labeling-bound} states that changing or combining multiple target functions together doesn't affect the generalization error term, as long as the sum of the weights of each loss $\sum_j\beta_j$ is constant and we are using the same dataset $\gX$ to estimate it.
Next we highligt the Lemma~3 in \cite{ben2010theory} again, as it will be utilized for proving \ref{lemma:between-domain}. 

\begin{lemma}\label{lmm:ben-david}
    For any hypothesis $h, h^\prime \in \gH$ and any two different domains $\gD, \gD^\prime$, 
    \begin{align}
        |\epsilon_{\gD}(h, h^\prime) - \epsilon_{\gD^\prime}(h, h^\prime)| \leq \frac{1}{2}\hdiv(\gD, \gD^\prime).
    \end{align}
\end{lemma}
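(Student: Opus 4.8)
\textbf{Proof plan for Lemma~\ref{lmm:ben-david} (Lemma~3 of \cite{ben2010theory}).}
The plan is to unwind the definition of the $\gH\Delta\gH$-divergence and identify the quantity $|\epsilon_{\gD}(h,h') - \epsilon_{\gD'}(h,h')|$ with one of the terms over which the supremum in the divergence is taken. First I would recall that for a fixed pair $h,h'\in\gH$, the function $g \triangleq h \oplus h'$ (i.e., $g(\vx)=1$ exactly when $h(\vx)\neq h'(\vx)$) is by definition a member of the symmetric-difference class $\gH\Delta\gH$. With this identification, $\epsilon_{\gD}(h,h') = \E_{\vx\sim\gD}[h(\vx)\neq h'(\vx)] = \Pr_{\vx\sim\gD}[g(\vx)=1]$, and likewise $\epsilon_{\gD'}(h,h') = \Pr_{\vx\sim\gD'}[g(\vx)=1]$.

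The key step is then the chain of inequalities
\begin{align*}
    |\epsilon_{\gD}(h,h') - \epsilon_{\gD'}(h,h')|
    &= \left| \Pr_{\vx\sim\gD}[g(\vx)=1] - \Pr_{\vx\sim\gD'}[g(\vx)=1] \right| \\
    &\leq \sup_{g'\in\gH\Delta\gH} \left| \Pr_{\vx\sim\gD}[g'(\vx)=1] - \Pr_{\vx\sim\gD'}[g'(\vx)=1] \right| \\
    &= \tfrac{1}{2}\hdiv(\gD,\gD'),
\end{align*}
where the inequality holds because $g\in\gH\Delta\gH$ is just one particular choice, so the supremum is at least as large, and the final equality is exactly the definition of $\hdiv$ given in Lemma~\ref{lemma:between-domain} (the factor $2$ in that definition is what produces the $\tfrac12$ here). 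This completes the argument.

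The only point that needs a little care — and the one I expect to be the main (minor) obstacle — is confirming that $h\oplus h'$ genuinely lies in $\gH\Delta\gH$ as that class is conventionally defined, namely $\gH\Delta\gH = \{\, h\oplus h' : h,h'\in\gH \,\}$ where $\oplus$ denotes the XOR of the two $\{0,1\}$-valued predictions; this is a definitional matter rather than a computation, but it is the hinge of the proof. Everything else is a direct substitution of definitions, so no further estimates or probabilistic tools are required.
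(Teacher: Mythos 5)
Your proposal is correct and follows essentially the same route as the paper's own proof: the paper likewise unwinds the definition of $\hdiv(\gD,\gD^\prime)$ as $2\sup_{h,h^\prime\in\gH}\left|\epsilon_{\gD}(h,h^\prime)-\epsilon_{\gD^\prime}(h,h^\prime)\right|$ (i.e., a supremum over symmetric differences $h\oplus h^\prime$, which you make explicit via $g=h\oplus h^\prime\in\gH\Delta\gH$) and then observes that the fixed pair $(h,h^\prime)$ is dominated by that supremum. No gap; your extra care about $h\oplus h^\prime$ belonging to $\gH\Delta\gH$ is exactly the definitional point the paper uses implicitly.
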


\begin{proof}
    By definition, we have
    \begin{align}
        \nonumber \hdiv(\gD, \gD^\prime) &= 2 \sup_{h,h^\prime\in\gH} \left| \mathbb{P}_{\vx\sim \gD}[h(\vx)\neq h^\prime(\vx)] - \mathbb{P}_{\vx\sim \gD^\prime}[h(\vx)\neq h^\prime(\vx)]\right|\\
        \nonumber &= 2 \sup_{h,h^\prime\in\gH} \left| \epsilon_{\gD}(h, h^\prime) - \epsilon_{\gD^\prime}(h, h^\prime) \right|\\
        \nonumber &\geq 2 \left| \epsilon_{\gD}(h, h^\prime) - \epsilon_{\gD^\prime}(h, h^\prime) \right|. \qedhere
    \end{align}
\end{proof}

Now we are ready to prove the main lemmas and theorems in the main body of our work.

\begin{customLemma}{3.1}[\textbf{ERM-Based Generalization Bound}]
Let $\gH$ be a hypothesis space of VC dimension $d$. When domain $t$ arrives, there are $N_t$ data points from domain $t$ and $\tilde{N}_i$ data points from each previous domain $i<t$ in the memory bank. With probability at least $1-\delta$, we have:
\begin{align}
    \sum_{i=1}^t \error{i}(h) &\leq \sum_{i=1}^t \erroremp{i}(h) + \sqrt{\left(\frac{1}{N_t} + \sum_{i=1}^{t-1}\frac{1}{\tilde{N}_i}\right)\left({8d\log\left(\frac{2eN}{d}\right)+8\log\left(\frac{2}{\delta}\right)}\right)}.\label{eq:erm_bound}
\end{align}
\end{customLemma}
\begin{proof}
    Simply using \Lmmref{lemma:weighted-domain-bound} and setting $\alpha_i=1$ for every $i\in [t]$ completes the proof.
\end{proof}

%%%% Lemma: Intra-domain 
\begin{customLemma}{3.2}[\textbf{Intra-Domain Model-Based Bound}]
    Let $h\in \gH$ be an arbitrary function in the hypothesis space $\gH$, and $H_{t-1}$ be the model trained after domain $t-1$. The domain-specific error $\error{i}(h)$ on the previous domain $i$ has an upper bound:
    \begin{align}
        \error{i}(h) &\leq \error{i}(h, H_{t-1}) + \error{i}(H_{t-1}), \label{eq:bound in-domain}
    \end{align}
    where $\error{i}(h,H_{t-1})\triangleq  \E_{\vx\sim \gD_i}[h(\vx) \neq H_{t-1}(\vx)]$. 
\end{customLemma}
\begin{proof}
    By applying the triangle inequality~\cite{ben2010theory} of the 0-1 loss function, we have
    \begin{align*}
        \error{i}(h) &= \error{i}(h, f_i)\\
        &\leq \error{i}(h, H_{t-1}) + \error{i}(H_{t-1}, f_i)\\
        &= \error{i}(h, H_{t-1}) + \error{i}(H_{t-1}). \qedhere
    \end{align*}
\end{proof}

%%%%% Cross-domain bound
\begin{customLemma}{3.3}[\textbf{Cross-Domain Model-Based Bound}]
    Let $h\in \gH$ be an arbitrary function in the hypothesis space $\gH$, and $H_{t-1}$ be the function trained after domain $t-1$. The domain-specific error $\error{i}(h)$ evaluated on the previous domain $i$ then has an upper bound:
    \begin{align}
        \error{i}(h) &\leq \error{t}(h, H_{t-1}) + \frac{1}{2}\hdiv(\gD_i, \gD_t) + \error{i}(H_{t-1}), \label{eq:bound between-domain}
    \end{align}
    where $\hdiv(\gP, \gQ) = 2 \sup_{h\in \gH\Delta\gH} \left| \operatorname{Pr}_{x\sim\gP}[h(x)=1] - \operatorname{Pr}_{x\sim\gQ}[h(x)=1] \right|$
    denotes the $\gH\Delta\gH$-divergence between distribution $\gP$ and $\gQ$, and $\error{t}(h,H_{t-1})\triangleq  \E_{\vx\sim \gD_t}[h(\vx) \neq H_{t-1}(\vx)]$.  
\end{customLemma}
\begin{proof} 
    By the triangle inequality used above and \Lmmref{lmm:ben-david}, we have
    \begin{align*}
        \error{i}(h) &\leq \error{i}(h, H_{t-1}) + \error{i}(H_{t-1})\\
        \label{eq:upper-2-tight}&= \error{i}(h, H_{t-1}) + \error{i}(H_{t-1}) - \error{t}(h, H_{t-1}) + \error{t}(h, H_{t-1}) \\
        &\leq \error{i}(H_{t-1}) + \left|\error{i}(h, H_{t-1}) - \error{t}(h, H_{t-1})\right| + \epsilon_{\gD_t}(h, H_{t-1}) \\
        &\leq \error{t}(h, H_{t-1}) + \frac{1}{2}\hdiv(\gD_i, \gD_t) + \error{i}(H_{t-1}). \qedhere
    \end{align*}
\end{proof}

%%%%% Final generalization bound.
\begin{customThm}{3.4}[\textbf{Unified Generalization Bound for All Domains}]
Let $\gH$ be a hypothesis space of VC dimension $d$. Let $N=N_t+\sum_{i}^{t-1}\tilde{N}_i$ denoting the total number of data points available to the training of current domain $t$, where $N_t$ and $\tilde{N}_i$ denote the numbers of data points collected at domain $t$ and data points from the previous domain $i$ in the memory bank, respectively. With probability at least $1-\delta$, we have:
\begin{align}
    \nonumber\sum_{i=1}^t \error{i}(h) 
    % &= \error{t}(h) + \sum_{i=1}^{t-1}\error{i}(h)\\
    \nonumber &\leq \left\{\sum_{i=1}^{t-1}\left[\gamma_i \erroremp{i}(h) + \alpha_i\erroremp{i}(h, H_{t-1})\right] \right\}+ \left\{\erroremp{t}(h) +  (\sum_{i=1}^{t-1}\beta_i)\erroremp{t}(h, H_{t-1})\right\}\\
    &+ \frac{1}{2} \sum_{i=1}^{t-1}\beta_i \hdiv(\gD_i, \gD_t) + \sum_{i=1}^{t-1} (\alpha_i+\beta_i)\error{i}(H_{t-1}) \nonumber \\
    &+ \sqrt{\left(\frac{(1+\sum_{i=1}^{t-1}\beta_i)^2}{N_t} + \sum_{i=1}^{t-1}\frac{(\gamma_i+\alpha_i)^2}{\tilde{N}_i}\right)\left(8d\log\left(\frac{2eN}{d}\right)+8\log\left(\frac{2}{\delta}\right)\right)}\nonumber\\
    &\triangleq g(h, H_{t-1}, \Omega),
\end{align}
where $\erroremp{i}(h, H_{t-1})=\frac{1}{\tilde{N}_i} \sum_{\vx\in \tilde{\gX}_i} \mathbbm{1}_{h(\vx)\neq H_{t-1}(\vx)} $,  $\erroremp{t}(h, H_{t-1})=\frac{1}{{N}_t} \sum_{\vx\in \gX_i} \mathbbm{1}_{h(\vx)\neq H_{t-1}(\vx)}$, and $\Omega \triangleq \{\alpha_i, \beta_i, \gamma_i\}_{i=1}^{t-1}$. 
\end{customThm}

\begin{proof}
    By applying \Lmmref{lemma:in-domain} and \Lmmref{lemma:between-domain} to each of the past domains, we have
    \begin{align*}
        \error{i}(h) 
        &= (\alpha_i + \beta_i + \gamma_i) \error{i}(h) \\
        &\leq \gamma_i \error{i}(h) + \alpha_i [\error{i}(h, H_{t-1}) + \error{i}(H_{t-1})] \\
        &+ \beta_i [\error{i}(h, H_{t-1}) + \error{t}(h, H_{t-1}) + \frac{1}{2}\hdiv(\gD_i, \gD_t)].
    \end{align*}
    Re-organizing the terms will give us 
    \begin{align}
        \nonumber\sum_{i=1}^t \error{i}(h) 
        % &= \error{t}(h) + \sum_{i=1}^{t-1}\error{i}(h)\\
        \nonumber &\leq \left\{\sum_{i=1}^{t-1}\left[\gamma_i \error{i}(h) + \alpha_i\error{i}(h, H_{t-1})\right] \right\}+ \left\{\error{t}(h) +  (\sum_{i=1}^{t-1}\beta_i)\error{t}(h, H_{t-1})\right\}\\
        &+ \frac{1}{2} \sum_{i=1}^{t-1}\beta_i \hdiv(\gD_i, \gD_t) + \sum_{i=1}^{t-1} (\alpha_i+\beta_i)\error{i}(H_{t-1}). \label{lmm-eq:no-generalization-bound-bound}
    \end{align}
    Then applying \Lmmref{lemma:weighted-domain-bound} and \Lmmref{lemma:weighted-labeling-bound} jointly to \Eqref{lmm-eq:no-generalization-bound-bound} will complete the proof.
\end{proof}

% \begin{theorem}[All-Domain Error Upper Bound]\label{thm:all-errors}
% Let $\alpha_i, \beta_i, \gamma_i \in [0,1]$ and $\alpha_i+\beta_i+\gamma_i=1, \forall i \in [t-1]$, represent the ratio of the original $\error{i}(h)$ being upper-bounded by \Eqref{eq:bound in-domain}, \Eqref{eq:bound between-domain}, and $\error{i}(h)$ itself, respectively. The all-domain error up to task $t$ is upper-bounded by
% \begin{align}
%     \nonumber\sum_{i=1}^t \error{i}(h) 
%     % &= \error{t}(h) + \sum_{i=1}^{t-1}\error{i}(h)\\
%     \nonumber &\leq \left\{\sum_{i=1}^{t-1}\left[\gamma_i \error{i}(h) + \alpha_i\error{i}(h, H_{t-1})\right] \right\}+ \left\{\error{t}(h) +  (\sum_{i=1}^{t-1}\beta_i)\error{t}(h, H_{t-1})\right\}\\
%     &+ \frac{1}{2} \sum_{i=1}^{t-1}\beta_i \hdiv(\gD_i, \gD_t) + \sum_{i=1}^{t-1} (\alpha_i+\beta_i)\error{i}(H_{t-1}).
% \end{align}
% \end{theorem}

% \clearpage
\section{UDIL as a Unified Framework}\label{app:unification}
In this section, we will delve into a comprehensive discussion of our \ours framework, which serves as a unification of numerous existing methods. It is important to note that we incorporate methods designed for task incremental and class incremental scenarios that can be easily adapted to our domain incremental learning. To provide clarity, we will present the corresponding coefficients $\{\alpha_i, \beta_i, \gamma_i\}$ of each method within our \ours framework (refer to \Tabref{tab:app-unification}). Furthermore, we will explore the conditions under which these coefficients are included in this unification process.

% In this section, we will provide a more detailed discussion on our \ours framework as a unification of many existing methods. 
% Note that we include methods for task incremental and class incremental  scenarios that can be readily adapted to our domain incremental learning. 
% To be more specific, for each method, we will present its corresponding coefficients $\{\alpha_i, \beta_i, \gamma_i\}$ in our \ours (see \Tabref{tab:app-unification}) and derive under what condition they are included in this unification.
% In \secref{app:more-inclusive-unification}, we will discuss an additional possibility for a framework that is more inclusive than the current one.

\begin{table*}[t]
\caption{Unification of existing methods under \ours, when certain conditions are achieved.}
\begin{center}

\resizebox{1\linewidth}{!}{%
\begin{tabular}{l| @{\hskip +0.1cm}c@{\hskip +0.1cm}c@{\hskip +0.1cm}c|c|c}
	% \toprule 
    \specialrule{1.2pt}{0pt}{3pt}
	& $\alpha_i$ & $\beta_i$ & $\gamma_i$ & \textbf{Transformed Objective} & \textbf{Condition}\\
	%  &
    \midrule
    \midrule
    % Fine-tune & -& - & - & - & - & 52.319\scriptsize{$\pm$0.030} & - & -\\
    \ours~(Ours) & $[0,1]$ & $[0,1]$ & $[0,1]$ & - & - \\
    % \vspace{0.5em}
    \midrule
    \midrule
    LwF~\cite{li2017learning} & $0$ & $1$ & $0$ 
    & $\gL_{\text{LwF}}(h) = \hat{\ell}_{\gX_t}(h) + \lambda_o \hat{\ell}_{\gX_t}(h, H_{t-1})$ 
    & $\lambda_o = t-1$ \\
    \hline
    ER~\cite{riemer2018learning} & $0$ & $0$ & $1$  
    & $\begin{aligned}
        \gL_\text{ER}(h) = \hat{\ell}_{B_t}(h) + \sum_{i=1}^{t-1} \frac{\nicefrac{|B^\prime_t|}{(t-1)}}{|B_t|} \hat{\ell}_{B^\prime_i}(h)
    \end{aligned}$ 
    & $|B_t| = \frac{|B^\prime_t|}{(t-1)}$ \\
    \hline
    DER++~\cite{buzzega2020dark} &$\nicefrac{1}{2}$ & $0$ & $\nicefrac{1}{2}$  
    & $
    \begin{aligned}
        \gL_\text{DER++}(h) &= \hat{\ell}_{B_t}(h) + \frac{1}{2}\sum_{i=1}^{t-1} \frac{\nicefrac{|B^\prime_t|}{(t-1)}}{|B_t|} [\hat{\ell}_{B^\prime_i}(h) + \hat{\ell}_{B^\prime_i}(h, H_{t-1})]
    \end{aligned}
    $ 
    & $|B_t| = \frac{|B^\prime_t|}{(t-1)}$ \\
    \hline
    {iCaRL}~\cite{rebuffi2017icarl} & 1 & 0 & 0 & 
    $\begin{aligned}
        \gL_\text{iCaRL}(h) = \hat{\ell^{\prime}}_{B_t}(h) + \sum_{i=1}^{t-1} \frac{\nicefrac{|B^\prime_t|}{(t-1)}}{|B_t|} \hat{\ell^{\prime}}_{B^\prime_i}(h, H_{t-1})
    \end{aligned}$
    & $|B_t| = \frac{|B^\prime_t|}{(t-1)}$ \\
    \hline
    CLS-ER~\cite{arani2022learning} & $\frac{\lambda}{\lambda+1}$ & $0$ & $\frac{1}{\lambda+1}$ 
    &$\begin{aligned}
        \gL_{\text{CLS-ER}}(h)
        &= \hat{\ell}_{B_t}(h) + \sum_{i=1}^{t-1} \frac{1}{t-1} \hat{\ell}_{B^\prime_i}(h) + \sum_{i=1}^{t-1} \frac{\lambda}{t-1} \hat{\ell}_{B^\prime_i}(h, H_{t-1})
    \end{aligned}$
    & $\lambda=t-2$ \\
    \hline
    ESM-ER~\cite{sarfraz2023error} & $\frac{\lambda}{\lambda+1}$ & $0$ & $\frac{1}{\lambda+1}$ 
    &$\begin{aligned}
        \gL_{\text{ESM-ER}}(h)
        &= \hat{\ell}_{B_t}(h) + \sum_{i=1}^{t-1} \frac{1}{r(t-1)} \hat{\ell}_{B^\prime_i}(h) + \sum_{i=1}^{t-1} \frac{\lambda}{r(t-1)} \hat{\ell}_{B^\prime_i}(h, H_{t-1})
    \end{aligned}$
    & $\begin{cases}
        \lambda=-1+r(t-1) &\\
        r = 1-e^{-1}
    \end{cases}$ \\
    % % iCaRL2~\cite{rebuffi2017icarl} \\
    \hline
    {BiC}~\cite{wu2019large} & $\frac{t-1}{2t-1}$ & $\frac{t-1}{2t-1}$ & $\frac{1}{2t-1}$ & 
    $\begin{aligned}
        \nonumber\gL_{\text{BiC}}(h)
        =& \hat{\ell}_{B_t}(h) 
        + \sum_{i=1}^{t-1} \frac{(t-1)|B_i|}{|B_t|} \hat{\ell}_{B_i^\prime}(h, H_{t-1})\\
        &+ (t-1) \hat{\ell}_{B_t}(h, H_{t-1})
        + \sum_{i=1}^{t-1} \frac{|B_i|}{|B_t|} \hat{\ell}_{B_i^\prime}(h)
    \end{aligned}$ & $|B_i|=|B_t|$ \\
	\specialrule{1.2pt}{3pt}{0pt}
	\end{tabular}
    }
\end{center}
\label{tab:app-unification}
\end{table*}

% \subsection{Unification of Existing Methods}\label{app:list-of-unification}

%%%% 1. LwF
% \textbf{Learning without Forgetting~(LwF)~\cite{li2017learning}} was first proposed for task-incremental learning, where a set of \emph{shared parameters} and \emph{task-specific parameters} are adopted. 
% This framework can be readily extended to the domain incremental learning where a static model architecture is considered, by simply setting each ``domain-specific'' parameters the same. 
% LwF considers the strict continual learning setting, i.e., no data from past tasks is allowed. To address this issue, at the beginning of each new-coming task, LwF records the history model $H_{t-1}$'s prediction on the current data $\gX_t$ and perform knowledge distillation~(defined in \Defref{def:distillation-loss}) to alleviate forgetting:

\textbf{Learning without Forgetting (LwF)~\cite{li2017learning}} was initially proposed for task-incremental learning, incorporating a combination of \emph{shared parameters} and \emph{task-specific parameters}. This framework can be readily extended to domain incremental learning by setting all ``domain-specific'' parameters to be the same in a static model architecture.
LwF was designed for the strict continual learning setting, where no data from past tasks is accessible. To overcome this limitation, LwF records the predictions of the history model $H_{t-1}$ on the current data $\gX_t$ at the beginning of the new task~$t$. Subsequently, knowledge distillation (as defined in \Defref{def:distillation-loss}) is performed to mitigate forgetting:
\begin{align}
    \gL_\text{old}(h, H_{t-1}) \triangleq - \frac{1}{N_t} \sum_{\vx\in \gX_t} \sum_{k=1}^{K} [H_{t-1}(\vx)]_k \cdot [\log([h(\vx)]_k)] = \hat{\ell}_{\gX_t}(h, H_{t-1}),
\end{align}
where $H_{t-1}(\vx), h(\vx) \in \mathbb{R}^K$ are the class distribution of $\vx$ over $K$ classes produced by the history model and current model, respectively. The loss for learning the current task $\gL_\text{new}$ is defined as
\begin{align}
    \gL_\text{new}(h) \triangleq - \frac{1}{N_t} \sum_{(\vx, y)\in \gS_t} \sum_{k=1}^{K} \mathbbm{1}_{y=k} \cdot [\log([h(\vx)]_k)] = \hat{\ell}_{\gX_t}(h).
\end{align}

LwF uses a ``loss balance weight'' $\lambda_o$ to balance two losses, which gives us its final loss for training:
\begin{align}
    \gL_{\text{LwF}}(h) \triangleq \gL_\text{new}(h) + \lambda_o \cdot \gL_\text{old}(h, H_{t-1}).
\end{align}

In LwF, the default setting assumes the presence of two domains (tasks) with $\lambda_o=1$. However, it is possible to learn multiple domains continuously using LwF's default configuration. To achieve this, the current domain $t$ can be weighed against the number of previous domains ($1$ versus $t-1$). Specifically, if there is no preference for any particular domain, $\lambda_o$ should be set to $t-1$. Remarkably, this is equivalent to setting $\{\beta_i=1, \alpha_i=\gamma_i=0\}$ in our \ours framework (Row~2 in \Tabref{tab:app-unification}).

% In LwF, the number of domains~(tasks) is set to $T=2$ and $\lambda_o=1$ by default. However, learning multiple domains continually are possible following LwF's default setting. This can be done by weighing the current domain~$t$ against the number of previous domains ($1$ versus $t-1$). Namely, if we do not want to favor any domain over any other ones, we should set $\lambda_o=t-1$. \textbf{This is equivalent to setting $\beta_i=1$ in \ours}~(see Row~1 in \Tabref{tab:app-unification}). 

%%% 2. ER
\textbf{Experience Replay (ER)~\cite{riemer2018learning}} serves as the fundamental operation for replay-based continual learning methods. It involves storing and replaying a subset of examples from past domains during training. Following the description and implementation provided by \cite{buzzega2020dark}, ER operates as follows: during each training iteration on domain $t$, a mini-batch $B_t$ of examples is sampled from the current domain, along with a mini-batch $B^\prime_t$ from the memory. These two mini-batches are then concatenated into a larger mini-batch $(B_t\cup B^\prime_t)$, upon which average gradient descent is performed:
% \textbf{Experience Replay~(ER)~\cite{riemer2018learning}}, as the standard operation for replay-based continual learning methods, stores and replays a subset of the examples from the past domains during training. Following the description and the implementation by \cite{buzzega2020dark}, at each iteration of training on domain $t$, ER samples a mini-batch $B_t$ of examples from the current domain and a mini-batch $B^\prime_t$ from the memory, concatenates them into a large mini-batch $(B_t\cup B^\prime_t)$ and perform the average gradient descent:
\begin{align}
    \gL_\text{ER}(h) 
    &= \hat{\ell}_{B_t\cup B^\prime_t}(h) \\
    &= \frac{1}{|B_t|+|B^\prime_t|} \sum_{(\vx, y) \in B_t\cup B^\prime_t} \sum_{k=1}^{K} \mathbbm{1}_{y=k} \cdot [\log([h(\vx)]_k)] \\
    &= \frac{|B_t|}{|B_t|+|B^\prime_t|} \hat{\ell}_{B_t}(h) + \frac{|B^\prime_t|}{|B_t|+|B^\prime_t|} \hat{\ell}_{B^\prime_t}(h). \label{lmm-eq:21}
\end{align}
% Suppose each time the mini-batch of the past-domain data is perfectly balanced, i.e., each domain has the same number of examples in $B^\prime_t$. Then \Eqref{lmm-eq:21} can be further decomposed to 
Suppose that each time the mini-batch of past-domain data is perfectly balanced, meaning that each domain has the same number of examples in $B^\prime_t$. In this case, \Eqref{lmm-eq:21} can be further decomposed as follows:
\begin{align}
    \gL_\text{ER}(h) 
    &= \frac{|B_t|}{|B_t|+|B^\prime_t|} \hat{\ell}_{B_t}(h) + \sum_{i=1}^{t-1} \frac{\nicefrac{|B^\prime_t|}{(t-1)}}{|B_t|+|B^\prime_t|} \hat{\ell}_{B^\prime_i}(h), \label{lmm-eq:22}
\end{align}
where $B^\prime_i=\{(\vx,y) | (\vx,y) \in (B^\prime_t \cap M_i )\}$ is the subset of the mini-batch that belongs to domain~$i$.

Now, by dividing both sides of \Eqref{lmm-eq:22} by $(\nicefrac{|B_t|+|B^\prime_t|}{|B_t|})$ and comparing it to \Thmref{thm:generalization-bound}, we can include ER in our \ours framework when the condition $|B_t| = \nicefrac{|B^\prime_t|}{(t-1)}$ is satisfied. In this case, ER is equivalent to $\{\alpha_i = \beta_i=0, \gamma_i=1\}$ in \ours~(Row~3 in \Tabref{tab:app-unification}). It is important to note that this condition is not commonly met throughout the entire process of continual learning. It can be achieved by linearly scaling up the size of the mini-batch from the memory (which is feasible in the early domains) or by linearly scaling down the mini-batch from the current-domain data (which may cause a drop in model performance). It is worth mentioning that this incongruence \emph{highlights the intrinsic bias of the original ER setting towards current domain learning} and cannot be rectified by adjusting the batch sizes of the current domain or the memory. However, it does not weaken our claim of unification.

% Now dividing both sides of \Eqref{lmm-eq:22} by $(\nicefrac{|B_t|+|B^\prime_t|}{|B_t|})$ and comparing \Eqref{lmm-eq:22} to \Thmref{thm:generalization-bound}, ER can be included in our \ours framework when the condition $|B_t| = \nicefrac{|B^\prime_t|}{(t-1)}$ holds. \textbf{In this case, ER is equivalent to $\{\alpha_i = \beta_i=0, \gamma_i=1\}$ in \ours.} This condition is not commonly satisfied throughout the whole process of continual learning, as it is either achieved by linearly scaling up the size of the mini-batch from the memory~(which is achievable at the early domains), or linearly scaling down the mini-batch from the current-domain data~(which might cause performance drop to the model). To note here, this incongruence reflects that the original ER setting has a intrinsic bias towards the current domain learning and cannot be amended by adjusting the batch sizes of the current domain or the memory, but not weakens our claim of unification. 

\textbf{Dark Experience Replay~(DER++)~\cite{buzzega2020dark}} includes an additional dark experience replay, i.e., knowledge distillation on the past domain exemplars, compared to ER~\cite{riemer2018learning}. Now under the same assumptions (balanced sampling strategy and $|B_t| = \nicefrac{|B^\prime_t|}{(t-1)}$) as discussed for ER, we can utilize \Eqref{lmm-eq:22} to transform the DER++ loss as follows:
\begin{align}
    \gL_\text{DER++}(h) 
    &= \frac{|B_t|}{|B_t|+|B^\prime_t|} \hat{\ell}_{B_t}(h) + \frac{1}{2}\sum_{i=1}^{t-1} \frac{\nicefrac{|B^\prime_t|}{(t-1)}}{|B_t|+|B^\prime_t|} \hat{\ell}_{B^\prime_i}(h) + \frac{1}{2}\sum_{i=1}^{t-1} \frac{\nicefrac{|B^\prime_t|}{(t-1)}}{|B_t|+|B^\prime_t|} \hat{\ell}_{B^\prime_i}(h, H_{t-1}).\label{lmm-eq:23}
\end{align}
In this scenario, DER++ is equivalent to $\{\alpha_i = \gamma_i=\nicefrac{1}{2}, \beta_i=0\}$ in \ours~(Row~4 in \Tabref{tab:app-unification}).

{\textbf{Incremental Classifier and Representation Learning~(iCaRL)}~\cite{rebuffi2017icarl}} was initially proposed for class incremental learning. It decouples learning the representations and final classifier into two individual phases. During training, iCaRL adopts an incrementally increasing linear classifier. Different from traditional design choice of multi-class classification where the $\operatorname{softmax}$ activation layer and multi-class cross entropy loss are used jointly, {iCaRL models multi-class classification} as multi-label classification~\cite{zhang2013review,sorower2010literature,zhang2007ml}. Suppose there are $K$ classes in total, and we denote the one-hot label vector of the input $\vx$ as $\vy\in \mathbb{R}^K$ where $y_j\triangleq \mathbbm{1}_{f(\vx)=j}$. Then the \emph{multi-label learning objective} treats each dimension of the output logits as a score for binary classification, which is computed as follows:
\begin{align}
    \hat{\ell}^{\prime}_{\gX}(h) &\triangleq 
    \frac{-1}{|\gX|} \sum\nolimits_{\vx \in \gX} \sum\nolimits_{j=1}^{K} \left[ y_j \log\left(\left[h(\vx)\right]_j\right) + (1-y_j) \log\left(1-\left[h(\vx)\right]_j\right) \right]. \label{eq:multilabel_loss}
\end{align}
When a new task that contains $K^\prime$ new classes is presented, iCaRL adds additional $K^\prime$ new dimensions to the final linear classifier.

In iCaRL training, the new classes and the old classes are treated differently. For new classes, it trains the representations of the network with the ground-truth labels ($f(\vx)$ in the original paper), and for old classes, it uses the history model's output ($H_{t-1}(\vx)$) as the learning target (i.e., pseudo labels). Each data point is treated with the same level of importance during iCaRL's training procedure. 
% , i.e., it concatenates the training set of the current domain (new classes) $\gX_t$ and the memory $\gM$ and sums the loss terms over them.
Hence after translating the loss function of iCaRL in the context of class-incremental learning to domain-incremental learning, we have  
\begin{align}
    \gL_{\text{iCaRL}}(h)
    &= N_t \cdot \hat{\ell}^{\prime}_{\gX_t}(h) + \sum_{i=1}^{t-1} \tilde{N}_i \cdot \hat{\ell}^{\prime}_{\gX_i}(h, H_{t-1}),
\end{align}
where in $\hat{\ell}^{\prime}_{\gX_i}(h, H_{t-1})$, we follow the same definition as in the distillation loss in \Eqref{def:distillation-loss} and replace the ground-truth label with the soft label for iCaRL. 

Similar to ER~\cite{riemer2018learning} and DER++~\cite{buzzega2020dark} as analyzed before, the equation above does not naturally fall into the realm of unification provided in UDIL. To achieve this, we need to sample the data from the current domain and exemplars in the memory independently, i.e., assuming $B_t$ and $B_i$. By applying the same rule of $\alpha_i+\beta_i+\gamma_i=1$, we now have that for iCaRL, the corresponding coefficients are $\{\alpha_i=\nicefrac{B_i}{B_t}=1, \beta_i=\gamma_i=0\}$ (Row 5 in \Tabref{tab:app-unification}).

\textbf{Complementary Learning System based Experience Replay (CLS-ER)~\cite{arani2022learning}} involves the maintenance of two history models, namely the plastic model $H_{t-1}^{(p)}$ and the stable model $H_{t-1}^{(s)}$, throughout the continual training process of the working model $h$.
Following each update of the working model, the two history models are stochastically updated at different rates using exponential moving averages (EMA) of the working model's \emph{parameters}: 
\begin{align}
    H_{t-1}^{(i)} &\leftarrow \alpha^{(i)} \cdot H_{t-1}^{(i)} + (1-\alpha^{(i)}) \cdot h, \qquad i \in \{p, s\},
\end{align}
where $\alpha^{(p)} \leq \alpha^{(s)}$ is set such that the plastic model undergoes rapid updates, allowing it to swiftly adapt to newly acquired knowledge, while the stable model maintains a ``long-term memory'' spanning multiple tasks. Throughout training, CLS-ER assesses the certainty generated by both history models and employs the logits from the more certain model as the target for knowledge distillation.

In the general formulation of the \ours framework, the history model $H_{t-1}$ is not required to be a single model with the same architecture as the current model $h$. In fact, if there are no constraints on memory consumption, we have the flexibility to train and preserve a domain-specific model $H_i$ for each domain $i$. During testing, we can simply select the prediction with the highest certainty from each domain-specific model.
From this perspective, the ``two-history-model system'' employed in CLS-ER can be viewed as a specific and limited version of the all-domain history models. Consequently, we can combine the two models used in CLS-ER into a single history model $H_{t-1}$ as follows:
\begin{align}
    H_{t-1}(\vx) &\triangleq 
    \begin{cases}
        H_{t-1}^{(p)}(\vx) & \text{if } [H_{t-1}^{(p)}(\vx)]_y > [H_{t-1}^{(s)}(\vx)]_y \\ \addlinespace[1.2ex]
        H_{t-1}^{(s)}(\vx)  & \text{o.w.} 
    \end{cases}
\end{align}
where $(\vx, y)\in \gM$ is an arbitrary exemplar stored in the memory bank.

At each iteration of training, CLS-ER samples a mini-batch $B_t$ from the current domain and a mini-batch $B^\prime_t$ from the episodic memory. It then concatenates $B_t$ and $B^\prime_t$ for the cross entropy loss minimization with the ground-truth labels, and uses $B^\prime_t$ to minimize the MSE loss between the logits of $h$ and $H_{t-1}$. To align the loss formulation of CLS-ER with that of ESM-ER~\cite{sarfraz2023error}, here we consider the scenarios where the losses evaluated on $B_t$ and $B^\prime_t$ are individually calculated, i.e., we consider $\hat{\ell}_{B_t}(h)+\hat{\ell}_{B^\prime_t}(h)$ instead of $\hat{\ell}_{B_t\cup B^\prime_t}(h)$. 
Based on the assumption from \cite{hinton2015distilling}, the MSE loss on the logits is equivalent to the cross-entropy loss on the predictions under certain conditions. Therefore, following the same balanced sampling strategy assumptions as in ER, the original CLS-ER training objective can be transformed as follows:
\begin{align}
    \gL_{\text{CLS-ER}}(h) 
    &= \hat{\ell}_{B_t}(h)+\hat{\ell}_{B^\prime_t}(h) + \lambda \hat{\ell}_{B^\prime_t}(h, H_{t-1})\\
    % &= \frac{|B_t|}{|B_t|+|B^\prime_t|} \hat{\ell}_{B_t}(h) + \frac{|B^\prime_t|}{|B_t|+|B^\prime_t|} \hat{\ell}_{B^\prime_t}(h) + \lambda \hat{\ell}_{B^\prime_t}(h, H_{t-1}) \\
    &= \hat{\ell}_{B_t}(h) + \sum_{i=1}^{t-1} \frac{1}{t-1} \hat{\ell}_{B^\prime_i}(h) + \sum_{i=1}^{t-1} \frac{\lambda}{t-1} \hat{\ell}_{B^\prime_i}(h, H_{t-1}).
\end{align}
Therefore, by imposing the constraint $\alpha_i+\beta_i+\gamma_i=1$, we find that $\lambda=t-2$. Substituting this value back into $\nicefrac{\lambda}{t-1}$ yields the equivalence that CLS-ER corresponds to $\{\alpha_i=\nicefrac{\lambda}{\lambda+1}, \beta_i=0, \gamma_i=\nicefrac{1}{\lambda+1}\}$ in \ours, where $\lambda=t-2$~(Row~6 in \Tabref{tab:app-unification}).

\textbf{Error Sensitivity Modulation based Experience Replay (ESM-ER)~\cite{sarfraz2023error}} builds upon CLS-ER by incorporating an additional error sensitivity modulation module. The primary goal of ESM-ER is to mitigate sudden representation drift caused by excessively large loss values during current-domain learning.
Let's consider $(\vx, y)\sim \gD_t$, which represents a sample from the current domain batch. In ESM-ER, the cross-entropy loss value of this sample is evaluated using the stable model $H_{t-1}^{(s)}$ and can be expressed as:
\begin{align}
    \ell(\vx, y) &= -\log([H_{t-1}^{(s)}(\vx)]_y). 
\end{align}
To screen out those samples with a high loss value, ESM-ER assigns each sample a weight $\lambda$ by comparing the loss with their expectation value, for which ESM-ER uses a running estimate $\mu$ as its replacement. This can be formulated as follows:
\begin{align}
    \lambda(\vx) &= 
    \begin{cases}
        1 & \text{if } \ell(\vx,y) \leq \beta\cdot\mu \\ \addlinespace[1.2ex]
        \frac{\mu}{\ell(\vx, y)}  & \text{o.w.} 
    \end{cases}
\end{align}
where $\beta$ is a hyperparameter that determines the margin for a sample to be classified as low-loss. For the sake of analysis, we make the following assumptions: (i) $\beta=1$; (ii) the actual expected loss value $\E_{\vx,y}[\ell(\vx,y)]$ is used instead of the running estimate $\mu$; (iii) a \emph{hard screening mechanism} is employed instead of the current re-scaling approach. Based on these assumptions, we determine the sample-wise weights $\lambda^\star$ according to the following rule:
\begin{align}
    \lambda^\star(\vx) &= 
    \begin{cases}
        1 & \text{if } \ell(\vx,y) \leq \E_{\vx,y}[\ell(\vx,y)] \\ \addlinespace[1.2ex]
        0 & \text{o.w.} 
    \end{cases}
\end{align}

Under the assumption that the loss value $\ell(\vx,y)$ follows an exponential distribution, denoted as $\ell(\vx,y)\sim\operatorname{Exp}(\lambda_0)$, where the probability density function is given by $f(\ell(\vx,y), \lambda_0) = \lambda_0e^{-\lambda_0 \ell(\vx,y)}$, we can calculate the expectation of the loss as $\E_{\vx,y}[\ell(\vx,y)] = \nicefrac{1}{\lambda_0}$. Based on this, we can now determine the expected ratio $r$ of the \emph{unscreened samples} in a mini-batch using the following equation:
\begin{align}
    r &= \int_{0}^{\frac{1}{\lambda_0}} 1 \cdot \lambda_0e^{-\lambda_0 \ell(\vx,y)} \, d \ell(\vx,y) = \int_{0}^{1} e^{-y} dy = (1-e^{-1}). 
\end{align}
The ratio $r$ represents the proportion of effective samples in the current-domain batch, as the weights $\lambda^\star(\vx)$ of the remaining samples are set to $0$ due to their high loss value. Consequently, the original training loss of ESM-ER can be transformed as follows: 
\begin{align}
    \gL_{\text{ESM-ER}}(h)
    &= r \cdot \hat{\ell}_{B_t}(h)+\hat{\ell}_{B^\prime_t}(h) + \lambda \hat{\ell}_{B^\prime_t}(h, H_{t-1})\\
    &= r \cdot \hat{\ell}_{B_t}(h) + \sum_{i=1}^{t-1} \frac{1}{t-1} \hat{\ell}_{B^\prime_i}(h) + \sum_{i=1}^{t-1} \frac{\lambda}{t-1} \hat{\ell}_{B^\prime_i}(h, H_{t-1}).
\end{align}
% Therefore after applying the constraint of $\alpha_i+\beta_i+\gamma_i=1$, we have $\lambda=r\cdot(t-1)-1$. Substituting it back to $\nicefrac{\lambda}{r(t-1)}$ will give us that \textbf{ESM-ER is equivalent to $\{\alpha_i=\nicefrac{\lambda}{\lambda+1} , \beta_i=0, \gamma_i=\nicefrac{1}{\lambda+1} \}$ in \ours}, where $\lambda=r\cdot(t-1)-1=(1-e^{-1})(t-1)-1$ should be set.
After applying the constraint of $\alpha_i+\beta_i+\gamma_i=1$, we obtain $\lambda=r\cdot(t-1)-1$. Substituting this value back into $\nicefrac{\lambda}{r(t-1)}$, we find that ESM-ER is equivalent to $\{\alpha_i=\nicefrac{\lambda}{\lambda+1} , \beta_i=0, \gamma_i=\nicefrac{1}{\lambda+1} \}$ in \ours, where $\lambda=r\cdot(t-1)-1=(1-e^{-1})(t-1)-1$ should be set~(Row~7 in \Tabref{tab:app-unification}).

{\textbf{Bias Correction (BiC)}~\cite{wu2019large}} was the first to apply class incremental learning at a large scale, covering extensive image classification datasets including ImageNet~(1,000 classes,~\cite{russakovsky2015imagenet}) and MS-Celeb-1M~(10,000 classes,~\cite{guo2016ms}). 
Similar to iCaRL~\cite{rebuffi2017icarl}, BiC treats each data example (from new classes' data and the memory) with the same level of importance. Different from its previous work, the distillation loss in BiC~($L_d$ in the original paper) implicitly includes the cross-domain distillation loss described by \Lmmref{lemma:between-domain}, as it computes the old classifier's output evaluated on the new data (considering only old classes). In addition, BiC further re-balances the classification loss ($L_c$ in the original paper) and the distillation loss ($L_d$) based on the number of old classes $n$ and new classes $m$ as follows:
\begin{align}
    \gL_{\text{BiC}}
    &= \frac{n}{n+m} \cdot L_d + \frac{m}{n+m} \cdot L_c.
\end{align}

By interpreting the distillation loss $L_d$ as the combination of the intra-domain loss~(\Lmmref{lemma:in-domain}) and the cross-domain distillation loss~(\Lmmref{lemma:between-domain}), and substituting $(n, m)$ with $(t-1, 1)$ due to the consistent number of classes in each domain, we arrive at the BiC model for DIL: 
\begin{align}
    \nonumber\gL_{\text{BiC}}(h)
    =& \frac{t-1}{t} \left[N_t \cdot \hat{\ell}_{\gX_t}(h, H_{t-1}) + \sum_{i=1}^{t-1} \tilde{N}_i \cdot \hat{\ell}_{\gX_i}(h, H_{t-1}) \right] \\
    &+ \frac{1}{t} \left[N_t \cdot \hat{\ell}_{\gX_t}(h) + \sum_{i=1}^{t-1} \tilde{N}_i \cdot \hat{\ell}_{\gX_i}(h) \right].
\end{align}
The equation above relies on the number of the current-domain examples and the exemplars (i.e., memory), which is not constant in practice. Hence we replace $(N_t, \tilde{N}_i)$ with the mini-batch size $(|B_t|, |B_i|)$. After re-organizing the equation above, we have
\begin{align}
    \nonumber\gL_{\text{BiC}}(h)
    =& \hat{\ell}_{B_t}(h) 
    + \sum_{i=1}^{t-1} \frac{(t-1)|B_i|}{|B_t|} \cdot \hat{\ell}_{B_i^\prime}(h, H_{t-1})\\
    &+ (t-1)\cdot \hat{\ell}_{B_t}(h, H_{t-1})
    + \sum_{i=1}^{t-1} \frac{|B_i|}{|B_t|} \cdot \hat{\ell}_{B_i^\prime}(h).
\end{align}
The immediate observatio is that \emph{BiC exhibits a significant bias towards retaining knowledge from past domains}. This is evident in the coefficient of coefficient of cross-domain distillation summed over the past domains $\sum_{i=1}^{t-1}\beta_i=(t-1)$, which violates the constraints posed in this work ($\alpha_i+\beta_i+\gamma_i=1$). However, if we slightly relax BiC's formulation and focus on the relative ratios of the three coefficient, we get $\alpha_i:\beta_i:\gamma_i=(t-1)B_i:(t-1)B_t:B_i$. Further applying the constraint $\alpha_i+\beta_i+\gamma_i=1$ and assuming $B_i=B_t$ to enforce a balanced sampling strategy over different domains, we arrive at $\{\alpha_i=\beta_i=\nicefrac{t-1}{2t-1}, \gamma_i=\nicefrac{1}{2t-1}\}$ (Row 8 in \Tabref{tab:app-unification}).

\begin{figure*}[t]
	\begin{center}
		\includegraphics[width=1\textwidth]{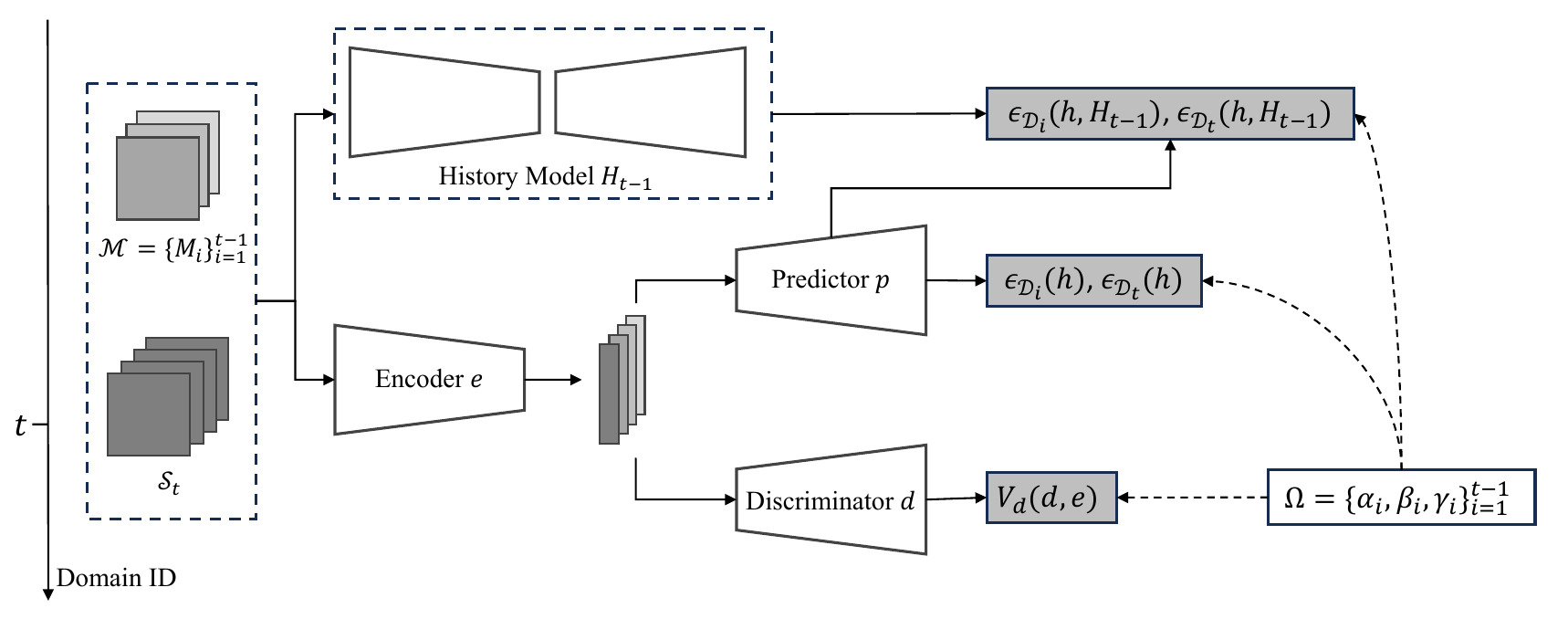}
	\end{center}
	\vskip 0cm
	\caption{
        % {An overview of the synthetic dataset \emph{HD-Balls, and embedding distribution visualization of baselines and \ours}.}} 
        % \textbf{(a-c)}: visualization of \emph{HD-Balls} in 3D space with 3 domains~
        \textbf{Diagram of \ours.} $\gM=\{M_i\}_{i=1}^{t-1}$ represents the memory bank that stores all the past exemplars. $\gS_t$ corresponds to the dataset from the current domain $t$, and the current model $h=p\circ e$ is depicted separately in the diagram. The three different categories of losses are illustrated in the dark rectangles, while the weighting effect of the learned replay coefficient $\Omega=\{\alpha_i, \beta_i, \gamma_i\}_{i=1}^{t-1}$ is depicted using dashed lines.
        % \textbf{(d-h)}: visualization of embedding distributions produced by Oracle~(Joint), \ours, and three baselines~(ER~\cite{riemer2018learning}, DER++~\cite{buzzega2020dark}, and CLS-ER~\cite{arani2022learning}), with average accuracies annotated alongside. Our method \ours achieves the highest performance and the best embedding alignment effect in the same way as the oracle model Joint-training. 
    }
	\label{fig:udil}
\end{figure*}

% \subsection{A Possibility for More Inclusive Unification}\label{app:more-inclusive-unification}

% \section{UDIL Implementation Details}\label{app:implementation}
\section{Implementation Details of UDIL}\label{app:implementation}
This section delves into the implementation details of the \ours algorithm. The algorithmic description of \ours is presented in \Algref{alg:udil} and a diagram is presented in \Figref{fig:udil}. However, there are several practical issues to be further addressed here, including (i)~how to exert the constraints of probability simplex ($[\alpha_i, \beta_i, \gamma_i]\in \mathbb{S}^2$) and (ii)~how the memory is maintained. These two problems will be addressed in \secref{app:replay-coefficient} and \secref{app:memory-maintain}.
{Then, \secref{app:auxiliary-losses} will introduce two auxiliary losses that improve the stability and domain alignment for the encoder during training.}
Next, \secref{app:evaluation} will cover the evaluation metrics used in this paper. Finally, \secref{app:baselines} and \secref{app:training} will present a detailed introduction to the main baselines and the specific training schemes we follow for empirical evaluation.

\subsection{Modeling the Replay Coefficients $\Omega=\{\alpha_i, \beta_i, \gamma_i\}$}\label{app:replay-coefficient}
Instead of directly modeling $\Omega$ in a way such that it can be updated by gradient descent and satisfies the constraints that $\alpha_i+\beta_i+\gamma_i=1$ and $\alpha_i, \beta_i, \gamma_i \geq 0$ at the same time, we use a set of logit variables $\{\bar{\alpha}_i, \bar{\beta}_i, \bar{\gamma}_i\}\in \mathbb{R}^3$ and the \emph{softmax} function to indirectly calculate $\Omega$ during training. Concretely, we have:
\begin{align}
    \begin{bmatrix}
        \alpha_i \\
        \beta_i \\
        \gamma_i
    \end{bmatrix} &= \operatorname{softmax}\left(\begin{bmatrix}
        \bar{\alpha}_i \\
        \bar{\beta}_i \\
        \bar{\gamma}_i
    \end{bmatrix}\right) = \begin{bmatrix}
        \nicefrac{\exp(\bar{\alpha}_i)}{Z_i} \\
        \nicefrac{\exp(\bar{\beta}_i)}{Z_i} \\
        \nicefrac{\exp(\bar{\gamma}_i)}{Z_i}
    \end{bmatrix},
\end{align}
where $Z_i = \exp(\bar{\alpha}_i) + \exp(\bar{\beta}_i) + \exp(\bar{\gamma}_i)$ is the normalizing constant. At the beginning of training on domain~$t$, the logit variables $\{\bar{\alpha}_i, \bar{\beta}_i, \bar{\gamma}_i\}=\{0,0,0\}$ are initialized to all zeros, since we do not have any bias towards any upper bound. During training, they are updated in the same way as the other parameters with gradient descent.

\subsection{Memory Maintenance with Balanced Sampling}\label{app:memory-maintain}
Different from DER++~\cite{buzzega2020dark} and its following work~\cite{arani2022learning,sarfraz2023error} that use reservoir sampling~\cite{vitter1985random} to maintain the episodic memory, \ours  adopts a random balanced sampling after training on each domain. To be more concrete, given a memory bank with fixed size $|\gM|$, after domain~$t$'s training is complete, we will assign each domain a quota of $\nicefrac{|\gM|}{t}$. For the current domain~$t$, we will randomly sample $\lfloor\nicefrac{|\gM|}{t}\rfloor$ exemplars from its dataset; for all the previous domains $i\in [t-1]$, we will randomly swap out $\lceil\nicefrac{|\gM|}{t-1} - \nicefrac{|\gM|}{t}\rceil$ exemplars from the memory to make sure each domain has roughly the same number of exemplars. To ensure a fair comparison, we use the same random balanced sampling strategy for all the other baselines. The following \Algref{alg:balancedsampling} shows the detailed procedure of random balanced sampling.
\begin{algorithm}
\caption{Balanced Sampling for \ours}\label{alg:balancedsampling}
\begin{algorithmic}[1]
\Require memory bank $\gM=\{M_i\}_{i=1}^{t-1}$, current domain dataset $\gS_t$, domain ID $t$.
\vspace{0.5em}
\For{$i = 1, \cdots, t-1$}
    \For{$j = 1, \cdots, \lceil \nicefrac{|\gM|}{t-1} - \nicefrac{|\gM|}{t} \rceil$}
        \State $(\vx, y) \gets \text{RandomSample}(M_i)$
        \State $(\vx^\prime, y^\prime) \gets \text{RandomSample}(\gS_t)$
        \State Swap $(\vx^\prime, y^\prime)$ into $\gM$, replacing $(\vx, y)$
    \EndFor
\EndFor
\State \Return $\gM$ 
\end{algorithmic}
\end{algorithm}

\subsection{{Improving Stability and Domain Alignment for the Embedding Distribution}} \label{app:auxiliary-losses}
In this section, we will examine the training loss employed to align the embedding distribution across distinct domains.
As discussed in \secref{sec:algorithm}, we decompose the model $h$ into an encoder $e$ and a predictor $p$ as $h=p\circ e$. 
In order to establish a tighter upper bound proposed in \Thmref{thm:generalization-bound}, we introduce a discriminator $d$ that aims to distinguish embeddings based on domain IDs. Specifically, during the training process of \ours, given a set of coefficients $\Omega=\{\alpha_i, \beta_i, \gamma_i\}$, both the encoder $e$ and the discriminator $d$ engage in the following minimax game:
\begin{align}
    \min_{e} \max_{d}
    & - \lambda_d V_d(d, e, \circhat{\Omega}), \label{eq:minimax-game}
\end{align}
where $(\circhat{\cdot})$ represents the "copying-weights-and-stopping-gradients" operation, and $\lambda_d$ is a hyperparameter introduced to control the strength of domain embedding alignment. More specifically, the value function of the mini-max game $V_d$ is defined as follows:
\begin{align}
    V_d(d, e, \circhat{\Omega}) 
    &= \left(\sum_{i=1}^{t-1}\circhat{\beta}_i\right) \frac{1}{N_i} \sum_{\vx\in \gS_t} \left[ -\log\left( \left[d(e(\vx))\right]_t \right) \right] + 
    \sum_{i=1}^{t-1} \frac{{\circhat{\beta}}_i}{\tilde{N}_i} \sum_{\vx \in \tilde{\gX}_i}\left[ -\log\left( \left[d(e(\vx))\right]_i \right) \right].
\end{align}
As previously mentioned, the practical effect of \Eqref{eq:minimax-game} is to align the embedding distribution of different domains, thus enhancing the model's generalization ability to both previously encountered domains and those that may be encountered in the future. 

However, actively altering the embedding space can lead to the well-known stability-plasticity dilemma~\cite{kim2023stability,wang2023comprehensive,de2021continual,jung2023new,mirzadeh2020understanding}. This dilemma arises when the model needs to modify a significant number of parameters to achieve domain alignment for a new domain, potentially resulting in a mismatch between the predictor $p$ and the encoder $e$, which, in turn, can lead to catastrophic forgetting.
Furthermore, it is worth noting that the adversarial training scheme described in \Eqref{eq:minimax-game} is primarily designed for unsupervised domain alignment~\cite{zhang2019bridging,long2018conditional,ganin2016domain,zhao2017learning,dai2019adaptation,xu2022graph,xu2023domain,wang2020continuously}, where the semantic labels in the target domain(s) are not available during training. This implies that the current adversarial training technique does not fully leverage the label information in domain incremental learning.

To tackle the aforementioned challenges, i.e., (i)~maintaining a stable embedding distribution across all domains and (ii)~accelerating domain alignment with label information, we incorporate two additional auxiliary losses: (i) the \emph{past embedding distillation loss}~\cite{jung2016less,ni2021revisiting} and (ii) the \emph{supervised contrastive loss}~\cite{khosla2020supervised,graf2021dissecting}. It's important to note that these auxiliary losses, in conjunction with the adversarial feature alignment loss $V_d$, operate exclusively on the encoder space of the model and do not impact the original objectives outlined in \Thmref{thm:generalization-bound}, provided that encoder $e$ and predictor $p$ remain sufficiently strong. Therefore, the two losses are used to simply stabilize the training, without compromising the theoretical significance of our work.

In \textbf{past embedding distillation}, also known as \emph{representation(al) distillation} or \emph{embedding distillation} in previous work on continual learning~\cite{jung2016less,ni2021revisiting,michieli2021knowledge}, the model stores the embeddings of the memory after being trained on domain $t-1$. It then uses them to constrain the encoder's behavior: the features produced on these samples should not change too much during the current domain training. After decoupling the history model $H_{t-1} = P_{t-1} \circ E_{t-1}$, where $E_{t-1}$ is the history encoder and $P_{t-1}$ is history predictor, we define the past embedding distillation loss $V_p$ as
\begin{align}
    V_p(e) &= \sum_{i=1}^{t-1} \E_{\vx \sim \gD_i}\left[ \|e(\vx) - E_{t-1}(\vx)\|_2^2 \right]. \label{eq:past-embedding-distillation}
\end{align}
The loss above promotes the stability of the embedding distribution from past domains. When combined with the adversarial embedding alignment loss in \Eqref{eq:minimax-game}, it encourages the embedding distribution of the current domain to match those of the previous domains, but not vice versa.

As supervised variations of contrastive learning~\cite{chen2020simple,he2020momentum,chen2020improved,shi2020run,ni2021revisiting,chen2021cil,shi2021towards,shi2022efficacy}, the \textbf{supervised contrastive loss}~\cite{khosla2020supervised,graf2021dissecting} will compensate for the fact that \Eqref{eq:minimax-game} does not utilize the label information to align different domains, and therefore lack in the efficiency of aligning two domains' embedding distribution. The supervised contrastive learning pulls together the embeddings of the same label and pushes apart those with distinct labels. Notably, it is done in a ``domain-agnostic'' manner, i.e., the domain labels are not considered. Denoting as $\gP = \frac{1}{t}\sum_{i}^{t}\gD_i$ the combined data distribution of all domains and as $\gP_{\cdot|y}$ the data distribution given the class label $y \in [K]$, the supervised contrastive loss for embedding alignment is defined as follows:
\begin{align}
    V_s(e) 
    &= \E_y\E_{\substack{(\vx_1, \vx_2)\sim \gP_{\cdot|y} \\ \{\vu_i\}_{i=1}^B\sim \gP}}
    \left[ 
        -\log \frac{\exp\{- s_e(\vx_1, \vx_2)\}}{\exp\{-s_e(\vx_1, \vx_2)\} + \sum_{i=1}^B \exp\{-s_e(\vx_1, \vu_i)\}}
    \right],
\end{align}
where $s_e(\vu, \vv) = \| e(\vu) - e(\vv) \|_2^2$ is the squared Euclidean distance for any $\vu, \vv \in \gX$. 
%\hao{We can just use $\gP$ rather than $\gP^B$ and $\gP^2$.}

Introducing the supervised contrastive loss to continual learning is novel, as existing methods often harness its ability to create a compact representation space, thereby mitigating representation overlapping in class incremental learning~\cite{ni2021revisiting,guo2022online,cha2021co2l,wang2023comprehensive}. 
However, in this work, the primary motivation for using the supervised contrastive loss to facilitate domain alignment lies in its optimal solution~\cite{graf2021dissecting,zhou2022all,davari2022probing}. This optimal point referred to as \emph{neural collapse}~\cite{kothapalli2022neural,papyan2020prevalence,zhu2021geometric,yaras2022neural}, where the embeddings of the same class collapse to the same point, and those of different classes are sparsely distributed. It is easy to envision that when \emph{the optimal state of the supervised constrastive loss} is attained across different domains, it concurrently achieves \emph{perfect domain alignment}. %\hao{I see what you mean, but the last sentence is very long and confusing. Please break it to smaller sentences.}

The loss function that fosters stability and domain alignment in the encoder $e$ can be summarized as follows:
\begin{align}
    \gL_{\text{enc}}(e) &= -\lambda_d V_d(d,e,\circhat{\Omega}) + \lambda_p V_p(e) + \lambda_s V_s(e).
\end{align}
Here, two hyper-parameters, $\lambda_p$ and $\lambda_s$, balance the influence of each individual loss on the encoder's embedding distribution. In practice, the final performance of \ours is not significantly affected by the values of $\lambda_p$ and $\lambda_s$, and a wide range of values for these parameters can yield reasonable results.

\subsection{Evaluation Metrics}\label{app:evaluation}
In continual learning, many evaluation metrics are based on the \textbf{Accuracy Matrix} $\mR\in\mathbb{R}^{T\times T}$, where $T$ represents the total number of tasks (domains). In the accuracy matrix $\mR$, the entry $R_{i,j}$ corresponds to the accuracy of the model when evaluated on task $j$ \emph{after} training on task $i$.
With this definition in mind, we primarily focus on the following specific metrics:

\textbf{Average Accuracy~(Avg.~Acc.)} up until domain $t$ represents the average accuracy of the first $t$ domains after training on these domains. We denote it as $A_t$ and define it as follows:
\begin{align}
    A_t &\triangleq \frac{1}{t} \sum_{i=1}^{t} R_{t,i}.
\end{align}
In most of the continual learning literature, the final average accuracy $A_T$ is usually reported. In our paper, this metric is reported in the column labeled ``\textbf{overall}''. 
The average accuracy of a model is a crucial metric as it directly corresponds to the primary optimization goal of minimizing the error on all domains, as defined in \Eqref{eq:all-domain loss}.

Additionally, to better illustrate the learning (and forgetting) process of a model across multiple domains, we propose the use of the "\textbf{Avg. of Avg. Acc.}" metric ${A}_{t_1:t_2}$, which represents the average of average accuracies for a consecutive range of domains starting from domain $t_1$ and ending at domain $t_2$. Specifically, we define this metric as follows:
\begin{align}
    A_{t_1:t_2} &\triangleq \frac{1}{t_2-t_1+1} \sum_{i=t_1}^{t_2} A_i.
\end{align}
This metric provides a condensed representation of the trend in accuracy variation compared to directly displaying the entire series of average accuracies $\{A_1, A_2, \cdots, A_T\}$.
We report this Avg. of Avg. Acc. metric in all tables (except in \Tabref{tab:three-data} due to the limit of space). 

\textbf{Average Forgetting~(i.e., `Forgetting' in the main paper)} defines the average of the largest drop of accuracy for each domain up till domain $t$. We denote this metric as $F_t$ and define it as follows:
\begin{align}
    F_t &\triangleq \frac{1}{t-1}\sum_{j=1}^{t-1}f_t(j),
\end{align}
where $f_t(j)$ is the forgetting on domain~$j$ after the model completes the training on domain~$t$, which is defined as:
\begin{align}
    f_t(j) \triangleq \max_{l\in [t-1]} \{R_{l,j} - R_{t,j}\}.
\end{align}
Typically, the average forgetting is reported after training on the last domain $T$. Measuring forgetting is of great practical significance, especially when two models have similar average accuracies. It indicates how a model balances \emph{stability} and \emph{plasticity}. If a model $\gP$ achieves a reasonable final average accuracy across different domains but exhibits high forgetting, we can conclude that this model has high plasticity and low stability. It quickly adapts to new domains but at the expense of performance on past domains. On the other hand, if another model $\gS$ has a similar average accuracy to $\gP$ but significantly lower average forgetting, we can infer that the model $\gS$ has high stability and low plasticity. It sacrifices performance on recent domains to maintain a reasonable performance on past domains. Hence, to gain a comprehensive understanding of model performance, we focus on evaluating two key metrics: \emph{Avg. Acc.} and \emph{Forgetting}. These metrics provide insights into how models balance stability and plasticity and allow us to assess their overall performance across different domains.

\textbf{Forward Transfer} $W_t$ quantifies the extent to which learning from past $t-1$ domains contributes to the performance on the next domain $t$. It is defined as follows:
\begin{align}
    W_t &\triangleq \frac{1}{t-1} \sum_{i=2}^{t}R_{i-1, i} - r_i,
\end{align}
where $r_i$ is the accuracy of a randomly initialized model evaluated on domain~$i$. For domain incremental learning, where the model does not have access to future domain data and does not explicitly optimize for higher Forward Transfer, the results of this metric are \emph{typically random. Therefore, we do not report this metric in the complete tables presented in this section.}

\subsection{Introduction to Baselines}\label{app:baselines}
We compare \ours with the state-of-the-art continual learning methods that are either specifically designed for domain incremental learning or can be easily adapted to the domain incremental learning setting. 
% For fair comparison, we do not consider methods requiring pre-training~(e.g., representation-level pre-training followed by a small classifier for incremental learning~\tocite or whole-model pre-training followed by prompt tuning~\tocite). % can delete this to save space if needed
Exemplar-free baselines include online Elastic Weight Consolidation~(\textbf{oEWC})~\cite{schwarz2018progress}, Synaptic Intelligence~(\textbf{SI})~\cite{zenke2017continual}, and Learning without Forgetting~(\textbf{LwF})~\cite{li2017learning}. 
% As they are constrained under stricter condition, we will see they cannot achieve comparable result when the domain distributions are intricate. % can delete this to save space if needed
Memory-based domain incremental learning baselines include Gradient Episodic Memory~(\textbf{GEM})~\cite{lopez2017gradient}, Averaged Gradient Episodic Memory~(\textbf{A-GEM})~\cite{chaudhry2018efficient}, Experience Replay~(\textbf{ER})~\cite{riemer2018learning}, Dark Experience Replay~(\textbf{DER++})~\cite{buzzega2020dark}, and two recent methods, Complementary Learning System based Experience Replay~(\textbf{CLS-ER})~\cite{arani2022learning} and Error Senesitivity Modulation based Experience Replay~(\textbf{ESM-ER})~\cite{sarfraz2023error}. 
In addition, we implement the fine-tuning (\textbf{Fine-tune})~\cite{li2017learning} and joint-training (\textbf{Joint}) as the performance lower bound and upper bound (Oracle). Here we provide a short description of the primary idea of the memory-based domain incremental learning baselines. 
\begin{itemize}
    \item \textbf{GEM}~\cite{lopez2017gradient}: The baseline method that uses the memory to provide additional optimization constraints during learning the current domain. Specifically, the update of the model cannot point towards the direction at which the loss of any exemplar increases. 
    \item \textbf{A-GEM}~\cite{chaudhry2018efficient}: The improved baseline method where the constraints of GEM are averaged as one, which shortens the computational time significantly.
    \item \textbf{ER}~\cite{riemer2018learning}: The fundamental memory-based domain incremental learning framework where the mini-batch of the memory is regularly replayed with the current domain data.
    \item \textbf{DER++}~\cite{buzzega2020dark}: A simple yet effective replay-based method where an additional logits distillation~(dubbed ``dark experience replay'') is applied compared to the vanilla ER.
    \item \textbf{CLS-ER}~\cite{arani2022learning}: A complementary learning system inspired replay method, where two exponential moving average models are used to serve as the semantic memory, which provides the logits distillation target during training.
    \item \textbf{ESM-ER}~\cite{sarfraz2023error}: An improved version of CLS-ER, where the effect of large errors when learning the current domain is reduced, dubbed ``error sensitivity modulation''. 
\end{itemize}

\subsection{Training Schemes}\label{app:training}
\textbf{Training Process.}\quad For each group of experiments, we run three rounds with different seeds and report the mean and standard deviation of the results. We follow the optimal configurations~(epochs and learning rate) stated in \cite{buzzega2020dark,sarfraz2023error} for the baselines in \emph{P-MNIST} and \emph{R-MNIST} dataset. For \emph{HD-Balls} and \emph{Seq-CORe50}, we first search for the optimal training configuration for the joint learning, and then grid-search the configuration in a small range near it for the baselines listed above. For our \ours framework, as it involves adversarial training for the domain embedding alignment, we typically need a configuration that has larger number of epochs and smaller learning rate. We use a simple grid search to achieve the optimal configuration for it as well.

\textbf{Model Architectures.}\quad For the baseline methods and \ours in the same dataset, we adopt the same backbone neural architectures to ensure fair comparison. In \emph{HD-Balls}, we adopt the same multi-layer perceptron with the same separation of encoder and decoder as in CIDA~\cite{wang2020continuously}, where the hidden dimension is set to 800. In \emph{P-MNIST} and \emph{R-MNIST}, we adopt the same multi-layer perceptron architecture as in DER++~\cite{buzzega2020dark} with hidden dimension set to 800 as well. In \emph{Seq-CORe50}, we use the ResNet18~\cite{he2016deep} as our backbone architecture for all the methods, where the layers before the final average pooling are treated as the encoder $e$, and the remaining part is treated as the predictor $p$.

\textbf{Hyperparameter Setting.}\quad For setting the hyper-parameter embedding alignment strength coefficient $\lambda_d$ and parameter $C$ that models the combined effect of VC-dimension $d$ and error tolerance $\delta$, we use grid search for each dataset, where the range $\lambda_d\in[0.01, 100]$ and $C\in [0, 1000]$ are used.

% \shz{Specifically, we first set $C=0$ and search the range $\lambda_d\in\{0.01, 0.1, 0.5, 1, 2, 5, 10, 100\}$ and then fix the best $\lambda_d$ and search for $C\in\{0, 1, 5, 10, 20, 50, 100, 1000\}$}

\section{Additional Empirical Results}\label{app:empirical}
% This section provides additional empirical results of the \ours algorithm. 
% To start with, \secref{app:evaluation} introduces the primary evaluation metrics used throughour this work. 
% Then \secref{app:additional-empirical} presents the complete empirical results for three real-world datasets, including the detailed experimental setting, quantitative results on different settings of memory size, and some additional qualitative results such as embedding distribution visualization. 
% Finally \secref{app:ablation} presents ablation studies that showcase the effectiveness of each adaptive coefficient.

This section presents additional empirical results of the \ours algorithm.
\secref{app:results-memory} will show the additional results on different constraints with varying memory sizes. 
\secref{app:results-visual} provides additional qualitative results: visualization of embedding distributions, to showcase the importance of the embedding alignment across domains.
% Finally, \secref{app:ablation} presents ablation studies that demonstrate the effectiveness of each adaptive coefficient.

\subsection{Empirical Results on Varying Memory Sizes}\label{app:results-memory}
Here we present additional empirical results to validate the effectiveness of our \ours framework using varying memory sizes. The evaluation is conducted on three real-world datasets, as shown in \Tabref{tab:pmnist-complete}, \Tabref{tab:rmnist-complete}, and \Tabref{tab:core50-complete}. By increasing the memory size from $400$ to $800$ in \twoTabref{tab:pmnist-complete}{tab:rmnist-complete} and from $500$ to $1000$ in \Tabref{tab:core50-complete}, we can investigate the impact of having access to a larger pool of past experiences on the continual learning performance, which might occur when the constraint on memory capacity is relaxed. This allows us to study the benefits of a more extensive memory in terms of knowledge retention and performance improvement.
On the other hand, by further decreasing the memory size to the extreme of $200$ in \twoTabref{tab:pmnist-complete}{tab:rmnist-complete}, we can explore the consequences of severely limited memory capacity. This scenario simulates situations where memory constraints are extremely tight, and the model can only retain a small fraction of past domain data, for example, a model deployed on edge devices. To ensure a fair comparison, here we use the same best configuration found in the main body of this work. 

\begin{figure*}[t]
	\begin{center}
		\includegraphics[width=1\textwidth]{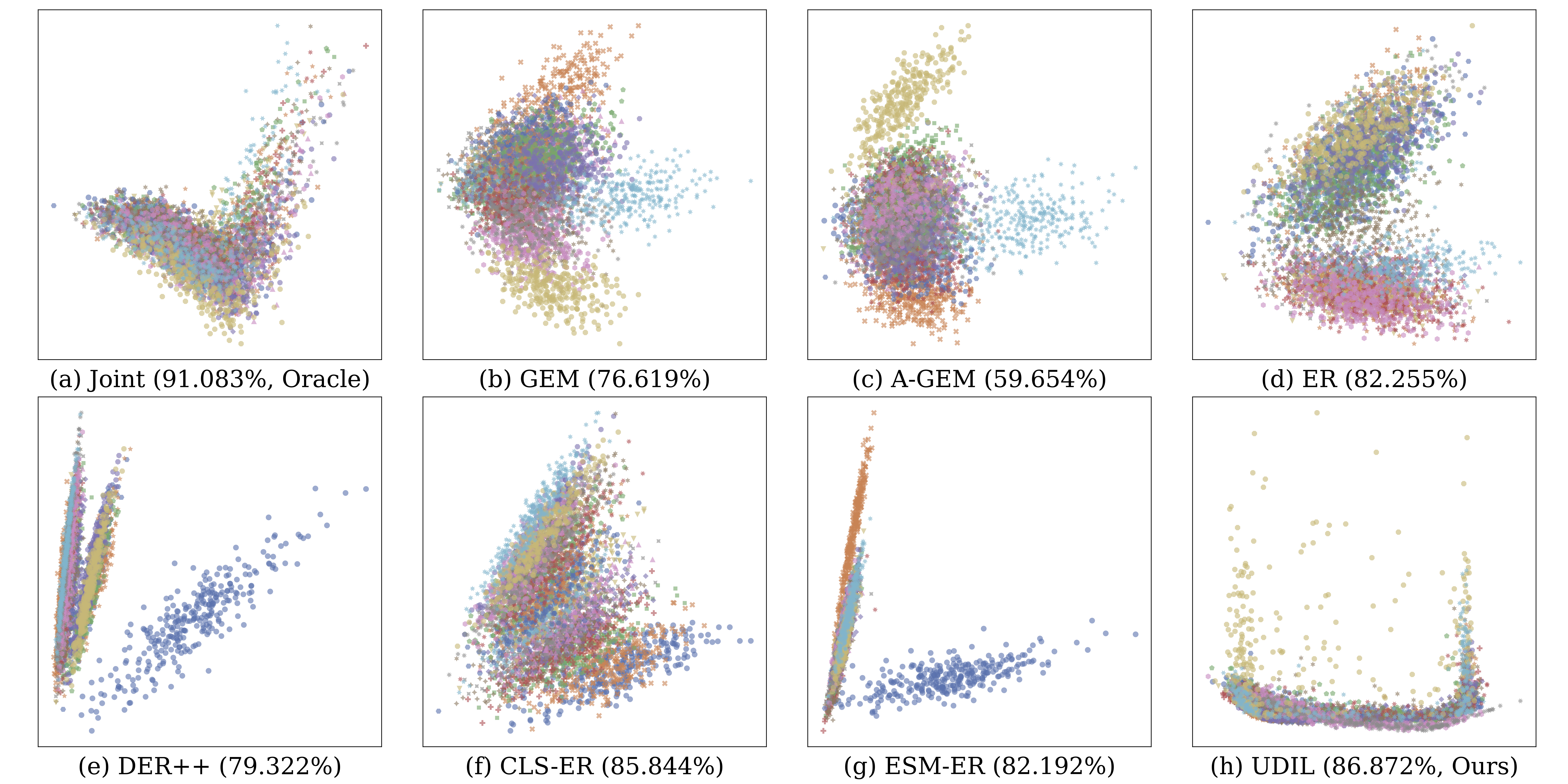}
	\end{center}
	\vskip -0.3cm
	\caption{
        % {An overview of the synthetic dataset \emph{HD-Balls, and embedding distribution visualization of baselines and \ours}.}} 
        % \textbf{(a-c)}: visualization of \emph{HD-Balls} in 3D space with 3 domains~
        More Results on \emph{HD-Balls}. Data is colored according to domain ID. All data is plotted after PCA~\cite{PRML}. 
        \textbf{(a-h)} Accuracy and embeddings learned by Joint (oracle), \ours, and six baselines with memory. Joint, as the \emph{oracle}, naturally aligns different domains, and \ours outperforms all baselines in terms of embedding alignment and accuracy. 
        % \textbf{(d-h)}: visualization of embedding distributions produced by Oracle~(Joint), \ours, and three baselines~(ER~\cite{riemer2018learning}, DER++~\cite{buzzega2020dark}, and CLS-ER~\cite{arani2022learning}), with average accuracies annotated alongside. Our method \ours achieves the highest performance and the best embedding alignment effect in the same way as the oracle model Joint-training. 
        % \textbf{(d-h)}: visualization of embedding distributions produced by Oracle~(Joint), \ours, and three baselines~(ER~\cite{riemer2018learning}, DER++~\cite{buzzega2020dark}, and CLS-ER~\cite{arani2022learning}), with average accuracies annotated alongside. Our method \ours achieves the highest performance and the best embedding alignment effect in the same way as the oracle model Joint-training. 
    }
	\label{fig:hd-balls-more}
\end{figure*}

The results in all three tables demonstrate a clear advantage of our \ours framework when the memory size is limited. In \emph{P-MNIST} and \emph{R-MNIST}, when the memory size $|\gM|=200$, the overall performance of \ours reaches $91.483\%$ and $82.796\%$ respectively, which outperforms the second best model DER++ by $0.757\%$ and a remarkable $6.125\%$. In \emph{Seq-CORe50}, when the memory size $|\gM|=500$ is set, \ours holds a $3.474\%$ lead compared to the second best result. When the memory size is larger, the gap between \ours and the baseline models is smaller. This is because when the memory constraint is relaxed, all the continual learning models should be at least closer to the performance upper bound, i.e., joint learning or `Joint (Oracle)' in the tables, causing the indistinguishable results among each other. Apparently, DER++ favors larger memory more than \ours, while \ours can still maintain a narrow lead in the large scale dataset \emph{Seq-CORe50}. 

\subsection{Visualization of Embedding Spaces}\label{app:results-visual}
Here we provide more embedding space visualization results for the baselines with the utilization of memory, shown in \Figref{fig:hd-balls-more}. As one of the primary objectives of our algorithm, embedding space alignment across multiple domains naturally follows the pattern shown in the joint learning and therefore leads to a higher performance. 

\clearpage

\begin{table*}[t]
\caption{\textbf{{Performances (\%) evaluated on \emph{P-MNIST}.}} Average Accuracy (Avg. Acc.) and Forgetting are reported to measure the methods' performance. ``$\uparrow$'' and ``$\downarrow$'' mean higher and lower numbers are better, respectively. We use \textbf{boldface} and \underline{underlining} to denote the best and the second-best performance, respectively. We use ``-'' to denote ``not appliable''. 
}
\begin{center}
\resizebox{1\linewidth}{!}{%
\begin{tabular}{l @{\hskip +0.1cm}c cccc cc}
	\toprule 
	\multirow{2}{*}{\textbf{Method}} & \multirow{2}{*}{\textbf{Buffer}} & \multicolumn{1}{c}{$\gD_{1:5}$} & \multicolumn{1}{c}{$\gD_{6:10}$} & \multicolumn{1}{c}{$\gD_{11:15}$} & \multicolumn{1}{c}{$\gD_{16:20}$} & \multicolumn{2}{c}{\textbf{Overall}}\\
    \cline{3-6}
	%  &
	& &  \multicolumn{4}{c}{Avg. Acc \scriptsize{($\uparrow$)}} & Avg. Acc \scriptsize{($\uparrow$)} & Forgetting \scriptsize{($\downarrow$)} \\
    \midrule
    \midrule
    % Fine-tune & -& - & - & - & - & 52.319\scriptsize{$\pm$0.030} & - & -\\
    Fine-tune & - & 92.506\scriptsize{$\pm$2.062} & 87.088\scriptsize{$\pm$1.337} & 81.295\scriptsize{$\pm$2.372} & 72.807\scriptsize{$\pm$1.817} & 70.102\scriptsize{$\pm$2.945} & 27.522\scriptsize{$\pm$3.042} \\
    \midrule
    oEWC~\cite{schwarz2018progress} & - & 92.415\scriptsize{$\pm$0.816} & 87.988\scriptsize{$\pm$1.607} & 83.098\scriptsize{$\pm$1.843} & 78.670\scriptsize{$\pm$0.902} & 78.476\scriptsize{$\pm$1.223} & 18.068\scriptsize{$\pm$1.321} \\
    SI~\cite{zenke2017continual} & -& 92.282\scriptsize{$\pm$0.862} & 87.986\scriptsize{$\pm$1.622} & 83.698\scriptsize{$\pm$1.220} & 79.669\scriptsize{$\pm$0.709} & 79.045\scriptsize{$\pm$1.357} & 17.409\scriptsize{$\pm$1.446} \\
    LwF~\cite{li2017learning} & - & 95.025\scriptsize{$\pm$0.487} & 91.402\scriptsize{$\pm$1.546} & 83.984\scriptsize{$\pm$2.103} & 76.046\scriptsize{$\pm$2.004} & 73.545\scriptsize{$\pm$2.646} & 24.556\scriptsize{$\pm$2.789} \\
	\midrule
    GEM~\cite{lopez2017gradient} & \multirow{7}{*}{200} & 93.310\scriptsize{$\pm$0.374} & 91.900\scriptsize{$\pm$0.456} & 89.813\scriptsize{$\pm$0.914} & 87.251\scriptsize{$\pm$0.524} & 86.729\scriptsize{$\pm$0.203} & 9.430\scriptsize{$\pm$0.156} \\
    A-GEM~\cite{chaudhry2018efficient} & & 93.326\scriptsize{$\pm$0.363} & 91.466\scriptsize{$\pm$0.605} & 89.048\scriptsize{$\pm$1.005} & 86.518\scriptsize{$\pm$0.604} & 85.712\scriptsize{$\pm$0.228} & 10.485\scriptsize{$\pm$0.196} \\
    ER~\cite{riemer2018learning} & & 94.087\scriptsize{$\pm$0.762} & 92.397\scriptsize{$\pm$0.464} & 89.999\scriptsize{$\pm$1.060} & 87.492\scriptsize{$\pm$0.448} & 86.963\scriptsize{$\pm$0.303} & 9.273\scriptsize{$\pm$0.255} \\
    DER++~\cite{buzzega2020dark} & & 94.708\scriptsize{$\pm$0.451} & \underline{94.582\scriptsize{$\pm$0.158}} & \underline{93.271\scriptsize{$\pm$0.585}} & 90.980\scriptsize{$\pm$0.610} & 90.333\scriptsize{$\pm$0.587} & 6.110\scriptsize{$\pm$0.545} \\
    CLS-ER~\cite{arani2022learning} & & 94.761\scriptsize{$\pm$0.340} & 93.943\scriptsize{$\pm$0.197} & 92.725\scriptsize{$\pm$0.566} & \underline{91.150\scriptsize{$\pm$0.357}} & \underline{90.726\scriptsize{$\pm$0.218}} & \underline{5.428\scriptsize{$\pm$0.252}} \\
    ESM-ER~\cite{sarfraz2023error} & & \underline{95.198\scriptsize{$\pm$0.236}} & 94.029\scriptsize{$\pm$0.427} & 91.710\scriptsize{$\pm$1.056} & 88.181\scriptsize{$\pm$1.021} & 86.851\scriptsize{$\pm$0.858} & 10.007\scriptsize{$\pm$0.864} \\
    \ours~(Ours) & & \textbf{95.747\scriptsize{$\pm$0.486}} & \textbf{94.695\scriptsize{$\pm$0.256}} & \textbf{93.756\scriptsize{$\pm$0.343}} & \textbf{92.254\scriptsize{$\pm$0.564}} & \textbf{91.483\scriptsize{$\pm$0.270}} & \textbf{4.399\scriptsize{$\pm$0.314}}\\
    \midrule
    % Gdumb~\cite{prabhu2020gdumb} & \multirow{8}{*}{400} & - & - & - & - & 45.755\scriptsize{$\pm$0.436} & - & -\\
    GEM~\cite{lopez2017gradient} & \multirow{7}{*}{400} & 93.557\scriptsize{$\pm$0.225} & 92.635\scriptsize{$\pm$0.306} & 91.246\scriptsize{$\pm$0.492} & 89.565\scriptsize{$\pm$0.342} & 89.097\scriptsize{$\pm$0.149} & 6.975\scriptsize{$\pm$0.167} \\
    A-GEM~\cite{chaudhry2018efficient} & & 93.432\scriptsize{$\pm$0.333} & 92.064\scriptsize{$\pm$0.439} & 90.038\scriptsize{$\pm$0.726} & 87.988\scriptsize{$\pm$0.335} & 87.560\scriptsize{$\pm$0.087} & 8.577\scriptsize{$\pm$0.053}\\
    ER~\cite{riemer2018learning} & & 93.525\scriptsize{$\pm$1.101} & 91.649\scriptsize{$\pm$0.362} & 90.426\scriptsize{$\pm$0.456} & 88.728\scriptsize{$\pm$0.353} & 88.339\scriptsize{$\pm$0.044} & 7.180\scriptsize{$\pm$0.029} \\
    DER++~\cite{buzzega2020dark} & & 94.952\scriptsize{$\pm$0.403} & \textbf{95.089\scriptsize{$\pm$0.075}} & \textbf{94.458\scriptsize{$\pm$0.328}} & \textbf{93.257\scriptsize{$\pm$0.249}} & \textbf{92.950\scriptsize{$\pm$0.361}} & \underline{3.378\scriptsize{$\pm$0.245}} \\
    CLS-ER~\cite{arani2022learning} & & 94.262\scriptsize{$\pm$0.649} & 93.195\scriptsize{$\pm$0.148} & 92.623\scriptsize{$\pm$0.195} & 91.839\scriptsize{$\pm$0.187} & 91.598\scriptsize{$\pm$0.117} & 3.795\scriptsize{$\pm$0.144} \\
    ESM-ER~\cite{sarfraz2023error} & & \underline{95.413\scriptsize{$\pm$0.139}} & 94.654\scriptsize{$\pm$0.314} & 93.353\scriptsize{$\pm$0.588} & 91.022\scriptsize{$\pm$0.781} & 89.829\scriptsize{$\pm$0.698} & 6.888\scriptsize{$\pm$0.738}\\
    \ours~(Ours) & & \textbf{95.992\scriptsize{$\pm$0.349}} & \underline{95.026\scriptsize{$\pm$0.250}} & \underline{94.212\scriptsize{$\pm$0.280}} & \underline{93.094\scriptsize{$\pm$0.326}} & \underline{92.666\scriptsize{$\pm$0.108}} & \textbf{2.853\scriptsize{$\pm$0.107}}\\
    \midrule
    GEM~\cite{lopez2017gradient} & \multirow{7}{*}{800} & 93.717\scriptsize{$\pm$0.177} & 93.116\scriptsize{$\pm$0.206} & 92.166\scriptsize{$\pm$0.335} & 91.076\scriptsize{$\pm$0.342} & 90.609\scriptsize{$\pm$0.364} & 5.393\scriptsize{$\pm$0.417} \\
    A-GEM~\cite{chaudhry2018efficient} & & 93.612\scriptsize{$\pm$0.241} & 92.523\scriptsize{$\pm$0.375} & 90.718\scriptsize{$\pm$0.739} & 88.543\scriptsize{$\pm$0.391} & 88.020\scriptsize{$\pm$0.851} & 8.081\scriptsize{$\pm$0.867} \\
    ER~\cite{riemer2018learning} & & 93.827\scriptsize{$\pm$0.871} & 92.457\scriptsize{$\pm$0.217} & 91.688\scriptsize{$\pm$0.277} & 90.617\scriptsize{$\pm$0.289} & 90.252\scriptsize{$\pm$0.056} & 5.188\scriptsize{$\pm$0.045} \\
    DER++~\cite{buzzega2020dark} & & 95.295\scriptsize{$\pm$0.317} & \textbf{95.539\scriptsize{$\pm$0.041}} & \textbf{95.099\scriptsize{$\pm$0.187}} & \textbf{94.423\scriptsize{$\pm$0.151}} & \textbf{94.227\scriptsize{$\pm$0.261}} & \underline{2.106\scriptsize{$\pm$0.161}}\\
    CLS-ER~\cite{arani2022learning} & & 94.463\scriptsize{$\pm$0.537} & 93.567\scriptsize{$\pm$0.093} & 93.182\scriptsize{$\pm$0.137} & 92.744\scriptsize{$\pm$0.112} & 92.578\scriptsize{$\pm$0.152} & 2.803\scriptsize{$\pm$0.183} \\
    ESM-ER~\cite{sarfraz2023error} & & \underline{95.567\scriptsize{$\pm$0.150}} & 95.136\scriptsize{$\pm$0.202} & 94.301\scriptsize{$\pm$0.347} & 92.981\scriptsize{$\pm$0.397} & 92.408\scriptsize{$\pm$0.387} & 4.170\scriptsize{$\pm$0.357} \\
    \ours~(Ours) & & \textbf{96.082\scriptsize{$\pm$0.313}} & \underline{95.207\scriptsize{$\pm$0.196}} & \underline{94.642\scriptsize{$\pm$0.156}} & \underline{93.997\scriptsize{$\pm$0.194}} & \underline{93.724\scriptsize{$\pm$0.043}} & \textbf{1.633\scriptsize{$\pm$0.035}} \\
    \midrule
    Joint (Oracle) & $\infty$ & - & - & - & - & 96.368\scriptsize{$\pm$0.042} & -\\
	\bottomrule
	\end{tabular}
	}
\end{center}
\label{tab:pmnist-complete}
\end{table*}  

\begin{table*}[t]
\caption{\textbf{{Performances (\%) evaluated on \emph{R-MNIST}.}} Average Accuracy (Avg. Acc.) and Forgetting are reported to measure the methods' performance. ``$\uparrow$'' and ``$\downarrow$'' mean higher and lower numbers are better, respectively. We use \textbf{boldface} and \underline{underlining} to denote the best and the second-best performance, respectively. We use ``-'' to denote ``not appliable''. 
}
\begin{center}
\resizebox{1\linewidth}{!}{%
\begin{tabular}{l @{\hskip +0.1cm}c cccc cc}
	\toprule 
	\multirow{2}{*}{\textbf{Method}} & \multirow{2}{*}{\textbf{Buffer}} & \multicolumn{1}{c}{$\gD_{1:5}$} & \multicolumn{1}{c}{$\gD_{6:10}$} & \multicolumn{1}{c}{$\gD_{11:15}$} & \multicolumn{1}{c}{$\gD_{16:20}$} & \multicolumn{2}{c}{\textbf{Overall}}\\
    \cline{3-6}
	%  &
	& &  \multicolumn{4}{c}{Avg. Acc \scriptsize{($\uparrow$)}} & Avg. Acc \scriptsize{($\uparrow$)} & Forgetting \scriptsize{($\downarrow$)}  \\
    \midrule
    \midrule
    % Fine-tune & -& - & - & - & - & 52.319\scriptsize{$\pm$0.030} & - & -\\
    Fine-tune & - & 92.961\scriptsize{$\pm$2.683} & 76.617\scriptsize{$\pm$8.011} & 60.212\scriptsize{$\pm$3.688} & 49.793\scriptsize{$\pm$1.552} & 47.803\scriptsize{$\pm$1.703} & 52.281\scriptsize{$\pm$1.797} \\
    \midrule
    oEWC~\cite{schwarz2018progress} & - & 91.765\scriptsize{$\pm$2.286} & 76.226\scriptsize{$\pm$7.622} & 60.320\scriptsize{$\pm$3.892} & 50.505\scriptsize{$\pm$1.772} & 48.203\scriptsize{$\pm$0.827} & 51.181\scriptsize{$\pm$0.867} \\
    SI~\cite{zenke2017continual} & -& 91.867\scriptsize{$\pm$2.272} & 76.801\scriptsize{$\pm$7.391} & 60.956\scriptsize{$\pm$3.504} & 50.301\scriptsize{$\pm$1.538} & 48.251\scriptsize{$\pm$1.381} & 51.053\scriptsize{$\pm$1.507} \\
    LwF~\cite{li2017learning} & - & 95.174\scriptsize{$\pm$1.154} & 83.044\scriptsize{$\pm$5.935} & 65.899\scriptsize{$\pm$4.061} & 55.980\scriptsize{$\pm$1.296} & 54.709\scriptsize{$\pm$0.515} & 45.473\scriptsize{$\pm$0.565} \\
	\midrule
    GEM~\cite{lopez2017gradient} & \multirow{7}{*}{200} & 93.441\scriptsize{$\pm$0.610} & 88.620\scriptsize{$\pm$2.381} & 81.034\scriptsize{$\pm$2.704} & 73.112\scriptsize{$\pm$1.922} & 70.545\scriptsize{$\pm$0.623} & 27.684\scriptsize{$\pm$0.645} \\
    A-GEM~\cite{chaudhry2018efficient} & & 92.667\scriptsize{$\pm$1.352} & 82.772\scriptsize{$\pm$5.503} & 70.579\scriptsize{$\pm$4.028} & 60.462\scriptsize{$\pm$2.001} & 57.958\scriptsize{$\pm$0.579} & 40.969\scriptsize{$\pm$0.580} \\
    ER~\cite{riemer2018learning} & & 94.705\scriptsize{$\pm$0.790} & 89.171\scriptsize{$\pm$2.883} & 79.962\scriptsize{$\pm$3.365} & 71.787\scriptsize{$\pm$1.608} & 69.627\scriptsize{$\pm$0.911} & 28.749\scriptsize{$\pm$0.993} \\
    DER++~\cite{buzzega2020dark} & & 94.904\scriptsize{$\pm$0.414} & \underline{91.637\scriptsize{$\pm$1.871}} & \underline{84.915\scriptsize{$\pm$2.315}} & \underline{78.373\scriptsize{$\pm$1.244}} & \underline{76.671\scriptsize{$\pm$0.391}} & \underline{21.743\scriptsize{$\pm$0.409}} \\
    CLS-ER~\cite{arani2022learning} & & \underline{95.131\scriptsize{$\pm$0.523}} & 91.421\scriptsize{$\pm$1.732} & 84.773\scriptsize{$\pm$2.665} & 77.733\scriptsize{$\pm$1.480} & 75.609\scriptsize{$\pm$0.418} & 22.483\scriptsize{$\pm$0.456} \\
    ESM-ER~\cite{sarfraz2023error} & & \textbf{95.378\scriptsize{$\pm$0.531}} & 90.800\scriptsize{$\pm$2.528} & 83.438\scriptsize{$\pm$2.581} & 76.987\scriptsize{$\pm$1.219} & 75.203\scriptsize{$\pm$0.143} & 23.564\scriptsize{$\pm$0.157} \\
    \ours~(Ours) & & 95.097\scriptsize{$\pm$0.447} & \textbf{93.101\scriptsize{$\pm$1.305}} & \textbf{89.194\scriptsize{$\pm$1.472}} & \textbf{84.704\scriptsize{$\pm$1.722}} & \textbf{82.796\scriptsize{$\pm$1.882}} & \textbf{12.971\scriptsize{$\pm$2.389}} \\
    \midrule
    % Gdumb~\cite{prabhu2020gdumb} & \multirow{8}{*}{400} & - & - & - & - & 62.629\scriptsize{$\pm$1.355} & - & -\\
    GEM~\cite{lopez2017gradient} & \multirow{7}{*}{400} & 93.842\scriptsize{$\pm$0.313} & 90.663\scriptsize{$\pm$1.856} & 85.392\scriptsize{$\pm$1.856} & 79.061\scriptsize{$\pm$1.578} & 76.619\scriptsize{$\pm$0.581} & 21.289\scriptsize{$\pm$0.579} \\
    A-GEM~\cite{chaudhry2018efficient} & & 92.820\scriptsize{$\pm$1.274} & 83.564\scriptsize{$\pm$5.024} & 72.616\scriptsize{$\pm$3.865} & 62.223\scriptsize{$\pm$2.081} & 59.654\scriptsize{$\pm$0.122} & 39.196\scriptsize{$\pm$0.171} \\
    ER~\cite{riemer2018learning} & & 94.916\scriptsize{$\pm$0.457} & 91.491\scriptsize{$\pm$1.878} & 86.029\scriptsize{$\pm$2.176} & 78.688\scriptsize{$\pm$1.323} & 76.794\scriptsize{$\pm$0.696} & 20.696\scriptsize{$\pm$0.744} \\
    DER++~\cite{buzzega2020dark} & & 95.246\scriptsize{$\pm$0.228} & \underline{93.627\scriptsize{$\pm$1.147}} & \underline{90.011\scriptsize{$\pm$1.289}} & \underline{85.601\scriptsize{$\pm$0.982}} & \underline{84.258\scriptsize{$\pm$0.544}} & \underline{13.692\scriptsize{$\pm$0.560}} \\
    CLS-ER~\cite{arani2022learning} & & 95.233\scriptsize{$\pm$0.271} & 92.740\scriptsize{$\pm$1.268} & 89.111\scriptsize{$\pm$1.305} & 83.678\scriptsize{$\pm$1.388} & 81.771\scriptsize{$\pm$0.354} & 15.455\scriptsize{$\pm$0.356} \\
    ESM-ER~\cite{sarfraz2023error} & & \textbf{95.825\scriptsize{$\pm$0.303}} & 93.378\scriptsize{$\pm$1.480} & 89.290\scriptsize{$\pm$1.604} & 83.868\scriptsize{$\pm$1.163} & 82.192\scriptsize{$\pm$0.164} & 16.195\scriptsize{$\pm$0.150} \\
    \ours~(Ours) & & \underline{95.274\scriptsize{$\pm$0.469}} & \textbf{94.043\scriptsize{$\pm$0.759}} & \textbf{91.511\scriptsize{$\pm$0.990}} & \textbf{87.809\scriptsize{$\pm$0.849}} & \textbf{86.635\scriptsize{$\pm$0.686}} & \textbf{8.506\scriptsize{$\pm$1.181}} \\
    \midrule
    GEM~\cite{lopez2017gradient} & \multirow{7}{*}{800} & 94.212\scriptsize{$\pm$0.322} & 92.482\scriptsize{$\pm$1.125} & 89.191\scriptsize{$\pm$1.346} & 84.866\scriptsize{$\pm$1.317} & 82.772\scriptsize{$\pm$1.079} & 14.781\scriptsize{$\pm$1.104} \\
    A-GEM~\cite{chaudhry2018efficient} & & 92.902\scriptsize{$\pm$1.194} & 84.611\scriptsize{$\pm$4.451} & 75.150\scriptsize{$\pm$3.421} & 64.510\scriptsize{$\pm$2.437} & 61.240\scriptsize{$\pm$1.026} & 37.528\scriptsize{$\pm$1.089} \\
    ER~\cite{riemer2018learning} & & 95.144\scriptsize{$\pm$0.281} & 92.997\scriptsize{$\pm$1.195} & 89.319\scriptsize{$\pm$1.365} & 84.352\scriptsize{$\pm$1.681} & 81.877\scriptsize{$\pm$1.157} & 15.285\scriptsize{$\pm$1.196} \\
    DER++~\cite{buzzega2020dark} & & \underline{95.496\scriptsize{$\pm$0.261}} & \textbf{94.960\scriptsize{$\pm$0.568}} & \textbf{93.013\scriptsize{$\pm$0.689}} & \textbf{90.820\scriptsize{$\pm$0.687}} & \textbf{89.746\scriptsize{$\pm$0.356}} & \underline{7.821\scriptsize{$\pm$0.371}} \\
    CLS-ER~\cite{arani2022learning} & & 95.462\scriptsize{$\pm$0.174} & 93.927\scriptsize{$\pm$0.881} & 91.275\scriptsize{$\pm$0.930} & 87.816\scriptsize{$\pm$0.988} & 86.418\scriptsize{$\pm$0.215} & 10.598\scriptsize{$\pm$0.228} \\
    ESM-ER~\cite{sarfraz2023error} & & \textbf{96.086\scriptsize{$\pm$0.361}} & \underline{94.746\scriptsize{$\pm$0.915}} & 92.393\scriptsize{$\pm$0.974} & 89.745\scriptsize{$\pm$0.712} & 88.662\scriptsize{$\pm$0.263} & 9.409\scriptsize{$\pm$0.255} \\
    \ours~(Ours) & & 95.354\scriptsize{$\pm$0.480} & 94.711\scriptsize{$\pm$0.563} & \underline{92.776\scriptsize{$\pm$0.695}} & \underline{90.399\scriptsize{$\pm$0.755}} & \underline{89.191\scriptsize{$\pm$0.685} }& \textbf{6.351\scriptsize{$\pm$1.304}} \\
    \midrule
    Joint (Oracle) & $\infty$ & - & - & - & - & 97.150\scriptsize{$\pm$0.036} & - \\
	\bottomrule
	\end{tabular}
	}
\end{center}
\label{tab:rmnist-complete}
\end{table*}  

\begin{table*}[t]
\caption{\textbf{{Performances (\%) evaluated on \emph{Seq-CORe50}.}} Avg. Acc. and Forgetting are reported to measure the methods' performance. ``$\uparrow$'' and ``$\downarrow$'' mean higher and lower numbers are better, respectively. We use \textbf{boldface} and \underline{underlining} to denote the best and the second-best performance, respectively. We use ``-'' to denote ``not appliable'' and ``$\star$'' to denote out-of-memory (\textit{OOM}) error when running the experiments. 
}
\begin{center}
\resizebox{1\linewidth}{!}{%
\begin{tabular}{l @{\hskip +0.1cm}c cccc cc}
	\toprule 
	\multirow{2}{*}{\textbf{Method}} & \multirow{2}{*}{\textbf{Buffer}} & \multicolumn{1}{c}{$\gD_{1:3}$} & \multicolumn{1}{c}{$\gD_{4:6}$} & \multicolumn{1}{c}{$\gD_{7:9}$} & \multicolumn{1}{c}{$\gD_{10:11}$} & \multicolumn{2}{c}{\textbf{Overall}}\\
    \cline{3-6}
	%  &
	& &  \multicolumn{4}{c}{Avg. Acc \scriptsize{($\uparrow$)}} & Avg. Acc \scriptsize{($\uparrow$)} & Forgetting \scriptsize{($\downarrow$)} \\
    \midrule
    \midrule
    % Fine-tune & -& - & - & - & - & 52.319\scriptsize{$\pm$0.030} & - & -\\
    Fine-tune & - & 73.707\scriptsize{$\pm$13.144} & 34.551\scriptsize{$\pm$1.254} & 29.406\scriptsize{$\pm$2.579} & 28.689\scriptsize{$\pm$3.144} & 31.832\scriptsize{$\pm$1.034} & 73.296\scriptsize{$\pm$1.399} \\
    \midrule
    oEWC~\cite{schwarz2018progress} & - & 74.567\scriptsize{$\pm$13.360} & 35.915\scriptsize{$\pm$0.260} & 30.174\scriptsize{$\pm$3.195} & 28.291\scriptsize{$\pm$2.522} & 30.813\scriptsize{$\pm$1.154} & 74.563\scriptsize{$\pm$0.937} \\
    SI~\cite{zenke2017continual} & - & 74.661\scriptsize{$\pm$14.162} & 34.345\scriptsize{$\pm$1.001} & 30.127\scriptsize{$\pm$2.971} & 28.839\scriptsize{$\pm$3.631} & 32.469\scriptsize{$\pm$1.315} & 73.144\scriptsize{$\pm$1.588} \\
    LwF~\cite{li2017learning} & - & 80.383\scriptsize{$\pm$10.190} & 28.357\scriptsize{$\pm$1.143} & 31.386\scriptsize{$\pm$0.787} & 28.711\scriptsize{$\pm$2.981} & 31.692\scriptsize{$\pm$0.768} & 72.990\scriptsize{$\pm$1.350} \\
	\midrule
    % Gdumb~\cite{prabhu2020gdumb} & \multirow{8}{*}{500} & - & - & - & - & 15.891\scriptsize{$\pm$0.952} & - & -\\
    GEM~\cite{lopez2017gradient} & \multirow{7}{*}{500} & 79.852\scriptsize{$\pm$6.864} & 38.961\scriptsize{$\pm$1.718} & 39.258\scriptsize{$\pm$2.614} & 36.859\scriptsize{$\pm$0.842} & 37.701\scriptsize{$\pm$0.273} & 22.724\scriptsize{$\pm$1.554} \\
    A-GEM~\cite{chaudhry2018efficient} & & 80.348\scriptsize{$\pm$9.394} & 41.472\scriptsize{$\pm$3.394} & 43.213\scriptsize{$\pm$1.542} & 39.181\scriptsize{$\pm$3.999} & 43.181\scriptsize{$\pm$2.025} & 33.775\scriptsize{$\pm$3.003} \\
    ER~\cite{riemer2018learning} & & 90.838\scriptsize{$\pm$2.177} & 79.343\scriptsize{$\pm$2.699} & 68.151\scriptsize{$\pm$0.226} & 65.034\scriptsize{$\pm$1.571} & 66.605\scriptsize{$\pm$0.214} & 32.750\scriptsize{$\pm$0.455} \\
    DER++~\cite{buzzega2020dark} & & \underline{92.444\scriptsize{$\pm$1.764}} & \underline{88.652\scriptsize{$\pm$1.854}} & \underline{80.391\scriptsize{$\pm$0.107}} & \underline{78.038\scriptsize{$\pm$0.591}} & \underline{78.629\scriptsize{$\pm$0.753}} & \underline{21.910\scriptsize{$\pm$1.094}} \\
    CLS-ER~\cite{arani2022learning} & & 89.834\scriptsize{$\pm$1.323} & 78.909\scriptsize{$\pm$1.724} & 70.591\scriptsize{$\pm$0.322}& $\star$ & $\star$ & $\star$ \\
    ESM-ER~\cite{sarfraz2023error} & & 84.905\scriptsize{$\pm$6.471} & 51.905\scriptsize{$\pm$3.257} & 53.815\scriptsize{$\pm$1.770} & 50.178\scriptsize{$\pm$2.574} & 52.751\scriptsize{$\pm$1.296} & 25.444\scriptsize{$\pm$0.580} \\
    \ours~(Ours) & & \textbf{98.152\scriptsize{$\pm$1.665}} & \textbf{89.814\scriptsize{$\pm$2.302}} & \textbf{83.052\scriptsize{$\pm$0.151}} & \textbf{81.547\scriptsize{$\pm$0.269}} & \textbf{82.103\scriptsize{$\pm$0.279}} & \textbf{19.589\scriptsize{$\pm$0.303}} \\
    \midrule
    GEM~\cite{lopez2017gradient} & \multirow{7}{*}{1000} & 78.717\scriptsize{$\pm$4.831} & 43.269\scriptsize{$\pm$3.419} & 40.908\scriptsize{$\pm$2.200} & 40.408\scriptsize{$\pm$1.168} & 41.576\scriptsize{$\pm$1.599} & 18.537\scriptsize{$\pm$1.237}\\
    A-GEM~\cite{chaudhry2018efficient} & & 78.917\scriptsize{$\pm$8.984} & 41.172\scriptsize{$\pm$4.293} & 44.576\scriptsize{$\pm$1.701} & 38.960\scriptsize{$\pm$3.867} & 42.827\scriptsize{$\pm$1.659} & 33.800\scriptsize{$\pm$1.847} \\
    ER~\cite{riemer2018learning} & & \underline{90.048\scriptsize{$\pm$2.699}} & 84.668\scriptsize{$\pm$1.988} & 77.561\scriptsize{$\pm$1.281} & 72.268\scriptsize{$\pm$0.720} & 72.988\scriptsize{$\pm$0.566} & 25.997\scriptsize{$\pm$0.694} \\
    DER++~\cite{buzzega2020dark} & & 89.510\scriptsize{$\pm$5.726} & \underline{92.492\scriptsize{$\pm$0.902}} & \underline{88.883\scriptsize{$\pm$0.794}} & \underline{86.108\scriptsize{$\pm$0.284}} & \underline{86.392\scriptsize{$\pm$0.714}} & \underline{13.128\scriptsize{$\pm$0.474}} \\
    CLS-ER~\cite{arani2022learning} & & 92.004\scriptsize{$\pm$0.894} & 85.044\scriptsize{$\pm$1.276} & $\star$ & $\star$ & $\star$ & $\star$\\
    ESM-ER~\cite{sarfraz2023error} & & 85.120\scriptsize{$\pm$4.339} & 54.852\scriptsize{$\pm$5.511} & 61.714\scriptsize{$\pm$1.840} & 55.098\scriptsize{$\pm$3.834} & 58.932\scriptsize{$\pm$0.959} & 20.134\scriptsize{$\pm$0.643} \\
    \ours~(Ours) & & \textbf{98.648\scriptsize{$\pm$1.174}} & \textbf{93.447\scriptsize{$\pm$1.111}} & \textbf{90.545\scriptsize{$\pm$0.705}} & \textbf{87.923\scriptsize{$\pm$0.232}} & \textbf{88.155\scriptsize{$\pm$0.445}} & \textbf{12.882\scriptsize{$\pm$0.460}} \\
    \midrule
    Joint (Oracle) & $\infty$ & - & - & - & - & 99.137\scriptsize{$\pm$0.049} & - \\
	\bottomrule
	\end{tabular}
	}
\end{center}
\label{tab:core50-complete}
\end{table*}

\end{document}